\DeclareFontFamily{OT1}{pzc}{}
\DeclareFontShape{OT1}{pzc}{m}{it}%
              {<-> s * [0.900] pzcmi7t}{}
\DeclareMathAlphabet{\mathpzc}{OT1}{pzc}%
                                 {m}{it}
\newcommand{\argmin}{\operatornamewithlimits{arg\,\,min}}
\newcommand{\etal}{{et al.}}
\newcommand{\ie}{{i.e., }}          
\newcommand{\eg}{{e.g., }}
\def\Transpose{^\mathsf{T}}
\newcommand{\VChamferE}{E^{\text{d}}(\bfu)}
\newcommand{\ApproxJ}{\mathbf{\widetilde{J}}}
\newcommand{\supp}{\text{supp}}
\newcommand{\tvec}{\text{vec}}
\newcommand{\BallSet}{\mathbb{P}}
\newcommand{\ProximityFunction}{N_{\delta}}
\newcommand{\ShapeS}{\mathpzc{S}}
\newcommand{\ShapeD}{\mathpzc{D}}
\newcommand{\SimilarityMeasure}{{F}}
\newcommand{\RegionS}{{R_{S}}}
\newcommand{\ImageDomain}{\Omega}
\newcommand{\DistanceTransform}{\Pi}
\newcommand{\BasisVector}{{\boldsymbol{\phi}}}
\newcommand{\LocalDeform}{\mathbf{u}}
\newcommand{\PosVector}{\mathbf{x}}
\newcommand{\Ec}{E^c}
\newcommand{\Ecpq}{\Ec_{p,q}}
\newcommand{\bfp}{\mathbf{p}}
\newcommand{\bfq}{\mathbf{q}}
\newcommand{\bfd}{\mathbf{d}}
\newcommand{\bfD}{\mathbf{D}}
\newcommand{\wqp}{w_q(\|\mathbf{p-q}\|)}
\newcommand{\bfx}{\mathbf{x}}
\newcommand{\bft}{\mathbf{x}}
\newcommand{\bfu}{\mathbf{u}}
\newcommand{\morder}{m} %maximum order of the monomials
\newtheorem{theorem}{Theorem}[section]
\newtheorem{lemma}[theorem]{Lemma}
\newenvironment{proof}[1][Proof]{\begin{trivlist}
\item[\hskip \labelsep {\bfseries #1}]}{\end{trivlist}}
\journal{Computer Vision and Image Understanding}
\begin{document}

\include{notationsymbols}

\begin{frontmatter}

%% Title, authors and addresses

%% use the tnoteref command within \title for footnotes;
%% use the tnotetext command for theassociated footnote;re
%% use the fnref command within \author or \address for footnotes;
%% use the fntext command for theassociated footnote;
%% use the corref command within \author for corresponding author footnotes;
%% use the cortext command for theassociated footnote;
%% use the ead command for the email address,
%% and the form \ead[url] for the home page:
%% \title{Title\tnoteref{label1}}
%% \tnotetext[label1]{}
%% \author{Name\corref{cor1}\fnref{label2}}
%% \ead{email address}
%% \ead[url]{home page}
%% \fntext[label2]{}
%% \cortext[cor1]{}
%% \address{Address\fnref{label3}}
%% \fntext[label3]{}

\title{A Meshless Method for Variational Nonrigid 2-D Shape Registration}

% {Paper Number 513}
\author{Wei Liu and Eraldo Ribeiro}
\address{Computer Vision and Bio-Inspired Computing Laboratory\\
	Department of Computer Sciences \\
	Florida Institute of Technology \\
	Melbourne, FL 32901, U.S.A.\\
	{\tt eribeiro@cs.fit.edu}
}

\begin{abstract}
We present a method for nonrigid registration of 2-D geometric shapes. Our contribution is twofold. First, we extend the classic chamfer-matching energy to a variational functional. Secondly, we introduce a meshless deformation model that can handle significant high-curvature deformations. We represent 2-D shapes implicitly using distance transforms, and registration error is defined based on the shape contours' mutual distances. In addition, we model global shape deformation as an approximation blended from local deformation fields using partition-of-unity. The global deformation field is regularized by penalizing inconsistencies between local fields. The representation can be made adaptive to shape's contour, leading to registration that is both flexible and efficient. Finally, registration is achieved by minimizing a variational chamfer-energy functional combined with the consistency regularizer. We demonstrate the effectiveness of our method on a number of experiments.
\end{abstract}

\begin{keyword}
%% keywords here, in the form: keyword \sep keyword
meshless models \sep shape registration \sep shape correspondence \sep variational chamfer matching \sep distance transform

%% PACS codes here, in the form: \PACS code \sep code

%% MSC codes here, in the form: \MSC code \sep code
%% or \MSC[2008] code \sep code (2000 is the default)

\end{keyword}

\end{frontmatter}

%% \linenumbers

%% main text
%------------------------------------------------------------------------- 
\section{Introduction}

Registering 2-D shapes that have been deformed by nonlinear mappings is a challenging problem that has applications in many areas including medical imaging~\cite{chen_hui_2006} and shape recognition~\cite{borgefors1988hierarchical,thayananthan2003shape}. Existing shape registration methods differ in three main components~\cite{paragios2003non,huang2006shape}: shape representation, deformation model, and registration criterion. A number of recent works represent shape contours implicitly as zero level-sets of distance transforms~\cite{paragios2003non,huang2006shape,rousson2008prior,el2007shape}. This shape representation has two important advantages. First, distance transform is a generic representation that can handle arbitrary shapes in arbitrary dimensions~\cite{huang2006shape}. Secondly, distance transform provides a 2-D embedding of 1-D shape contours, so contours can be registered by aligning their 2-D distance transforms as done in image registration~\cite{paragios2003non,huang2006shape,rousson2008prior}. Here, the shape-registration criterion can be simply the sum-of-squared-differences (SSD) of the shapes' distance transforms~\cite{paragios2003non} or mutual information (MI)~\cite{huang2006shape}. Furthermore, nonlinear shape deformations can be represented using parametric models such as free-form deformation (FFD)~\cite{huang2006shape} and radial basis functions (RBF)~\cite{chen_hui_2006}.

Despite these advances, using distance transforms for shape representation in registration methods present some common drawbacks. One issue is that distance transforms are redundant representations of shape contours which brings extra computational cost when performing registration. There is also the issue of stability of the representation when shapes undergo high-curvature deformation. Existing methods~\cite{paragios2003non,huang2006shape,rousson2008prior} use a narrow-band function to confine computation to be around the shape contours, but it only partially addresses the problem and complicates the registration framework. An additional issue is the use of regular mesh of control points to represent deformations. FFD-based deformation models~\cite{huang2006shape,rousson2008prior} rely on a mesh of regularly distributed control points that makes it difficult to adapt to the shape contours, and the registration accuracy may suffer from folding effect of the control-point mesh under large deformation. The limitation of control-point meshes is shared by many areas including computer graphics~\cite{ohtake2003multi}, engineering~\cite{Melenk1996289,liu_gr_meshfree2009}, and image registration~\cite{makram510non,rohde2003adaptive}. The lack of flexibility of regular mesh models can be addressed by replacing the mesh-based deformation models with so-called meshless models that do not rely on explicit connections between control points. Examples of meshless models include thin-plate splines using radial basis functions (RBFs) that has long been used for image and shape registration~\cite{rohde2003adaptive,chen_hui_2006,belongie_pami_2002,Chui2003114}. However, RBFs are less accurate than mesh models~\cite{liu_gr_meshfree2009}, and can be computationally expensive~\cite{rohde2003adaptive}. A number of recent works resorted to more accurate meshless models such as partition-of-unity (PU) method~\cite{liu_gr_meshfree2009,makram510non,Melenk1996289, ohtake2003multi}. The PU-meshless model represents a continuous function by blending together local polynomial models, and the local models can be arbitrarily distributed and overlaid without relying on connected control points~\cite{Melenk1996289}. However, existing PU methods rely on complicated functionals~\cite{makram510non,Nagai_sgp2009} for regularizing shape deformation to avoid degenerating solutions and to produce smooth results.

In this paper, we follow the work in~\cite{paragios2003non,huang2006shape,rousson2008prior}, and represent shape contours using distance transforms. In Section~\ref{sec:distance_functions}, we briefly introduce two representative registration methods~\cite{huang2006shape,paragios2003non} using distance-transform representations, and elaborate on their limitations.
Then, we address these limitations, and improve the state-of-the-art in two aspects. First, we improve the registration criterion, by proposing a variational extension to the classic chamfer-matching method~\cite{borgefors1988hierarchical} (Section~\ref{sec:vchamfer}). The proposed functional does not rely on a narrow-band function, and can better handle high-curvature shape deformations, leading to improvements in the registration accuracy. Chamfer-matching has been commonly used for detecting objects under affine transformations~\cite{borgefors1988hierarchical,thayananthan2003shape,liu2010fast}. Here, we adopt it for variational nonrigid contour registration. In addition, we derive a simplified gradient function of the variational chamfer-matching functional to facilitate the registration process. 

Secondly, we adopt the PU-meshless model to represent nonlinear shape deformations (Section~\ref{sec:meshless}). The meshless model can better represent large shape deformations, and can be easily adapted along shape contours to reduce computational cost. Figure~\ref{fig:overall} shows the overall scheme of our method. To regularize the PU model, we propose a novel regularizer that penalizes inconsistencies between neighboring local models. The inconsistency regularizer is much simpler than existing methods~\cite{makram510non,Nagai_sgp2009}, and provides a theoretical upper-bound to some regularizers~\cite{makram510non}. It is worth pointing out that a similar consistency regularizer has been used for smoothing B-splines~\cite{eilers1996flexible} that penalizes differences of adjacent B-splines' coefficients. However, this regularizer cannot be directly applied to PU-meshless model as it ignores the differences of local models' coefficients caused by shifting coordinate systems. Our method compensates for this  shifting effect using a linear operator, so that neighboring polynomial models can be compared on a common ground. 

In Section~\ref{sec:gradient}, we introduce our registration algorithm that combines the variational chamfer-matching gradients, the PU-meshless deformation model, and the consistency regularizer into a functional-minimization framework. We verify our method by comparing it to two recent registration methods~\cite{huang2006shape,chen_hui_2006}. The first method by Huang~\etal~\cite{huang2006shape} uses distance-transform representation and FFD-based deformation model, while the second method by Chen~\etal~\cite{chen_hui_2006} represents shapes using shape-context~\cite{belongie_pami_2002} and RBF-based deformation model. Experiments show that our method outperforms both methods in terms of registration accuracy. Additionally, our method can handle shapes with large high-curvature deformations  (Section~\ref{sec:experiments}).

\begin{figure}[t]
	\begin{center}
		{\includegraphics[width=.99\linewidth]
		  		         {./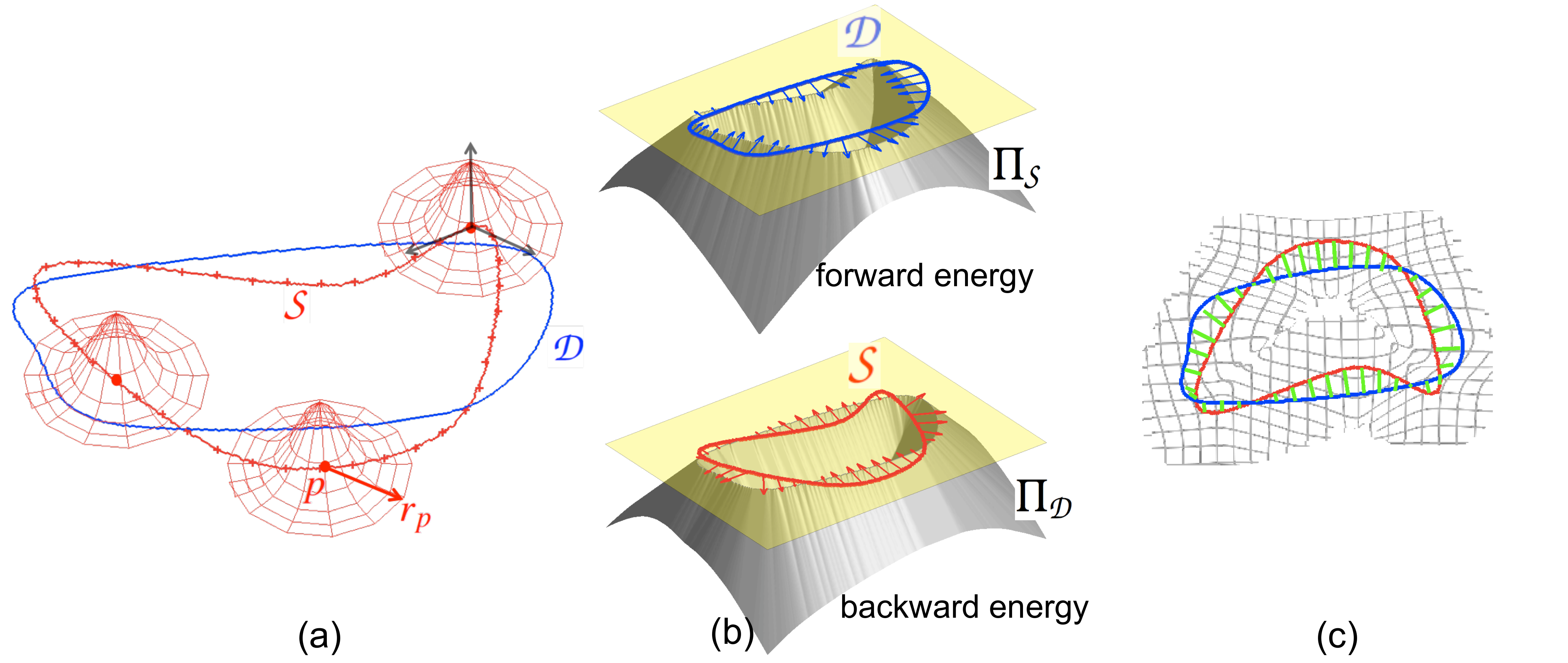}}					
	\label{fig:overall}
	\caption{Meshless shape registration. 
			 (a) Source (red curve) and target (blue curve) 
			 shapes. Local polynomial deformation models 
			 are defined on overlapping patches along 
			 the contour. Patches are weighted by locally 
			 supported radial functions. Three patches are 
			 shown with their corresponding weighting functions.  
			 (b) Forward and backward registration gradients.  
			 (c) Blended global deformation map and 
			 correspondence after registration.}
	\end{center}
\end{figure}

In summary, our main contribution is twofold. First, we improve the registration criterion used by previous shape-registration methods~\cite{paragios2003non,huang2006shape,rousson2008prior} that use  distance-transform representation, and formulate the registration problem as the one of minimizing a variational chamfer-matching functional. Secondly, we introduce a meshless model for representing nonlinear shape deformations, and design a simple consistency regularizer to both produce smooth shape deformation and avoid degenerated results. Finally, we would like to comment that this work extends our previous works in~\cite{WeiRibeiroShapeISVC2010,WeiRibeiroMeshlessISVC2010}.

\subsection{Our assumptions}
We make an important assumption that shape contours are already ``roughly aligned'' before applying our registration algorithm. This is assumption is common to many existing shape and image registration methods that employ a global-to-local scheme where the shapes are registered in two stages~\cite{chen_hui_2006,huang2006shape,paragios2003non}. In these methods, shapes are first roughly aligned using affine transformations to compensate for global shape deformations such as rotation, shifting, and scaling. Then, a second step seeks for local nonlinear deformations that align shapes as close as possible. Global shape alignment is not our main goal, since it can be achieved using off-the-shelf methods such as shape context~\cite{chen_hui_2006,belongie_pami_2002}, mutual information~\cite{huang2006shape}, and chamfer matching~\cite{borgefors1988hierarchical}. As a result, we assume that shapes are aligned beforehand using a rigid transformation, and focus ourselves on the nonrigid registration part.

\section{Related work}

%As previously summarized in~\cite{huang2006shape,paragios2003non}, shape registration methods differ mainly in three components: shape representation, deformation model, and registration criterion. Both~\cite{huang2006shape} and~\cite{paragios2003non} provide reviews  of different shape representation methods. Here, we focus ourselves on the distance-transform representation. In addition, we summarize the deformation models used for nonlinear registration, and recent development in the area of shape registration. 

\subsection{Shape matching and registration based on distance transform}
The distance transform (distance map) is an implicit shape representation whose value at a point in map indicates the minimum distance of that point to the shape. This implicit representation has a number of advantages. First, the calculation of distance transforms is highly efficient~\cite{felzenszwalb2004distance}. Secondly, the distance information helps efficient shape alignment by allowing the searching step to be adjusted according to the shapes' mutual distance~\cite{borgefors1988hierarchical}. Indeed, chamfer-matching are frequently used for shape contour matching and detection~\cite{gavrila_eccv2000,thayananthan2003shape,shotton2008,liu2010fast}. Earlier chamfer-matching methods~\cite{gavrila_eccv2000,thayananthan2003shape} focus on detecting shapes under affine transformations with little intra-class variations, while recent works include the use of orientation information~\cite{opelt2006boundary,liu2010fast}, hierarchical template matching~\cite{gavrila_eccv2000}, and statistical learning~\cite{zhang_2004,shotton2008,ferrari_2010} that significantly improve the handling of shape variations and occlusion. The majority of these works apply distance transforms to shape contours, with exception of a recent work by Bai~\etal~\cite{bai2009active} that uses distance transforms for matching articulated shape skeletons undergoing deformations. 

Distance transforms are also used for registering contours~\cite{put2009contour,paragios2003non,huang2006shape,rousson2008prior,huseiny_2010,tsechpenakis2008novel}, and point sets~\cite{parra2009rigid}. Similarity can be drawn between the problems of shape registration and shape detection, since both problems involve shape matching, but important differences exist. The goal of shape detection is to locate a given shape in an image, while registration assumes the shape's existence and aims at accurately recovering its deformation. Paragios~\etal~\cite{paragios2003non} introduced the idea of using distance transform as an implicit embedding of shape contours, and registered distance maps  essentially as normal 2-D images. Huang~\etal~\cite{huang2006shape} extended the method using FFD-based parametric deformation model. Munim~\etal~\cite{el2007shape} extended the representation in~\cite{huang2006shape} to a vectorized one for registering open shape contours. Rousson~\etal~\cite{rousson2008prior} and Taron~\etal~\cite{taron2009registration} investigated the integration of statistical priors into the registration framework. In this paper, we focus on improving the registration criterion and parametric deformation model based on the work in~\cite{paragios2003non,huang2006shape}.

Finally, distance transform is a level-set representation that can naturally represent shapes of arbitrary topology~\cite{huang2006shape}. One may find similarities between our work and level-set methods~\cite{chenyang_xu,Li_TIP08} in three aspects. First, both methods use level sets of certain potential functions to represent a deforming contour. However, the potential functions in level-set methods are not necessarily distance maps, unless a regularizer is applied to the potential function to encourage it to maintain the shape of a distance transform~\cite{Li_TIP08}. In addition, many level-set methods aim at image segmentation (\ie to locate a shape contour), and they deform a potential function according to an external force that favors image features (\eg edges). In contrast, the shape registration methods in~\cite{paragios2003non,huang2006shape,rousson2008prior} and our method assume that two shape contours (\ie the source and the target shapes) are given beforehand, and try to find \emph{how} the source contour can be deformed into the target contour.  Even though level-set methods are also used for object tracking~\cite{paragios2003level,paragios2000geodesic}, their goal is to locate a moving object and seldom provide a dense correspondence between shape contours. Indeed, level-set methods can provide input data to our method by extracting the contours.

A second similarity between ours and level-set methods~\cite{chenyang_xu,Li_TIP08,paragios2003level,paragios2000geodesic} is the use of penalty functions for regularization. In level-set methods~\cite{chenyang_xu,Li_TIP08}, the regularizers penalize the geometric \emph{properties} (\eg curvature) of shape contours or their potential functions, to produce smooth shape boundaries. In contrast, registration methods~\cite{paragios2003non,huang2006shape} penalize the \emph{behavior} of shape contours to produce smooth shape deformation, and usually do not care about the shapes' curvature. Finally, both level-set methods and our work minimize variational functionals by iteratively solving partial differential equations (PDEs). Please refer to~\cite{paragios2003non,huang2006shape} for more details of the relationship between level-set methods and shape registration based on distance transforms.

\subsection{Modeling and regularizing nonlinear deformations}

Distance transforms provide an implicit representation for shape contours themselves, but not of their deformations. It has been shown in~\cite{huang2006shape} that FFD models can significantly improve registration accuracy over the nonparametric deformation model~\cite{paragios2003non}. The FFD model belongs to a group of so-called mesh models~\cite{liu_gr_meshfree2009} that rely on a mesh of explicitly connected control points, that makes it difficult to adapt mesh models to different shapes~\cite{hansen2008adaptive}, especially for shapes undergoing topology changes, where remeshing is often required and can be prone to errors~\cite{ohtake_2003}. Another drawback of mesh models is that control points may collapse together (\ie the folding effect) and change spatial relationship under large deformation, leading to numerical inaccuracies~\cite{makram510non,liu_gr_meshfree2009}. For mesh models, the folding effect can be addressed by using extra regularizers to produce diffeomorphic deformations~\cite{rueckert_diffeomorphic_2006}, but these regularizers increase the complexity of the registration framework. 

To address these two issues, researchers in different areas seek for parametric models, called meshless models~\cite{liu_gr_meshfree2009} that do not rely on explicit connections between control points. Meshless models can be easily adapted to shape contours and greatly alleviate the folding effect of large deformations. Please refer to~\cite{LiuRibeiroSurvey2011} for an extensive review of meshless models. Here, we focus on the works that are more relevant to nonrigid registration. An example of meshless deformation models are the {\it Thin-Plate Splines} (TPS). These models represent a deformation field by a linear combination of radial basis functions~\cite{rohde2003adaptive} centered at scattered points, without explicit connections between them. However, it has been shown that RBFs can be numerically less accurate than the partition-of-unity method~\cite{liu_gr_meshfree2009}, due to the fact that RBFs cannot exactly represent a polynomial function (lack of reproducibility). In addition, it has been acknowledged in image registration~\cite{rohde2003adaptive} and computer graphics~\cite{ohtake_2003}  RBFs have higher computational cost than the PU method.

In this paper, we adopt PU to represent nonrigid shape deformations. Additionally, we propose a simple regularizer to produce smooth deformation fields and to avoid degenerated results. Existing registration methods regularize the deformation model using a  conformity functional~\cite{makram510non} or by imposing mechanical energy constraints~\cite{liu2003meshfree}. There are other meshless methods proposed for image registration~\cite{chen2008fast,chen2010object,sykora2009rigid,wang2008meshless} and segmentation~\cite{ho2005point}. Many of these rely on predefined landmarks and use them as boundary conditions to solve mechanical PDEs. Finally, there are registration methods that represent shapes using PU models~\cite{lee20083d,walder2009markerless}. These should not be confused with our method that use PU models to represent the shapes' deformations.

\section{Distance functions and nonrigid registration}
\label{sec:distance_functions}

The goal of shape registration is to deform a source shape onto a target shape. This is achieved by searching for the best deformation field that minimizes a dissimilarity measure between the shapes. Formally, if $\ShapeS$ and $\ShapeD$ represent source and target shapes, respectively, and $\SimilarityMeasure$ is a dissimilarity measure between the two shapes, we seek for a warping field $\mathbf{u}(\bfx)$ that satisfies the following equation:
\begin{align}
	\text{arg} \min_{\bfx'} F(\ShapeD(\bfx),\ShapeS(\bfx'),\bfx'),\,\,\,\,\, \bfx'=\bfx+\mathbf{u}(\bfx),
	\label{eq:registration}
\end{align}
where $\bfx$ is a coordinate vector. The dissimilarity measure $F$ usually depends on the shape model.  In this paper, we implicitly represent a shape ${\ShapeS}$ as the zero level set of its distance transform $\Pi_{\ShapeS}$~\cite{paragios2003non,huang2006shape}, where  ${\ShapeS}$ defines a partition of the image domain $\ImageDomain$. The model is given by:
\begin{align}
	\DistanceTransform_{\ShapeS} =
		\begin{cases} 
		  \hfill                  0, 	&\bfx \in \ShapeS\\ 
		 +D_{\ShapeS}(\bfx)>0, 	&\bfx \in \RegionS\\
		 -D_{\ShapeS}(\bfx)<0, 	&\bfx \in [\Omega-\RegionS]
	\end{cases},
	\label{eq:signed_distance}
\end{align}
where $D_{\ShapeS}$ is the minimum Euclidean distance between location $\bf x$ and shape $\ShapeS$, and $\RegionS$ is the region inside $\ShapeS$. Distance transforms are essentially solutions of a Eikonal equation given by~\cite{felzenszwalb2004distance}:
\begin{align}
\|\nabla \DistanceTransform_{\ShapeS}(\bfx) \| =1, \hspace{.2in} \bfx \in \mathbb{R}^2, \hspace{.3in} \text{with} \hspace{.1in}
\DistanceTransform_{\ShapeS}(\bfx)=0, \hspace{.2in} \bfx \in \ShapeS,
\label{eq:distance_transform_constraint}
\end{align}
where shape contour $\ShapeS$ defines the boundary condition. Distance transforms are 2-D functions themselves, and rigorously $\DistanceTransform_{\ShapeS}(\bfx)$ should be written as $\Pi_{\ShapeS(\bft)}(\bfx)$, but to simplify the notation, we write as $\Pi_{\ShapeS}(\bfx)$, $\Pi_{\ShapeS(\bft)}$ and even $\Pi_{\ShapeS}$.  This simplification should not cause confusion given the context of its usage.
The Eikonal equation in (\ref{eq:distance_transform_constraint}) is well-posed, meaning its solutions (distance transforms) depend continuously on the initial conditions (shape contours). As a result, similarity of distance transforms indicates the similarity of shape contours, and one can solve the shape registration problem indirectly by aligning shapes' distance transformations. Formally, the registration problem defined in (\ref{eq:registration})  is converted to the following one given by~\cite{paragios2003non}:
\begin{align}
	\text{arg} \min_{\bfx'} F(\DistanceTransform_{\ShapeD}(\bfx),\DistanceTransform_{\ShapeS}(\bfx'),\bfx'),\,\,\,\,\, \bfx'=\bfx+\mathbf{u}(\bfx).
	\label{eq:registration_distance}
\end{align}
In this approach, registering two 1-D contours is converted to the problem of registering their 2-D embeddings. The later problem can be easily solved using established 2-D image registration methods. In the following text, we will call this approach the \emph{shape-embedding method}.
Indeed, exactly as registering 2-D images, $F$ can be simply defined as the squared-sum of differences, and registration is achieved by minimizing following functional~\cite{huang2006shape,paragios2003non}:
\begin{align}
	E(\bfu) = \underbrace{\int_{\Omega} 
	          \ProximityFunction(\DistanceTransform_{\ShapeD} - 
	          \DistanceTransform_{\ShapeS})^2 d\bfx}_{\text{data term}} + 
	          \alpha \underbrace{\int_{\Omega} \ProximityFunction\left( \|\nabla u_x\|^2 + 
	          \|\nabla u_y\|^2 \right) d\bfx}_{\text{smoothness regularizer}}.
	\label{eq:functional_paragios}
\end{align}
Here, $\DistanceTransform_{\ShapeS}$ and $\DistanceTransform_{\ShapeD}$ are distance transforms of the source and target shapes, respectively. $u_x$ and $u_y$ are the components of the deformation field, \ie $\bfu(\bfx)=\left(u_x(\bfx), u_y(\bfx)\right)$. First-order derivatives of $u_x$ and $u_y$ of large magnitude are penalized by a regularizer to produce a smooth deformation field and to avoid degenerated solutions. Finally, a proximity function $\ProximityFunction$ limits the data-term evaluation to be near the shape's boundary. The use of a proximity function is a limitation of the shape-embedding methods, and the reason will be clarified in Section~\ref{subsec:limitations}. 

Similarly to nonrigid 2-D image registration, Equation~\ref{eq:functional_paragios} has been previously extended in several ways. First, the image deformation field $\bfu$ can be modeled parametrically using B-splines~\cite{huang2006shape}, or thin-plate splines~\cite{chen_hui_2006}. Secondly, statistical priors can be leveraged to address uncertainties in the registration process~\cite{taron2009registration}. Despite these developments, shape-embedding methods are still limited in some aspects. In the following section, we will elaborate on these limitations that motivated us for this work.
%to introduce a novel variational chamfer-matching function and  meshless deformation based on .

\subsection{Limitations of the shape-embedding approach}
\label{subsec:limitations}
The shape-embedding approach in Equation~\ref{eq:functional_paragios} facilitates the use of existing nonrigid 2-D registration techniques to solve 1-D shape registration problem. However, several issues need to be considered. First, as pointed out by El Munim and Farag~\cite{el2007shape}, the distance map defined  in Equation~\ref{eq:signed_distance} cannot model open shape contours or shapes   that do not define closed image domains (\ie ``inside'' and ``outside'' parts of a shape). This problem was addressed in~\cite{el2007shape} using vector distance transform. However, for the purpose of contour registration, we may simply use an unsigned distance transform. In this paper, we assume the shapes to be 0-1 encoded contours, and represented using unsigned distance transform as: % \ie taking the absolute value of Equation~\ref{eq:functional_paragios} and define $\DistanceTransform_{\ShapeS}$:
\begin{align}
	\DistanceTransform_{\ShapeS} =
		\begin{cases} 
		  \hfill                  0, 	&\bfx \in \ShapeS\\ 
			D_{\ShapeS}(\bfx)>0, 	&\bfx \in \Omega -\ShapeS
	\end{cases},
	\label{eq:unsigned_distance}
\end{align}
where $D_{\ShapeS}(\bfx)$ is still the minimum Euclidean distance of position $\bfx$ to the shape contour $\ShapeS$. In the following text, we always assume $\Pi_\ShapeS \geq 0$. 

Secondly, minimizing Equation~\ref{eq:functional_paragios} essentially deforms the distance map $\DistanceTransform_\ShapeS(\bfx+\bfu)$ to align with $\DistanceTransform_\ShapeD(\bfx)$. However, deformation in the distance transform only approximates the shape deformation. Formally, $\DistanceTransform_\ShapeS(\bfx+\bfu) \neq \DistanceTransform_{\ShapeS(\bft+\bfu)}$, where $\DistanceTransform_\ShapeS(\bfx+\bfu)$ is a direct deformation of the original shape's distance map, while $\DistanceTransform_{\ShapeS(\bft+\bfu)}$ is the distance map calculated from the deformed shape. For example, it has been noticed by Paragios~\etal~\cite{paragios2003non} that scaling a shape is not equivalent to scaling its distance transform. 
Here, we would like to further point out that modest deformations in the shape contour (\ie $\ShapeS(\bft+\bfu)$) may sometimes cause significant deformations in its distance transform (\ie $\Pi_{\ShapeS(\bft+u)}$), since $\ShapeS(\bft+\bfu)$ is the boundary condition for the Eikonal PDE (Equation~\ref{eq:distance_transform_constraint}), its deformation will be propagated into the PDE's solution. As a result, registering  distance maps may be unnecessarily complicated. Figure~\ref{fig:bending_curve} shows the distance maps of a bending curve, \ie $\Pi_{\ShapeS(\bft)}$ and $\Pi_{\ShapeS(\bft+\bfu)}$, together with iso-curves indicating points of same distance values. The deformation of $\Pi_{\ShapeS(\bft+\bfu)}$ has much higher curvature in certain places than the deformation of $\ShapeS(\bft+\bfu)$ itself. As a result, when directly registering the distance maps, much higher bending energy is needed to cope with these high curvature regions, leading to registration errors or even degenerated results.
\begin{figure}[t]
\begin{center}
\subfigure[Source shape $\Pi_{\ShapeS(\bft)}$]
{
	\includegraphics[width=.42\linewidth]
	{./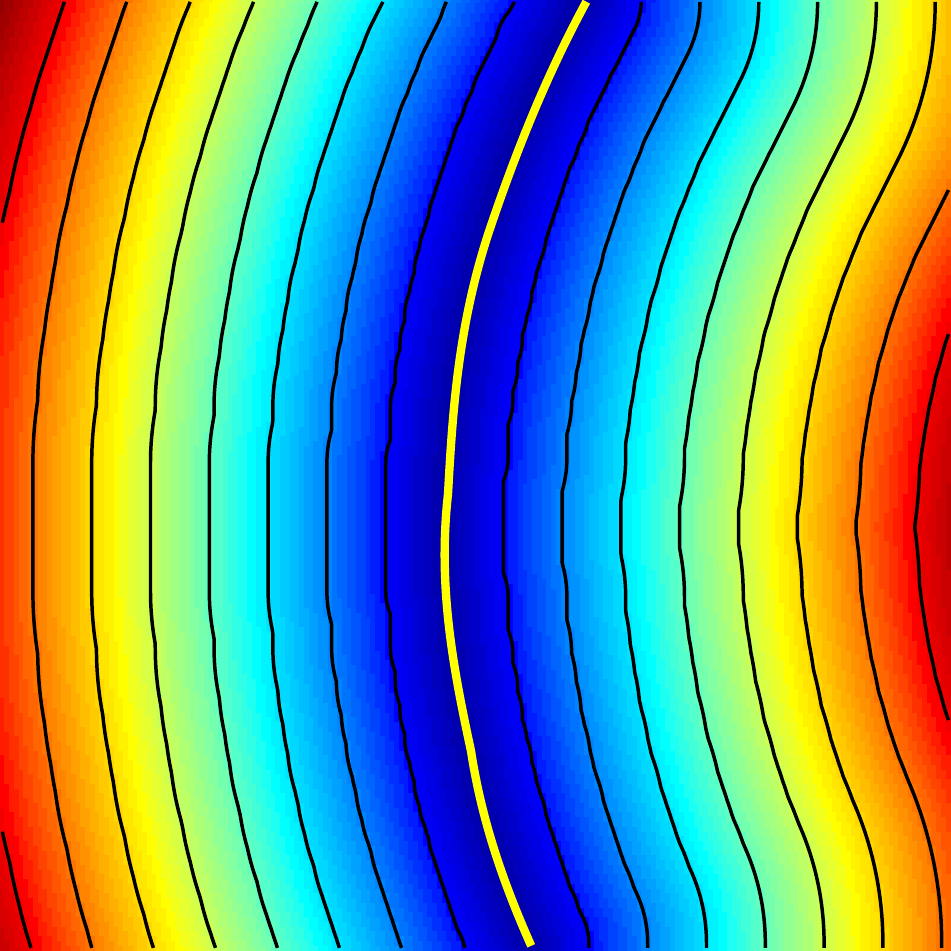}
}
\subfigure[Deformed shape $\Pi_{\ShapeS(\bft+\bfu)}$]
{
	\includegraphics[width=.42\linewidth]
	{./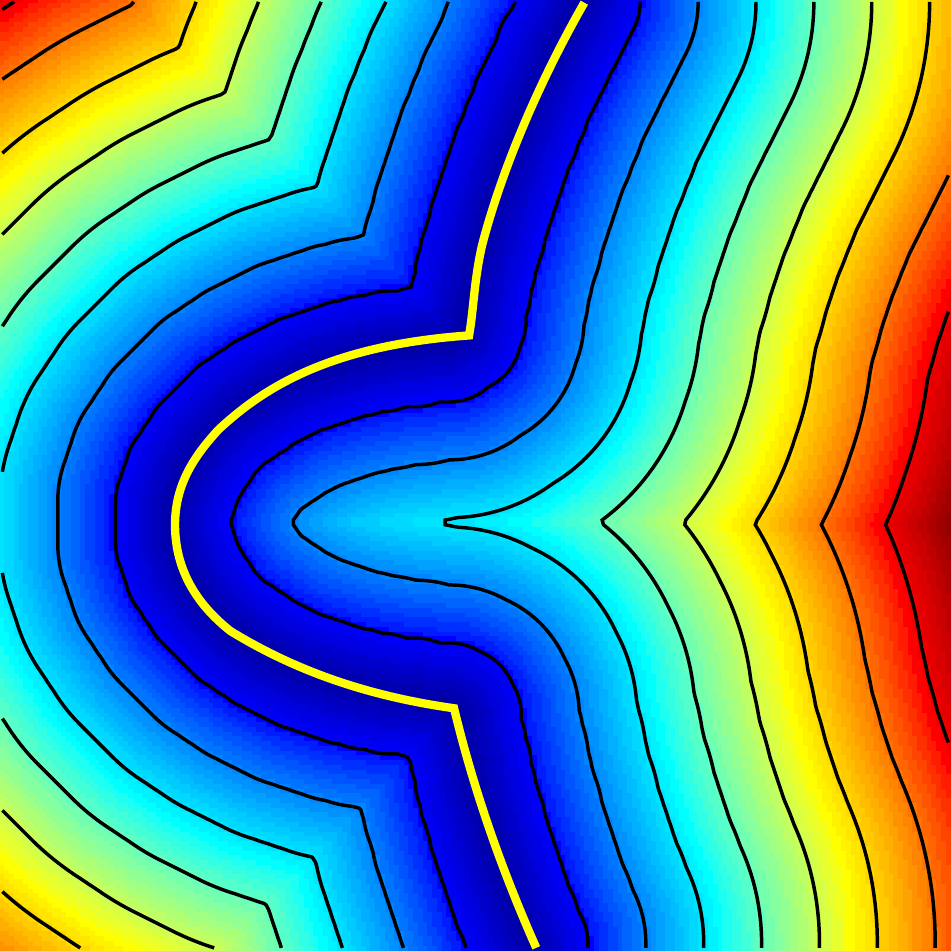}
	\label{subfig:narrow_band}
}

\end{center}
\caption{The (unsigned) distance maps of a bending curve (center in yellow). The distance maps are color-coded, with red and blue indicating maximum and minimum values, respectively. Black iso-curves are plotted to indicate points of the same distance values. (a) Distance map of the source shape. (b) A modest bending in the shape contour may cause high-curvature distortions in the distance map.}
\label{fig:bending_curve}
\end{figure}

This problem is only partially addressed in~\cite{paragios2003non,huang2006shape} by using a proximity function $\ProximityFunction$ (Equation~\ref{eq:functional_paragios}), that limits similarity measure in the proximity of shape contours with distance value less than $\delta$, where the propagated distortion is minimal. Intuitively, $\ProximityFunction$ defines a ``narrow band'' along the shape contour to exclude the high-curvature part. Nevertheless, care must be taken to choose the narrow-band width parameter $\delta$. On one hand, if $\delta$ is too small, there may be little overlapping areas between the two shapes, and the registration algorithm can easily get trapped in local minima. On the other hand, if $\delta$ is too large, the proximity function may fail to exclude distorted regions. Taron~\etal~\cite{taron2009registration} proposed to use an iteratively decreasing $\delta$, but this approach complicates the optimization algorithm and still relies on this extra parameter.

%This observation suggests we use $\DistanceTransform_{\ShapeS(\bfx+\bfu)}$ instead of $\DistanceTransform_{\ShapeS}(\bfx+\bfu)$, meaning that our method updates the distance transforms whenever the shape is deformed. The extra computational cost introduced by the updating step is fractional compared to other operations, since calculating distance transforms is highly efficient.

%Furthermore, although similar distance functions lead to similar shapes, similar shapes may not necessarily produce similar distance functions. For example, a spurious point located far from the shape can significantly offset the distance transform, leading to different 2-D representations. Figure~\ref{?} shows an example of this effect. Spurious points are discontinuous variations of shape contours, and causes even more significant variation in the PDE's solutions. In other words, the redundant implicit representation exaggerates spurious shape noise. 

Finally, registering 2-D distance functions results into extra computation as the method needs to register the whole image plane instead of 1-D shape contours. Parametric representations~\cite{huang2006shape} such as B-Splines can be used to improve  registration efficiency, but B-Spline model cannot be easily adapted to the shape contours as it relies on a regular grid (mesh) of control points. Consequently, a lot of computation is still wasted in modeling and calculating deformation fields at  irrelevant regions. The use of a proximity function~\cite{paragios2003non,huang2006shape,taron2009registration} further reduces the computational cost, but the formulation becomes more complicated. More importantly, mesh-based models may suffer from the folding effect when the deformation is large. Figure~\ref{fig:folding} shows a result obtained using the method in~\cite{huang2006shape}, with the deformation field represented using a control-point mesh of B-splines~\footnote{Obviously, the shape deformation here can be modeled as affine global deformation, but we treat it as large local deformation for the sake of demonstration.}. Large deformations in parts of the image cause  some control points to be ``squeezed'' together and even flip their positions (folding). As a result, the deformation field no longer equals to the interpolation of control-point displacements, leading to less accurate or even failed registration. 
\begin{figure}[t]
\begin{center}
\subfigure[Correspondence]
{
	\includegraphics[width=.47\linewidth]
	{./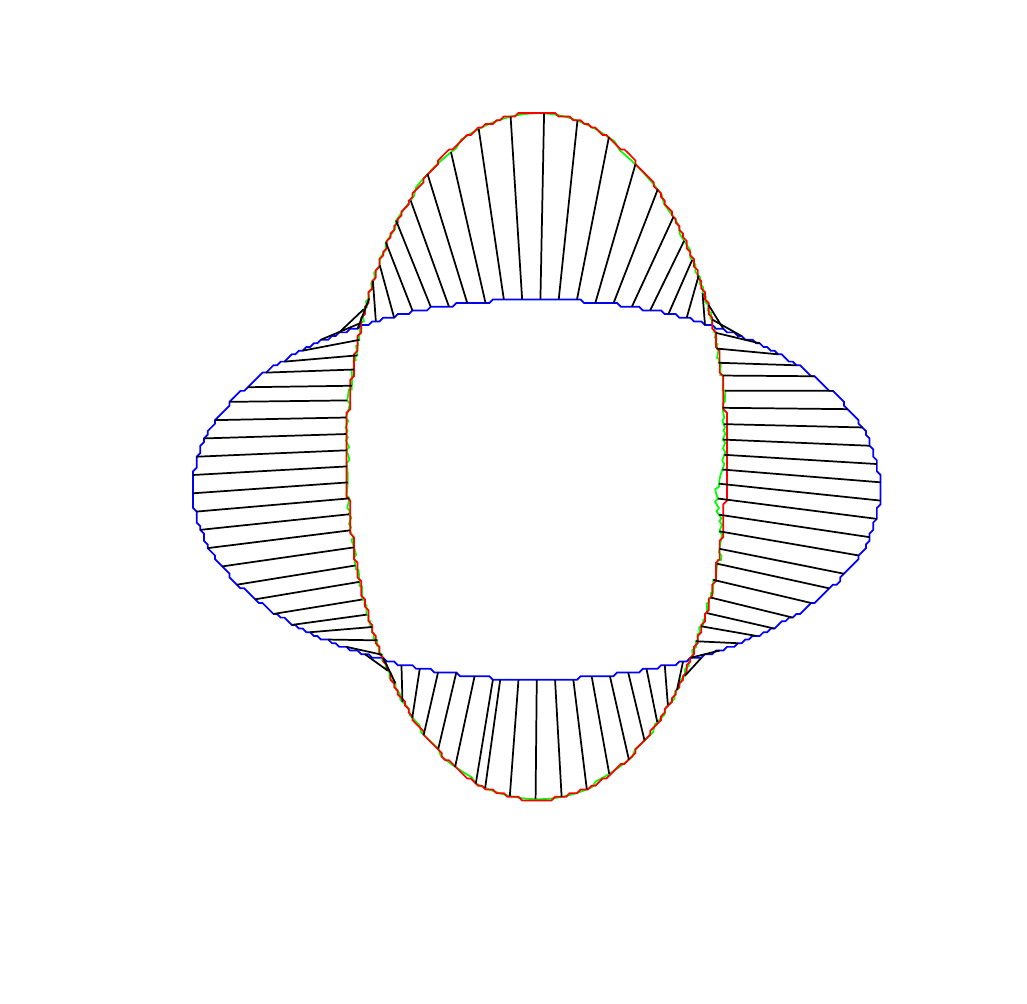}
}
\subfigure[Deformation field]
{
	\includegraphics[width=.47\linewidth]
	{./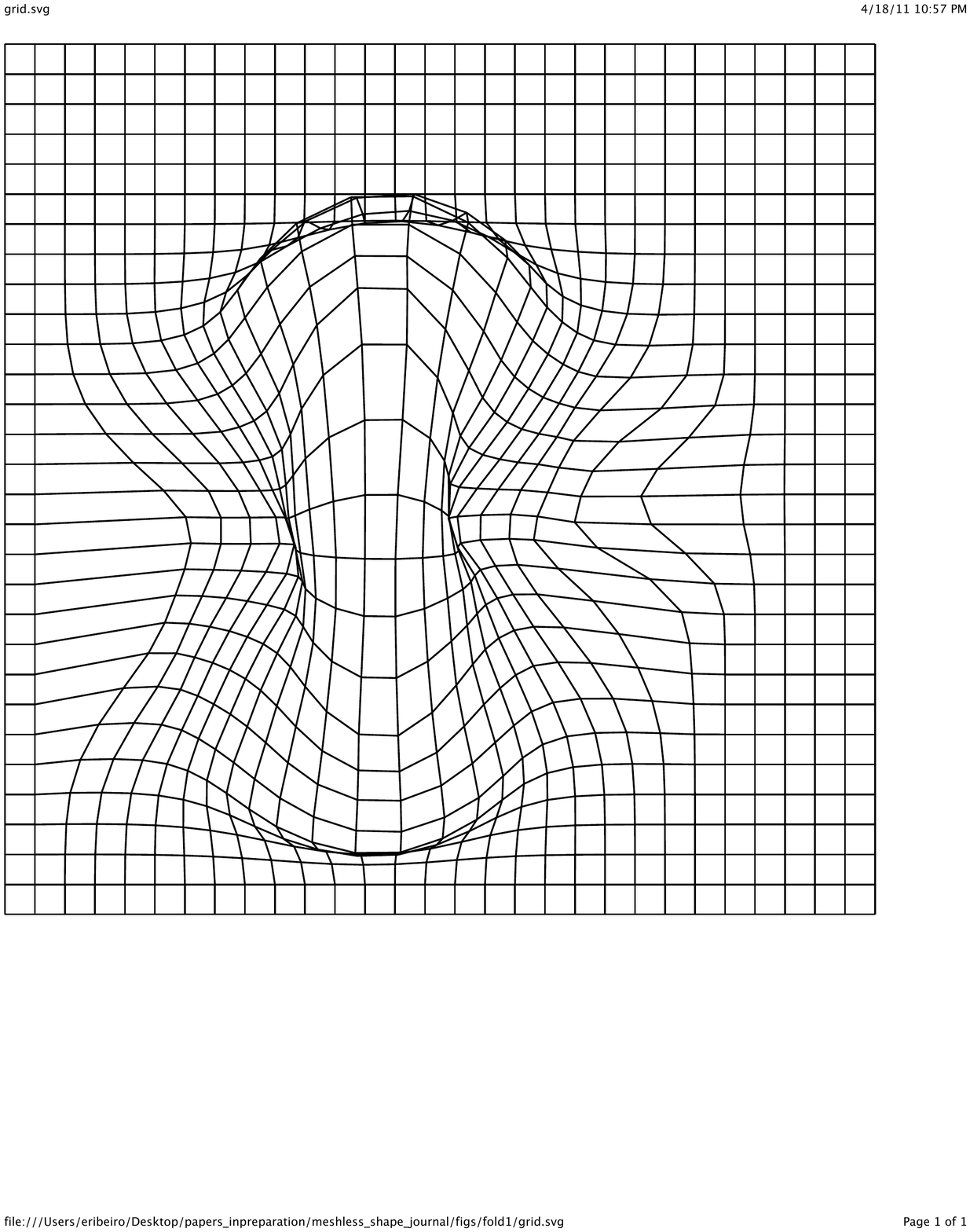}
}
\end{center}
\caption{Folding of mesh model under large deformation. The images are obtained using method in~\cite{huang2006shape}. a) Correspondence of the source shape (in blue) deformed to the target image (in red). The deformed source shape is in green. b) Under large deformation, the B-spline control points collapse together and may even flip their positions.}
\label{fig:folding}
\end{figure}

Existing registration works address the folding issue in two ways.  First, foldings in mesh models can be removed with a stronger regularizer to produce diffeomorphic deformation fields that are not only smooth but also invertible~\cite{rueckert_diffeomorphic_2006}. For example, the folding areas shown in Figure~\ref{subfig:narrow_band} violate one-to-one correspondence between the original and target images (thus not invertible), and will be penalized by a diffeomorphic regularizer. The main drawback of diffeomorphic registration methods is that they are relatively more difficult to optimize. 

Alternatively, the folding problem can also be alleviated by replacing the mesh models with so-called ``meshless models'' that do not rely on explicit connections between control points, and thus are less sensitive to the positioning of control points. Meshless models were previously used in mechanical engineering~\cite{Melenk1996289,liu_gr_meshfree2009} for solving partial differential equations (PDEs), and have been recently introduced in computer graphics~\cite{ohtake2003multi} and image registration~\cite{makram510non}.  It has been shown that meshless models can improve computational accuracy under large deformations~\cite{liu_gr_meshfree2009,makram510non}. In this paper, we extend a meshless model called partition-of-unity method~\cite{Melenk1996289, ohtake2003multi, makram510non} and adopt it for shape registration. 

In summary, the registration methods in~\cite{paragios2003non,huang2006shape,taron2009registration} share two common drawbacks: the reliance on a narrow-band function that complicates the registration framework, and the use of a mesh-based deformation model with limited adaptation to the shape contour, in addition to causing a folding effect under large deformations.
Next, we propose a novel dissimilarity measure based on variational formulation of the classic chamfer-matching energy that does not rely on a proximity function, and in Section~\ref{sec:meshless}, we integrate this energy term with a meshless parametric-deformation model that can be naturally adapted to the shape contours, and also reduces the folding effect. 

\section{Variational chamfer-matching energy}
\label{sec:vchamfer}
The limitations of shape-embedding methods are partially due to the fact that their distance transforms are registered as normal 2-D images. In this case, the dissimilarity measure takes into consideration the entire embedding 2-D space, leading to a redundancy that unnecessarily distorts the contour-matching quality. The narrow-band function only partially reduces this problem. To completely remove this redundancy, we need a dissimilarity function that directly measures the shape alignment ``along'' the source and target contours without taking into consideration areas around the shapes. Meanwhile, we still want to use distance transform to represent the shapes due to its many advantages.

We start by observing that, when the source shape $\ShapeS$ is aligned with the target $\ShapeD$, the deformed shape $\ShapeS(\bft+\bfu)$ will coincide with the zero level set of $\Pi_{\ShapeD}$, \ie $\ShapeS(\bft+\bfu)\,\, \Pi_{\ShapeD}(\bfx)=0$. Consequently, we may enforce alignment between shapes  by minimizing the squared sum $\int_{\Omega} \left|\ShapeS(\bfx+\bfu)\,\, \Pi_{\ShapeD}\right|^2 d\bfx$, which corresponds to the classic chamfer-matching energy function~\cite{borgefors1988hierarchical}, used for affine shape registration. However, this functional can be ill-posed. 
For example, the energy function will vanish for any deformation field that shrinks the source shape to a single point on shape contour $\ShapeD$.

We can address this problem by including a symmetric term that measures the distance-error between target and source shapes, similarly to the classic \emph{symmetric chamfer-matching energy}~\cite{thayananthan2003shape}. In addition, we compensate for  scaling by dividing the distance-error by the contours' lengths, and minimize the following functional given by:
\begin{align}
	\VChamferE &= \frac{1}{A_\ShapeS}\int_{\Omega} |\ShapeS(\bfx+\bfu)\,\, \Pi_{\ShapeD}|^2 d\bfx + \frac{1}{A_\ShapeD} \int_{\Omega}|\ShapeD(\bfx)\,\, \Pi_{\ShapeS(\bfx+\bfu)}|^2 d\bfx \notag\\
	&= \frac{1}{A_\ShapeS}\underbrace{\int_{\Omega} \ShapeS(\bfx+\bfu)\,\, \Pi_{\ShapeD}^2 d\bfx}_{\text{forward energy $E^{\text{f}}$}} +\frac{1}{A_\ShapeD} \underbrace{\int_{\Omega}\ShapeD(\bfx)\,\, \Pi_{\ShapeS(\bfx+\bfu)}^2 d\bfx}_{\text{backward energy $E^\text{b}$}},
	\label{eq:vchamfer_data_term}
\end{align}
where $A_\ShapeS=\int_{\Omega} \ShapeS(\bfx+\bfu) d\bfx $ and $A_\ShapeD= \int_{\Omega} \ShapeD(\bfx) d\bfx$ are contour lengths as normalizing factors, and the second equation in (\ref{eq:vchamfer_data_term}) holds because $\ShapeS$ is a binary map and $ \ShapeS(\bfx)^2=\ShapeS(\bfx)$ . Additionally, $\VChamferE$ is independent of the sign of $\DistanceTransform$, and we have assumed that $\DistanceTransform_{\ShapeS}\geq 0$ and $\DistanceTransform_{\ShapeD}\geq 0$ (unsigned distance transform). 
The registration error is now directly measured using the shape contours without resorting to a proximity function as in~\cite{paragios2003non,huang2006shape,taron2009registration}. As in the classic chamfer matching, the use of distance transform allows for efficient optimization~\cite{paragios2003non,huang2006shape} by providing a distance-adaptive gradient function. Figure~\ref{fig:overall}(b) shows the two energy terms in our shape-registration functional. We now examine the minimization of our variational chamfer-matching energy.

\subsection{Functional minimization using Euler-Lagrange equation}
A common approach to minimizing a variational functional is to calculate its Euler-Lagrange equation~\cite{sapiro2001geometric}. For example, if the functional is given by:
\begin{align}
	E(\bfu)=\int L \left(\bfx,\bfu(\bfx), \dot{\bfu}(\bfx) \right) d\bfx,
\end{align}
then its Euler-Lagrange equation is calculated as follows:
\begin{align}
\mathbf{J}=\frac{\partial L}{\partial \bfu}- \frac{d}{d\bfx} \left(\frac{\partial L}{\partial \dot{\bfu}(\bfx)}\right)=0.
\end{align}
The Euler-Lagrange equation can rarely be solved directly, and it is often solved iteratively by converting the original problem to the  equivalent PDE:
\begin{align}
	\frac{\partial \bfu}{\partial t}=\mathbf{J(\bfu)}.
	\label{eq:euler_lagrange_pde}
\end{align}
In solving Equation~\ref{eq:euler_lagrange_pde}, $\mathbf{J}$ is used as a gradient function to facilitate iterative gradient-descent methods. However, calculating the gradients $\mathbf{J}$ from the variational functional in Equation~\ref{eq:vchamfer_data_term} is not always straightforward. In these cases~\cite{chenyang_xu}, we need to look for equivalent or approximating constraints to its exact Euler-Lagrange equation. 
%The approximating idea is similar to Xu and Prince's work~\cite{chenyang_xu} on active contours based on gradient vector flow (GVF).
In next section, we examine the forward and backward energy terms separately, and derive a simple gradient function for minimizing Equation~\ref{eq:vchamfer_data_term}.  We begin with an intuitive and informal derivation, and then provide a more rigorous prove in Section~\ref{subsec:simplified_gradient_minimization} that the derived function can indeed minimize the variational chamfer-matching energy.

\subsection{Simplified chamfer-matching gradient}
Starting with our forward energy $E^f(\bfu)$, its Euler-Lagrange equation can be written as follows:
\begin{align}
	\frac{\partial L^\text{f}}{\partial \bfu}=\Pi_{\ShapeD(\bfx)} ^2
	\frac{\partial \ShapeS(\bfx+\bfu)}{\partial \bfx} =0
	\label{eq:forward_euler}
\end{align}
where $\ShapeS$ is the source shape contour represented as a binary edge map. However, $\frac{\partial \ShapeS(\bfx+\bfu)}{\partial \bfx}$ does not exist in the classic limiting sense and is numerically sensitive, since edge maps  $\ShapeS$ and $\ShapeD$ are not even continuous as 2-D functions, let alone differentiable. This is the price we pay for our varitationl chamfer-matching formulation. Fortunately, under integration, the derivative of $\ShapeS(\bfx+\bfu)$ can be exchanged to that of $\Pi_{\ShapeD}^2$, according to the generalized derivative based on distribution theory~\cite{gel1964generalized} as:
\begin{align}
\int \frac{\partial L^\text{f}}{\partial \bfu}\, d \bfx &=\int \Pi_{\ShapeD} ^2(\bfx) \frac{\partial \ShapeS(\bfx+\bfu)}{\partial \bfx} \,d\bfx \notag \\
 &= \int -\frac{\partial \Pi^2_{\ShapeD}(\bfx)}{\partial \bfx} \ShapeS(\bfx+\bfu) \, d\bfx \notag\\
&= \int -2 \Pi_{\ShapeD}(\bfx) \frac{\partial \Pi_{\ShapeD}(\bfx)}{\partial \bfx} \ShapeS(\bfx+\bfu) \, d\bfx .
\label{eq:forward_gradient_simplify}
\end{align}

The rationale behind this conversion is illustrated in Figure~\ref{fig:derivative} by a simple numerical example. Figure~\ref{subfig:shape_contour_map} shows a 0--1 encoded binary map of a shape contour, with its finite differences along x and y directions shown in Figure~\ref{subfig:shape_contour_dx} and Figure~\ref{subfig:shape_contour_dy}, respectively. It is intuitive  that these difference maps reproduce the negative derivative operator along the shape contour. After integration with the target shape's distance map $\Pi_\ShapeD(\bfx)$, the derivative operators are essentially transfered over to $\Pi_\ShapeD(\bfx)$.
In this way, we may replace the original gradient operator with a ``better-behaved'' one on $\Pi_{\ShapeD(\bfx)}$, since $\Pi_{\ShapeD(\bfx)}$ is differentiable. Equation~\ref{eq:forward_gradient_simplify} suggests that we may approximate the original Euler-Lagrange equation in (\ref{eq:forward_euler}) by a simpler constraint as:
\begin{align}
	-2 \Pi_{\ShapeD} \frac{\partial \Pi_{\ShapeD}}
	{\partial \bfx} \ShapeS(\bfx+\bfu) =0.
	\label{eq:forward_linear_constraint}
\end{align}
\begin{figure}[t]
\begin{center}
\subfigure[$\ShapeS(\bfx)$]
{
	\includegraphics[width=.30\linewidth]
	{./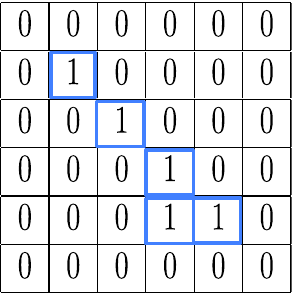}
	\label{subfig:shape_contour_map}
}
\subfigure[$\partial \ShapeS(\bfx) / \partial x $]
{
	\includegraphics[width=.30\linewidth]
	{./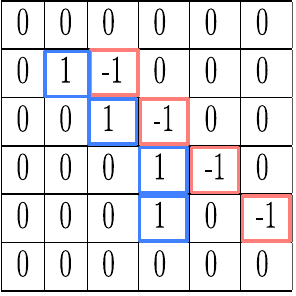}
	\label{subfig:shape_contour_dx}
}
\subfigure[$\partial \ShapeS(\bfx) / \partial y $]
{
	\includegraphics[width=.30\linewidth]
	{./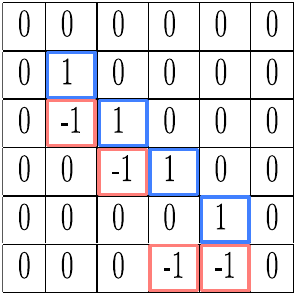}
	\label{subfig:shape_contour_dy}
}
\end{center}
\caption{Gradients of a 0--1 edge map reproduce the negative derivative operators. When integrated with distance map $\Pi_\ShapeD$, the operators are essentially transfered to $\Pi_\ShapeD$.}
\label{fig:derivative}
\end{figure}

Let us now examine the Equation~\ref{eq:forward_linear_constraint} in detail. From the definition of distance transform, we know that $\left\|\frac{\partial \Pi_{\ShapeD(\bfx)}}{\partial \bfx}\right\|=1$, \ie $\frac{\partial \Pi_{\ShapeD(\bfx)}}{\partial \bfx}$ is a \emph{direction vector} that points to the increasing value of $\Pi_{\ShapeD(\bfx)}$. Taking the negative sign into consideration, the left-hand side of Equation~\ref{eq:forward_linear_constraint} is a vector pointing in the direction that minimizes the distance of $\ShapeS(\bfx+\bfu)$ to $\ShapeD(\bfx)$, and the magnitude of this vector varies along the shape contour, proportionally to the distance of $\ShapeS(\bfx+\bfu)$ towards the target shape, \ie $\Pi_{\ShapeD(\bfx)} \ShapeS(\bfx+\bfu)$.  Figure~\ref{subfig:forward_force} shows the distribution of these vectors. They resemble a force that ``pulls'' the source shape to align with the target, hence we call it the \emph{forward force}.

We now turn our attention to the backward-energy term $E^\text{b}$. From the definition of $E^\text{b}$ in (\ref{eq:vchamfer_data_term}), we obtain:
\begin{align}
	\frac{\partial L^\text{b}}{\partial \bfu}=2\Pi_{\ShapeS(\bfx+\bfu)} \frac{\partial \Pi_{\ShapeS(\bfx+\bfu)}}{\partial \bfu} \ShapeD(\bfx) =0,
	\label{eq:backward_euler}
\end{align}
where the distance map $\Pi_{\ShapeS(\bfx+\bfu)}$ is a solution of the Eikonal equation in (\ref{eq:distance_transform_constraint}), given $\ShapeS(\bfx+\bfu)$ as its boundary condition, so the dependence of $\Pi_{\ShapeS(\bfx+\bfu)}$ on variations in $\bfu$ (\ie $\frac{\partial \Pi_{\ShapeS(\bfx+\bfu)}}{\partial \bfu}$) is difficult to analyze. This dependence can be greatly simplified if we assume that the deformation field is a simple shifting, \ie $\bfu(\bfx)=\bfu_0$, $\forall \bfx \in \Omega$ being a constant function. With this assumption, the shape deformation equals to the deformation of its distance map~\cite{paragios2003non}: $\Pi_{\ShapeS(\bfx+\bfu)}=\Pi_{\ShapeS}(\bfx+\bfu)$. Substituting this into (\ref{eq:backward_euler}), we obtain:
\begin{align}
\frac{\partial L^\text{b}}{\partial \bfu}&=2\Pi_{\ShapeS(\bfx+\bfu)} \frac{\partial \Pi_{\ShapeS}(\bfx+\bfu)}{\partial \bfu} \ShapeD(\bfx)\notag \\
&=2 \Pi_{\ShapeS(\bfx+\bfu)} \frac{\partial \Pi_{\ShapeS}(\bfx+\bfu)}{\partial \bfx} \ShapeD(\bfx) \notag\\
&=2 \Pi_{\ShapeS(\bfx+\bfu)} \frac{\partial \Pi_{\ShapeS(\bfx+\bfu)}}{\partial \bfx} \ShapeD(\bfx).
\label{eq:backward_linear_constraint}
\end{align}
We would like to clarify that $\frac{\partial \Pi_{\ShapeS(\bfx+\bfu)}}{\partial \bfx}$ in (\ref{eq:backward_linear_constraint}) is simply the gradient of $\Pi_{\ShapeS(\bfx+\bfu)}$ without further dependence on the deformed shape $\ShapeS(\bfx+\bfu)$. Using a more complex notation, this gradient term should be written as $\frac{\partial \Pi_{\ShapeS'}(\bfx)}{\partial \bfx}$ where $\ShapeS'=\ShapeS(\mathbf{y}+\bfu)$ is the deformed shape. Here, we keep its notation simpler and consistent with Equation~\ref{eq:forward_linear_constraint}. It is also worth pointing out that the above derivation is valid only under the assumption that $\bfu(\bfx)$ is constant for every $\bfx$, but we relax this assumption to arbitrary continuous $\bfu(\bfx)$, and use (\ref{eq:backward_linear_constraint}) as the gradient function for minimizing the backward energy. This is of course only an approximated function as we have ignored variations of the deformation function $\bfu(\bfx)$, but we show in the following section that this simplification is not arbitrary and the approximated gradient function will lead to the minimization of our chamfer-matching functional.

The gradient function in Equation~\ref{eq:backward_linear_constraint} has similar intuitive motivation as the forward force. By negating the left-hand side of Equation~\ref{eq:backward_linear_constraint}, we obtain $-\frac{\partial L^\text{b}}{\partial \bfu}$ that acts like a force ``pulling'' the target shape towards the source as shown in Figure~\ref{subfig:backward_force}. In actual minimization, the target shape is kept unchanged (static), thus the force is applied in reversed direction to the source shape, and we call it the \emph{backward force}.

Combining (\ref{eq:forward_linear_constraint}) with (\ref{eq:backward_linear_constraint}) and ignoring the constants, we may now approximate the Euler-Lagrange equation of the original variational functional (Equation~\ref{eq:vchamfer_data_term}) as follows:
\begin{align}
\mathbf{J}\approx \ApproxJ = \underbrace{-\Pi_{\ShapeD(\bfx)} \frac{\partial \Pi_{\ShapeD(\bfx)}}{\partial \bfx} \ShapeS(\bfx+\bfu)}_{\text{forward force}}+
\underbrace{\Pi_{\ShapeS(\bfx+\bfu)} \frac{\partial \Pi_{\ShapeS(\bfx+\bfu)}}{\partial \bfx} \ShapeD(\bfx)}_{\text{backward force}}.
\label{eq:vchamfer_gradient}
\end{align}
$ \ApproxJ$ closely resemble the gradient of classic chamfer-matching method~\cite{borgefors1988hierarchical}, and we call $ \ApproxJ$ the \emph{variational chamfer-matching gradient}. $ \ApproxJ$ can also be interpretated as a force field deforming the source contour, in a similar analogy used in the deformation of active contours~\cite{chenyang_xu}.
Finally, the strength of the combined force increases according to their mutual distances. 

Note that, in the chamfer-matching energy functional (Equation~\ref{eq:vchamfer_data_term}), we may also use the $\mathbf{L}^1$ norm instead of the squared-sum (\ie $\mathbf{L}^2$ norm). In this case, the forward and backward forces degenerate to direction vectors with uniform magnitude regardless to the mutual distance between the shapes. Our experiments showed that $\mathbf{L}^1$ norm is more sensitive to local minima, and leads to slower minimization convergence. This observation echoes a similar finding in the classic chamfer-matching method~\cite{borgefors1988hierarchical}.

\begin{figure}[t]
\begin{center}
\subfigure[]
{
	\includegraphics[width=.30\linewidth]
	{./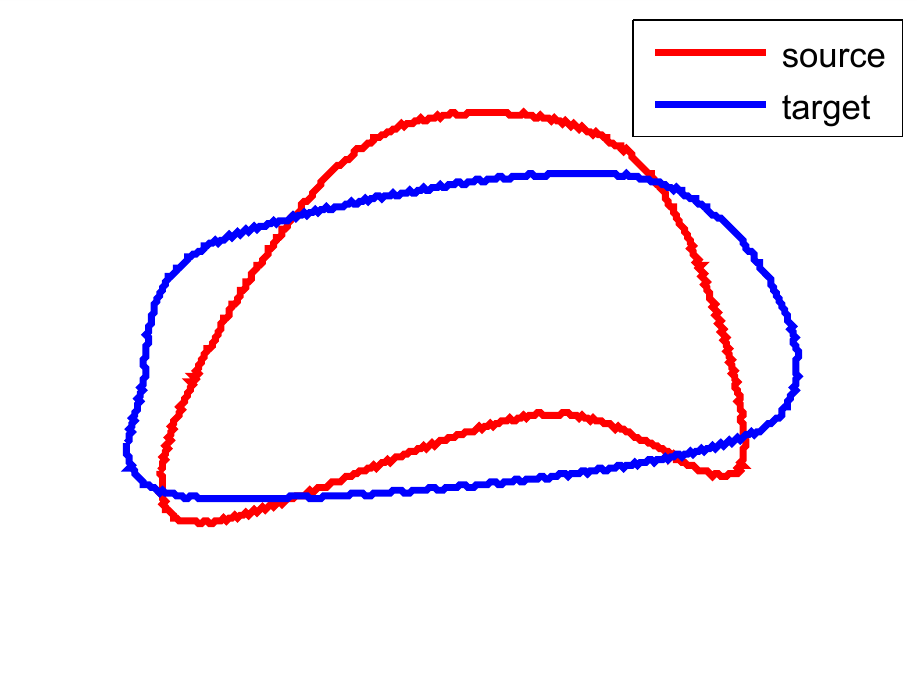}

}
\subfigure[]
{
	\includegraphics[width=.30\linewidth]
	{./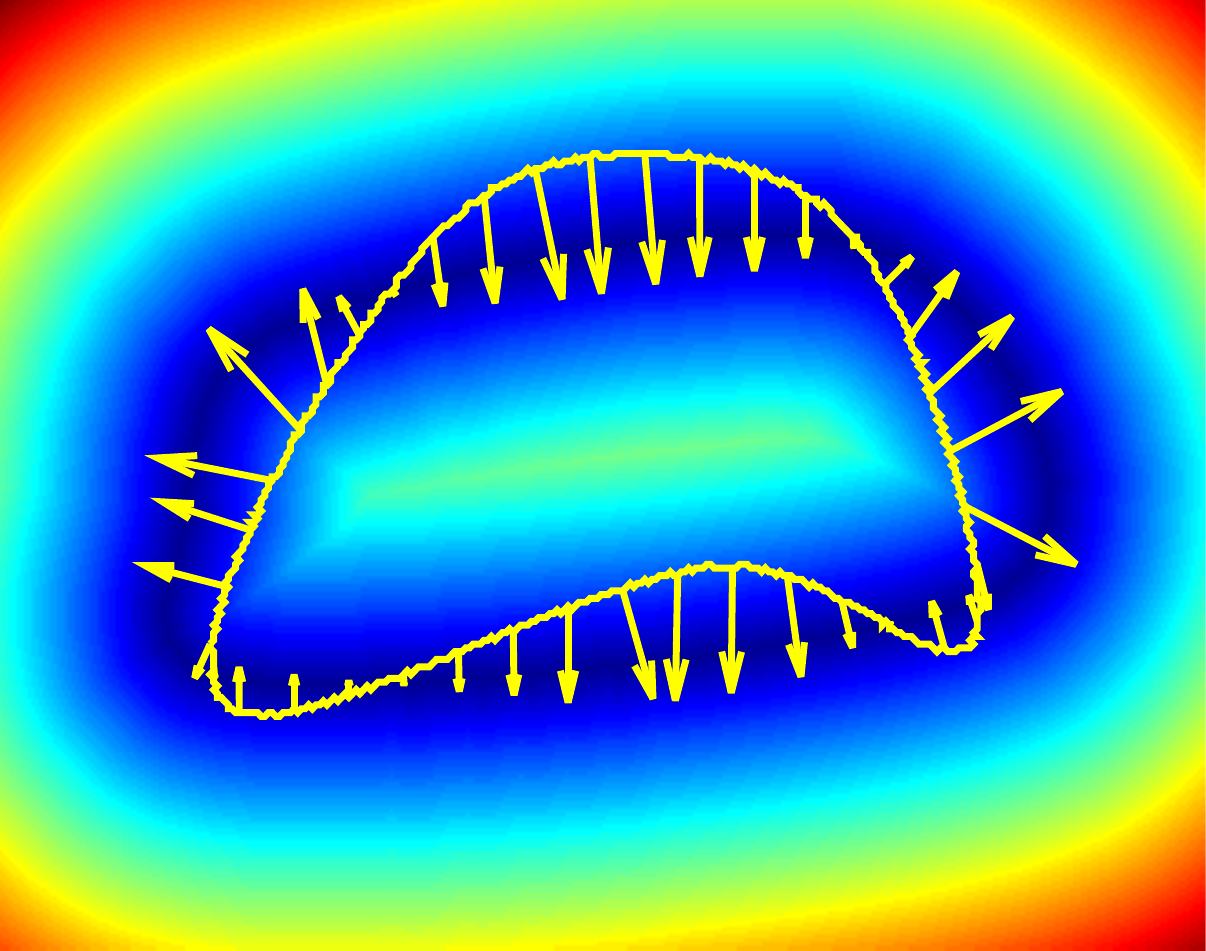}
	\label{subfig:forward_force}
}
\subfigure[]
{
	\includegraphics[width=.30\linewidth]
	{./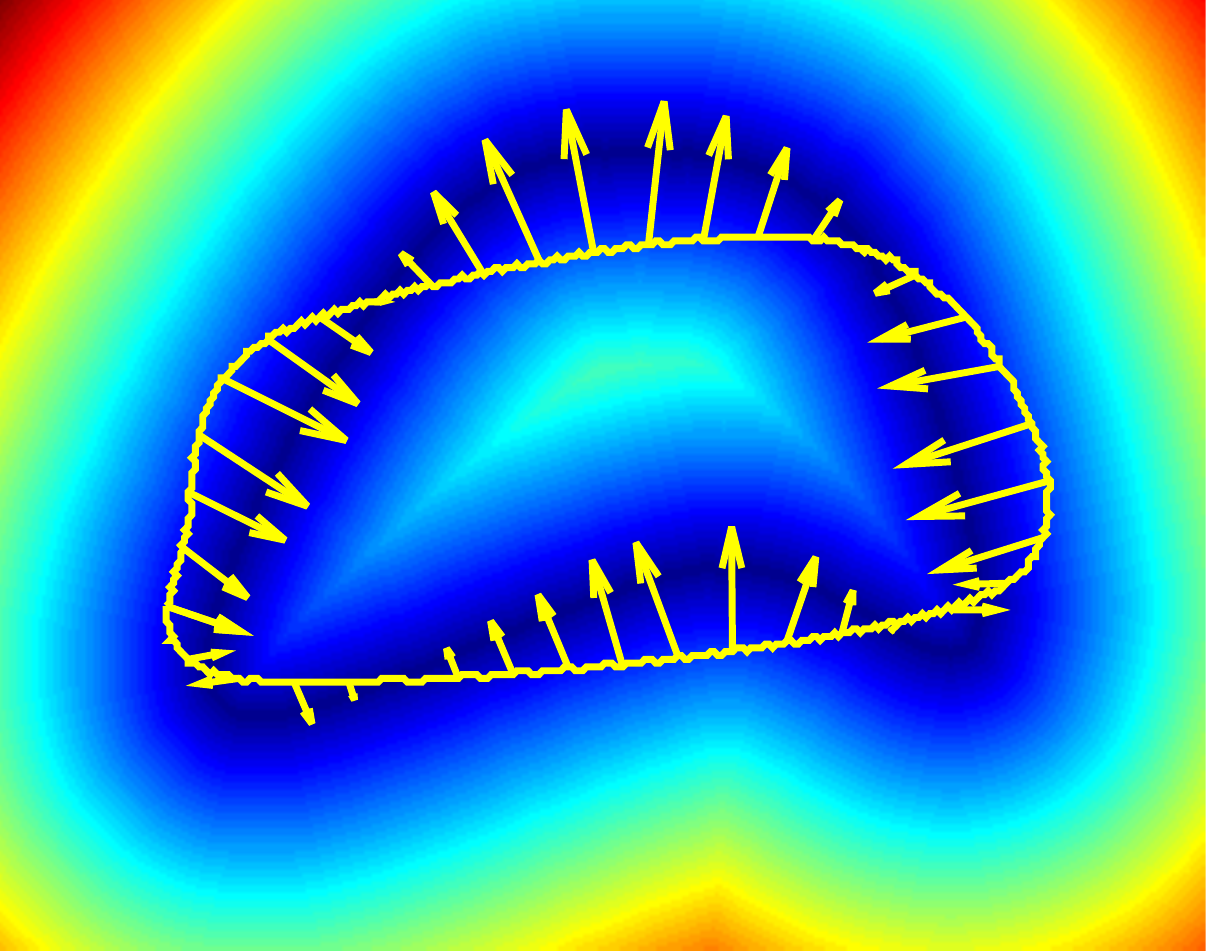}
	\label{subfig:backward_force}
}
\end{center}
\caption{Variational chamfer-matching gradient. a) Source and target shape contours. b) Forward force is distributed along the source shape (curve in yellow). The distance transforms are color-coded, with red and blue indicating the maximum and minimum values respectively.  
The forward force pulls the source towards the target. c) Backward force in reversed direction that pulls the target to the source. The magnitude of these forces are proportional to the distances between two shape contours.}

\label{fig:vchamfer_gradient_example}
\end{figure}

\subsection{Variational chamfer-matching based on the simplified gradient}
\label{subsec:simplified_gradient_minimization}
We have derived an approximate but simple gradient function $\ApproxJ$ in (\ref{eq:vchamfer_gradient}) to minimize the variational chamfer-matching functional $\VChamferE$. Now, we show that $\ApproxJ$  can indeed lead to the minimization of $\VChamferE$. From variational calculus, we know that a necessary condition for minimizing $\VChamferE$ is its Euler-Lagrange equation $\mathbf{J}=0$. However, this condition is not sufficient as the Euler-Lagrange equation may also hold when $\VChamferE$ is at local minima. Nevertheless, the Euler-Lagrange equation is widely used in computer vision to minimize variational functionals. Here, we wish to establish a similar relationship between the minimization of $\VChamferE$ and our approximate gradent function $\ApproxJ$, then the use of $\ApproxJ$ is theoretically justifiable. It turns out that $\ApproxJ=0$ is indeed a necessary and sufficient condition for the variational-matching energy to vanish, \ie $\VChamferE=0$. 

Before elaborating on the proof, we want to clarify that our discussion is limited to continuous deformation fields, \ie $\bfu \in C(\Omega)$ with $C(\Omega)$ as the set of continuous functions defined on domain $\Omega$. Vanishing of the chamfer-matching energy, \ie $\min\limits_{\bfu \in C(\Omega)} \VChamferE =0 $ can only be reached when two shapes $\ShapeS(\bfx+\bfu)$ and $\ShapeD(\bfx)$ can be exactly aligned, meaning that there exists a \emph{continuous} deformation $\bfu(\bfx)$, such that $\ShapeS(\bfx+\bfu) = \ShapeD(\bfx)$, $\forall \bfx \in \Omega$~\footnote{Rigorously, exceptions may exist for singular point set of zero measure. As far as discrete image data is concerned, this assertion is valid.}. A mismatch of the two shapes at any image pixel $\mathbf{p} \in \Omega$ would lead to $\VChamferE>0$, due to the non-negativeness of unsigned distance transform. As a result, if the source and target shapes have different topologies, then $\VChamferE>0$, $\forall \bfu(\bfx) \in C(\Omega)$ since there is no continuous mapping between shapes of different topologies. By focusing on $\VChamferE=0$, we have excluded the cases of aligning shapes with different topologies. Fortunately, these cases are not very interesting for many practical applications. When the two shapes have the same topology, then $\VChamferE=0$ is indeed the global minimum of $\VChamferE$. 
In summary, we can prove the following lemma:
\begin{lemma}
If $\min\limits_{\bfu \in C(\Omega)} \VChamferE =0 $, then $ \VChamferE =0 \Leftrightarrow \ApproxJ(\bfu)=0$.
\label{lem:the_one}
\end{lemma}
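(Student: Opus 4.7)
The plan is to establish both implications by a case analysis on the values of the binary indicators $\ShapeS(\bfx+\bfu)$ and $\ShapeD(\bfx)$, exploiting two facts: (a) $\Pi_\ShapeS(\bfx)=0$ iff $\bfx$ lies on the contour $\ShapeS$, and (b) the Eikonal identity $\left\|\nabla \Pi_\ShapeS\right\|=1$ wherever $\Pi_\ShapeS$ is differentiable.

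For the ``$\Rightarrow$'' direction, I would observe that both integrands in $\VChamferE$ are pointwise non-negative (unsigned distance transforms and binary indicators). Hence $\VChamferE=0$ forces the pointwise identities $\ShapeS(\bfx+\bfu)\,\Pi_\ShapeD(\bfx)\equiv 0$ and $\ShapeD(\bfx)\,\Pi_{\ShapeS(\bfx+\bfu)}(\bfx)\equiv 0$. Substituting these into the two terms of $\ApproxJ$ (equation~\ref{eq:vchamfer_gradient}) makes each term vanish independently: on the support of the leading indicator, the paired distance is zero; off the support, the indicator itself is zero. Thus $\ApproxJ(\bfu)\equiv 0$.

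For the ``$\Leftarrow$'' direction, I would partition $\Omega$ according to the four possible values of the pair $\bigl(\ShapeS(\bfx+\bfu),\ShapeD(\bfx)\bigr)$. In case $(0,0)$, both indicators vanish and so does $\ApproxJ$ trivially. In case $(1,1)$, $\bfx$ lies on both contours, so $\Pi_\ShapeD(\bfx)=\Pi_{\ShapeS(\bfx+\bfu)}(\bfx)=0$ and both terms of $\ApproxJ$ vanish again. In case $(1,0)$, the backward term is killed by the factor $\ShapeD$, while the forward term reduces to $-\Pi_\ShapeD(\bfx)\,\nabla\Pi_\ShapeD(\bfx)$. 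Since $\ShapeD(\bfx)=0$ implies $\Pi_\ShapeD(\bfx)>0$, and $\left\|\nabla\Pi_\ShapeD\right\|=1$ by Eikonal, this vector has magnitude $\Pi_\ShapeD(\bfx)>0$, contradicting $\ApproxJ=0$. Hence case $(1,0)$ is empty, and by symmetry so is $(0,1)$. Therefore $\ShapeS(\bfx+\bfu)\equiv\ShapeD(\bfx)$ throughout $\Omega$, which makes both integrands in (\ref{eq:vchamfer_data_term}) vanish pointwise, giving $\VChamferE=0$.

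The assumption $\min_{\bfu\in C(\Omega)}\VChamferE=0$ is used to rule out topological obstructions: it guarantees that a continuous $\bfu$ realizing $\ShapeS(\bfx+\bfu)\equiv\ShapeD(\bfx)$ exists, so the equivalence is not vacuously true for shapes of incompatible topology. The main subtlety I expect is careful bookkeeping of which factor in each bilinear term of $\ApproxJ$ cancels in which region, together with invoking the Eikonal identity only at points where $\Pi_\ShapeD$ and $\Pi_{\ShapeS(\bfx+\bfu)}$ are differentiable. Non-differentiability is confined to the medial axis, a set of measure zero, so the pointwise conclusion $\ShapeS(\bfx+\bfu)\equiv\ShapeD(\bfx)$ extends by continuity of the indicators on an open dense set and suffices to drive the integrals defining $\VChamferE$ to zero.
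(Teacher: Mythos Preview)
Your proposal is correct and follows essentially the same strategy as the paper: both directions hinge on pointwise nonnegativity of the integrands for ($\Rightarrow$) and, for ($\Leftarrow$), a contradiction obtained by combining the Eikonal identity $\|\nabla\Pi\|=1$ with the fact that $\Pi>0$ off the contour. Your version is in fact a bit more carefully organized --- you make the four-way case split on $(\ShapeS(\bfx+\bfu),\ShapeD(\bfx))$ explicit and flag the measure-zero medial-axis caveat --- whereas the paper handles one mismatch case ``without loss of generality'' and does not comment on differentiability.
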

\begin{proof}
On one hand, $\VChamferE=0$ means that a continuous deformation field $\bfu(\bfx)$ exists such that $\ShapeS(\bfx+\bfu)=\ShapeD(\bfx)$, and $\Pi_{\ShapeD(\bfx)}=\Pi_{\ShapeS(\bfx+\bfu)}$.
By the definition of distance transform,  we obtain $\Pi_{\ShapeS(\bfx+\bfu)}\,\ShapeD(\bfx)=0$, $\Pi_{\ShapeD(\bfx)}\,\ShapeS(\bfx+\bfu)=0$ and $\ApproxJ(\bfx)=0$. Consequently, $\ApproxJ(\bfx)=0$ is indeed necessary for $\VChamferE=0$.  

On the other hand, let us assume $\ApproxJ(\bfx)=0$,  $\forall \bfx \in \Omega$, then we can show that $\ShapeS(\bfx+\bfu)=\ShapeD(\bfx)$ must be true. %Since $\ShapeS$ and $\ShapeD$ are $0-1$ maps, there are only four possible combinations of the values of $\ShapeS(\bfx+\bfu)$ and $\ShapeD(\bfx)$ at any pixel $\bfx$. 
For example, if $\ShapeS(\bfx+\bfu)\neq \ShapeD(\bfx)$ for some $\bfx \in \Omega$, we can assume $ \ShapeD(\bfx)=1$ and $\ShapeS(\bfx+\bfu)=0$ without loss of generality. As a result, the forward force vanishes at $\bfx$, from the definition of $\ApproxJ(\bfx)$ (Equation~\ref{eq:vchamfer_gradient}). Since $\ApproxJ(\bfx)=0$, the backward force also become zero at $\bfx$, \ie $-\Pi_{\ShapeD(\bfx)}\frac{\partial \Pi_{\ShapeD(\bfx)}}{\partial \bfx}=0$, but we have assumed that $ \ShapeD(\bfx)=1$, so $\Pi_{\ShapeD}(\bfx) >0$ and $\left|\frac{\partial \Pi_{\ShapeD(\bfx)}}{\partial \bfx}\right|=1$, leading to a contradiction. Consequently, $\ApproxJ(\bfx)=0$ ensures that $\ShapeS(\bfx+\bfu)=\ShapeD(\bfx)$ holds for all $\bfx\in \Omega$. This in turn proves $\VChamferE=0$, and hence the sufficiency of $\ApproxJ=0$ for $\VChamferE=0$\qed.
\end{proof}

We have shown that $\ApproxJ(\bfu)$ can be used to minimize the variational chamfer-matching energy. Now, we can combine the chamfer-matching energy with different regularizers and deformation models for nonrigid shape registration such as  the derivative-based regularizer of  Equation~\ref{eq:functional_paragios} and B-Spline representation in~\cite{huang2006shape}. However, B-Spline models rely on  an explicitly connected control-point grid (\ie mesh), making it difficult to adapt the model to shapes of challenging topologies (holes, splits). Instead, the whole image domain is taken into consideration even though a large part of it can be irrelevant. Additionally, B-Spline models may suffer from the folding effect that control points collapse together under large deformation, causing numerical instability.
Next, we study how to remove the redundancy from parametric deformation models, by adopting a meshless representation that approximates the shape's deformation field. 

\section{Meshless deformation model}
\label{sec:meshless}
Our model is derived from recent developments in computer graphics~\cite{ohtake2003multi} and mechanical engineering~\cite{liu_gr_meshfree2009}, on building shape functions of arbitrary topology. Specifically, our model belongs to a group of so-called \emph{partition-of-unity} meshless methods. Although there are other meshless shape-deformation models based on thin-plate splines and radial basis functions (RBFs)~\cite{chen_hui_2006}, those methods are less accurate than polynomial-based representations as radial basis functions cannot exactly represent polynomial deformations (lack of reproducibility)~\cite{liu_gr_meshfree2009}. In the partition-of-unity model, one can ensure polynomial reproducibility by locally modeling shape deformation as polynomials. These local deformation models are then blended together into a global deformation field based on a set of weighting functions. We illustrate the partition-of-unity blending concept using the 1-D example shown in Figure~\ref{fig:pu_1d_example}. 
The figure shows 1-D patches represented by their weighting functions partitioning the data domain into three regions. The local deformation models are represented by the colors pink, blue, and red. The global deformation model is represented by the green polynomial curve obtained by blending the local models.  Here, we would like to remind the readers that the meshless model in our method is used for representing the incremental nonrigid deformation (\ie function $\bfu$ in Equation~\ref{eq:vchamfer_data_term}), but not the shape contours themselves (the shapes are represented by distance functions and binary maps). Our approach largely follows the partition-of-unity used in computer graphics~\cite{ohtake2003multi}. In the following subsections, we first introduce the local deformation model, and then explain how to blend them into a global model. 
\begin{figure}[t]
\begin{center}
	\includegraphics[width=.95\linewidth]
	{./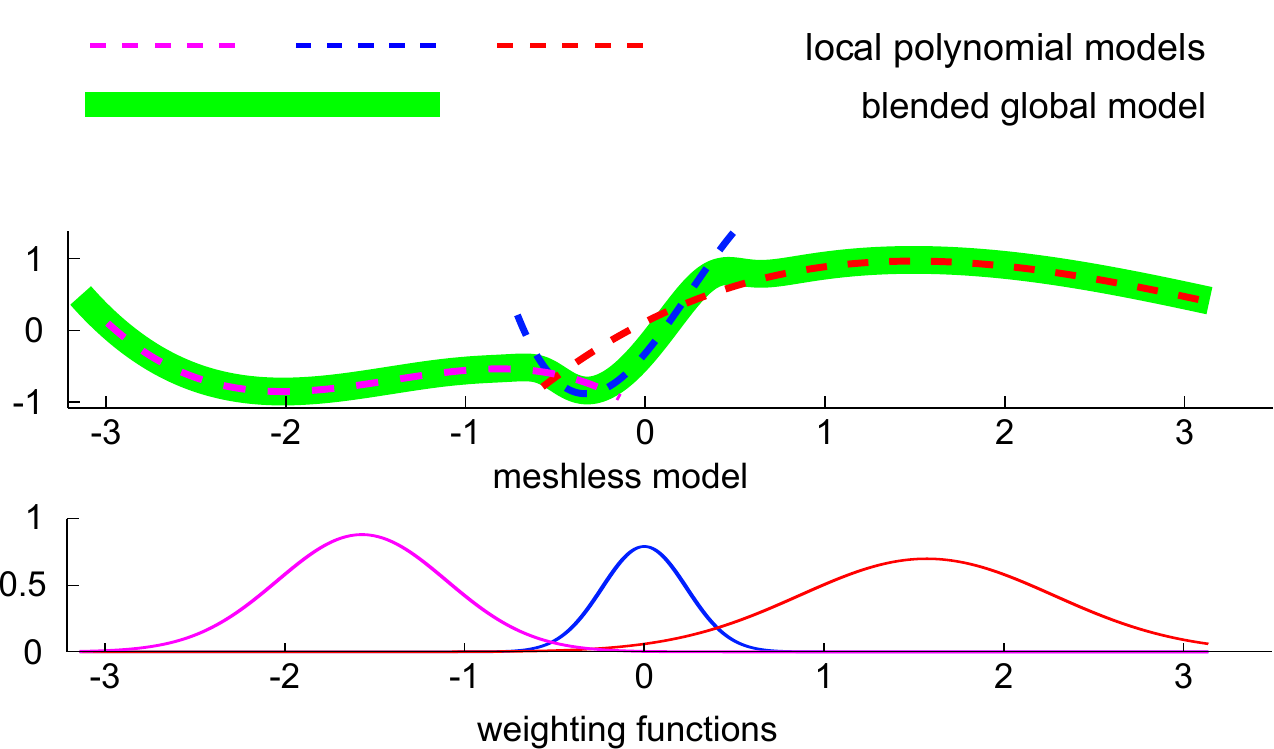}
\caption{A 1-D meshless model based on partition-of-unity. The local polynomial models are blended together (top), using predefined weighting functions (bottom).}
\end{center}
\label{fig:pu_1d_example}
\end{figure}

\subsection{Local polynomial model}
We commence by modeling shape deformation locally in scattered local domains called patches~\cite{Melenk1996289}. 
%If we denote the set of patches as $\BallSet$, then $\Omega \in \cup_p$, $p\in \BallSet$.
As in~\cite{ohtake2003multi}, we define these patches as the support of some weighting functions.  Various types of weighting functions exist~\cite{liu_gr_meshfree2009}. In this work, we use the 2nd-order B-Spline $w_p(\PosVector)=\alpha_p \,w_b\left(\frac{\|\PosVector-\bfp\|}{r_p}\right)$ for its computational efficiency, where $\alpha_p \in (0,1]$ is the patch's predefined influence factor in the final global blending. Formally, a 2nd-order B-Spline function is defined as a piecewise polynomial:
\begin{align}
	w(r) =
		\begin{cases} 
		\frac{3}{4}-r^2 &, r <\frac{1}{2}
		\\
		\frac{1}{2}(\frac{3}{2}-r)^2
		&, \frac{3}{2}\geq r \geq \frac{1}{2} \\ 
		0    
		&, r>\frac{3}{2}
	\end{cases},
	\label{eq:weight_function}
\end{align}
This weighting function's support is a disk with radius $r_p$ centered at $\bfp$ (ball in 3-D\footnote{A 3-D extension is straightforward.}). To summarize, a disk-like patch $p$ is defined by three parameters $(\mathbf{p},r_p,\alpha_p)$, \ie its center position, radius, and influence factor. The union of these patches covers the  image region of interest, $\Omega$, \ie $\Omega \subset \cup_p \supp \left(w_p\right)$, $p \in \BallSet$, with $\BallSet$ is the set of all patches. Note that although the weighting function in (\ref{eq:weight_function}) is a radial function, its usage is different from previous RBF models such as thin-plate splines~\cite{chen_hui_2006}. Here, RBFs are used for blending  local polynomial models instead of directly representing the shape deformation. 

It is also important to point out that the patches simply define a partition of the computational domain, and remain static during shape deformation, unlike B-spline control points, whose displacements directly control the deformation field. Instead, the local deformation $\bfu_p=(u,v)$ within each patch is defined by the parameters of a polynomial model centered at $\bfp$ as follows:
\begin{align}
	u(\PosVector)&=\sum_{s,t=0}^{s,t = \morder} a_{s,t} x^sy^t
	\hspace{.3in}\text{and}\hspace{.3in}
	v(\PosVector)=\sum_{s,t=0}^{s,t=\morder} b_{s,t} x^s y^t,
	\label{eq:local_model_slack}
\end{align}
Alternatively, $\LocalDeform_p(\PosVector)$ is a linear combination of monomial basis functions $\BasisVector{\Transpose}(\PosVector) = (1, x, y, xy, $ $x^2, y^2, \ldots,  x^\morder y^\morder)$, with a two-column coefficient matrix $\bfd_p$ as:
\begin{align}
	\bfu_p(\bfx)=\bfd_p{\Transpose} \BasisVector_p(\PosVector),
	\label{eq:local_model_compact}
\end{align}
where $\tvec\left(\mathbf{d}_p\right) = (a_{0,0} ,b_{0,0},\cdots,a_{\morder,\morder},b_{\morder,\morder} ){\Transpose}$ and $\BasisVector_p(\PosVector)=\BasisVector(\PosVector-\bfp)$ is the basis vector re-centered at $\bfp$. 
The sequence of monomials in $\BasisVector$ can be represented as a one-to-one mapping from the natural number to nonnegative 2-D monomial indices  $\xi : \mathbb{N} \leftrightarrow \mathbb{N}_0\times \mathbb{N}_0$. In this paper, the monomials are arranged in a Pascal-triangle manner~\cite{liu_gr_meshfree2009}, where low-order monomials are arranged before high-order ones. For monomials of the same order, the ones with more balanced indices in $x$ and $y$ are arranged first. This special arrangement increases numerical stability according to existing literature~\cite{liu_gr_meshfree2009}.

\subsection{Blending local models into a global deformation field}
\label{sec:global_deformation}
Once the local deformation models are at hand, the deformation at any point $\bfx$ is obtained by blending local fields of patches that contain $\bfx$. These patches are denoted by $N_{\bfx}=\{p\,\,|\,\,\bfx\in \supp\left(w_p\right)\}$. The blended global-deformation field is given by:
\begin{align}
	\bfu(\bfx)=\sum_{p\in N_{\bfx}} r_p(\bfx) \LocalDeform_p(\bfx), 
	\,\,\,\,\,\,\,\,\text{with}\,\,\,\,\,\,\,\,
	r_p\left(\bfx\right)=\frac{w_p(\bfx)}{\sum_{{p'}\in N_\bfx} w_{{p'}}(\bfx)}.
	\label{eq:blending_local}
\end{align}
Here, $r_p\left(\bfx\right)$ ensures the partition-of-unity (PU), so that Equation~\ref{eq:blending_local} is essentially a weighted average. Compared to the B-spline model, the partition-of-unity method does not need explicit connections between neighboring patches, and allows for arbitrary overlapping of local models. This blending scheme has been previously used in the polyaffine model for diffeomorphic image registration~\cite{arsigny2005fast}, meshless image registration~\cite{makram510non}, and point-based computer graphics~\cite{ohtake2003multi}. Some previous works~\cite{makram510non,Melenk1996289,liu_gr_meshfree2009} directly define the partition functions $r_p(\bfx)$ without constructing them from  weighting functions. However, weighting functions provide two appealing features. 

First, weighting functions are intuitive to handle, they allow us to assign different weights and scales to patches, and can potentially be made adaptive according to prior information of shape deformation\footnote{In the partition-of-unity literature~\cite{Melenk1996289}, other a priori knowledge can also be included about local behavior of the solution in the finite element space.}.  It is desirable that the positions, scales, and weighting factors of patches can be optimized according to registration results in a data-driven fashion. However, data-driven approaches are often application dependent, so this topic is beyond the scope of our paper. Here, we exploit a simple heuristic to register shape contours. Broadly speaking, we choose the patches' scales according to the shapes' mutual distance, since edge points that are far apart are more likely to have large-scale deformations. We found that this simple heuristic works well in most cases. Figure~\ref{subfig:node_distribution} shows an example shape with patches distributed along its contour. Figure~\ref{subfig:sum_of_weights} shows the sum of their weighting functions (\ie the denominator of partition functions $r_p$ in Equation~\ref{eq:blending_local}). Thus, we limit the computation of the deformation model to take place along shape contours, and achieve a similar effect as the narrow-band function used in~\cite{huang2006shape,paragios2003non}.

\begin{figure}[t]
\begin{center}
\subfigure[]
{
	\includegraphics[width=.46\linewidth]
	{./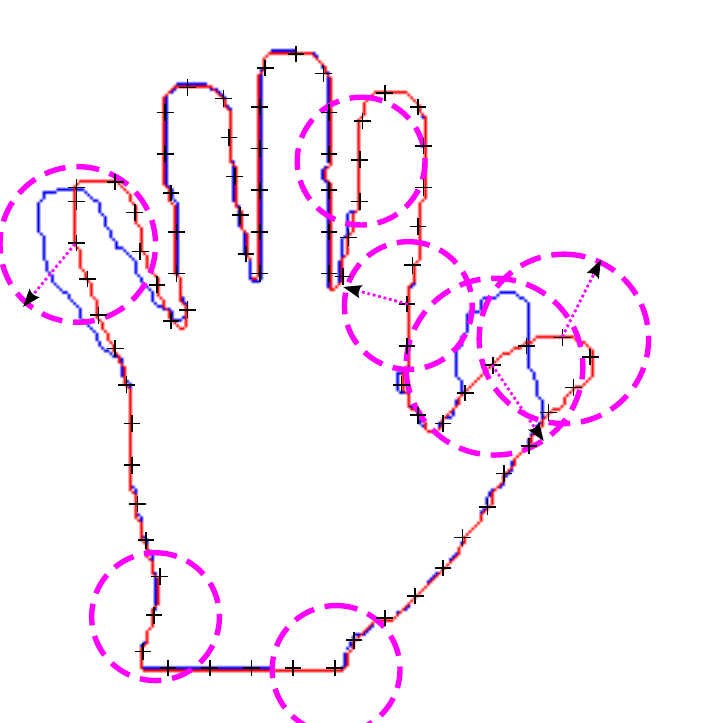}
	\label{subfig:node_distribution}
}
\subfigure[]
{
	\includegraphics[width=.46\linewidth]
	{./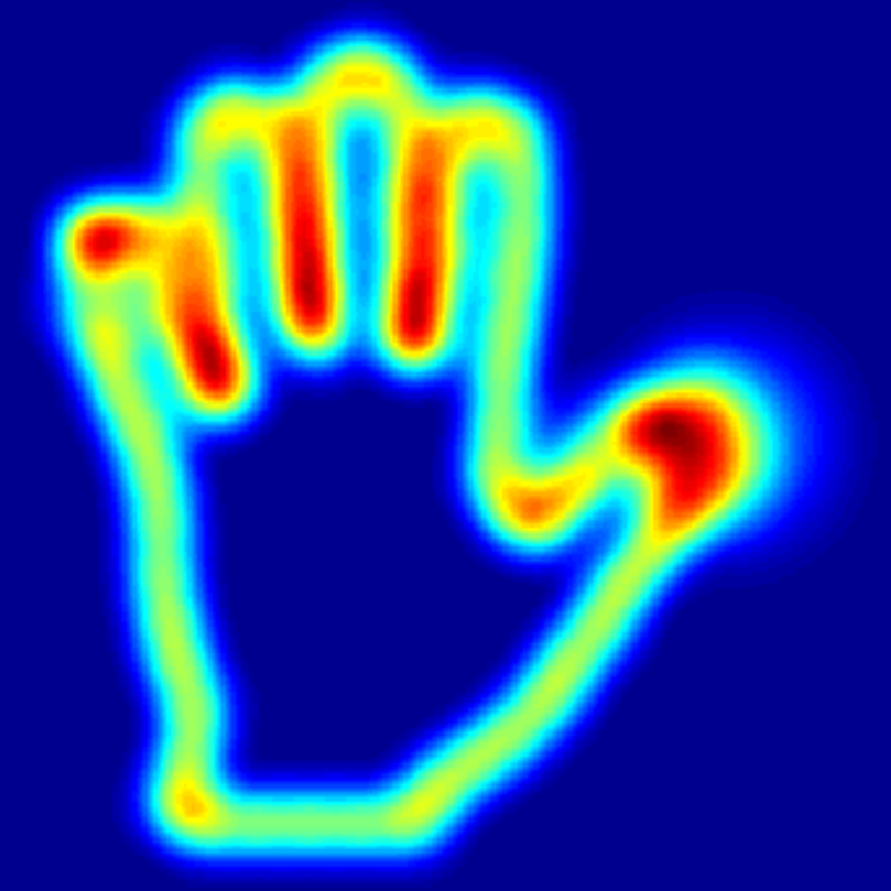}
	\label{subfig:sum_of_weights}
}
%\subfigure[]
%{
%	\includegraphics[width=.98\linewidth]
%	{./}
%}
\end{center}
\caption{2-D meshless deformation model for shape registration. a) Circular patches are distributed along the shape contours with centers marked as black crosses, and their scales are adapted according to distance of the source (in red) to the target shape (in blue). b) The sum of weighting functions, whose support covers the interested image region.}
\label{fig:adaptive_nodes}
\end{figure}

Secondly, weighting functions also provide a way to specify  our confidence about the approximation accuracy of different local deformation models. This confidence measure will be used to regularize the global deformation field. Regularizers are important for preventing degenerating solutions, since the solutions of shape registration may not be unique. Regularizers remove ambiguous solutions by penalizing undesired fluctuations in the estimated deformation field, and prefer ``smooth'' solutions. Next, we study the problem of how to regularize a meshless shape-deformation model. 

\subsection{Smoothness regularizer}
\label{sec:consistency_enforcement}
We have shown that a global deformation field relating two shapes can be obtained by blending local deformation fields using Equation~\ref{eq:blending_local}. A classic regularizer used in B-spline models~\cite{huang2006shape} is to penalize the magnitude of derivatives of the resulting deformation. However, calculating derivatives of our meshless model is not straightforward. For example, according to the product rule, first-order derivatives of the global deformation are obtained as follows:
\begin{align}
	\nabla \bfu(\bfx) = \sum_{p\in N_\bfx} 
	\left(\nabla r_p(\bfx) \bfu_p(\bfx) +  
	r_p(\bfx) \nabla \bfu_p(\bfx)\right).
	\label{eq:first_order_derivative}
\end{align}
The above differentiation involves derivatives of local deformation models $\bfu_p(\bfx)$, and those of the partition functions $r_p(\bfx)$ given by:
\begin{align}
	\nabla r_p(\bfx) = \frac{\nabla w_p(\bfx) 
	\sum_{{p'}\in N_\bfx} w_{{p'}}(\bfx) - 
	w_p(\bfx) \sum_{{p'}\in N_\bfx} 
	\nabla w_{{p'}}(\bfx)}{{\left(\sum_{{p'}\in N_\bfx} 
	w_{{p'}}(\bfx)\right)}^2}.
	\label{eq:partition_function_derivative}
\end{align}
It is clear that calculation of $\nabla \bfu(\bfx)$ is complicated by the presence of blending functions. This problem was partially addressed in~\cite{makram510non}, by approximating  $\nabla \bfu(\bfx)$ using the differentiations of local models (\ie $\nabla \bfu_p(\bfx)$) as follows:
\begin{align}
	\widetilde{\nabla} \bfu(\bfx) = 
	\sum_{p \in N_\bfx} r_p(\bfx) \nabla \bfu_p(\bfx),
\end{align}
that is basically a weighted average of the derivatives of neighboring patches.
The approximated derivative $\widetilde{\nabla} \bfu(\bfx)$ is then penalized in a similar fashion as in the classic smoothness regularizer~\cite{huang2006shape}. However, this approximation is only accurate when neighboring patches have very similar derivatives (in conformity). This condition is achieved by penalizing a Sobolev conformity term between neighboring local deformation models as follows~\cite{makram510non}:
\begin{align}
	\mathbf{S}_k^{p,q} = \sum_{|\eta|\leq k} 
	\int w_p(\bfx) w_q (\bfx) \|\mathbf{D}^\eta \bfu_p(\bfx) -
	\mathbf{D}^\eta \bfu_p(\bfx)\|^2_2 d\bfx,
	\label{eq:sobolev_conformity}
\end{align}
where $\mathbf{D}^\eta \bfu_p(\bfx)=\partial^{\eta_x}_x \partial^{\eta_y}_y \bfu_p(\bfx)$, $\eta_x, \eta_y = 0,1,\ldots, k$, and $\eta=(\eta_x, \eta_y)$ is a multi-index vector with total order $|\eta|=\eta_x+\eta_y$. The usage of multi-index $\eta$ is to include various orders of derivatives.  In other words, the Sobolev conformity term is essentially a weighted squared sum of difference (SSD) between the derivatives of neighboring polynomial models. By enforcing $\mathbf{S}_k^{p,q}\rightarrow 0$ for each pair of patches $(p,q)$, the local deformation models are forced to have similar derivatives. 
Equation~\ref{eq:sobolev_conformity} can be simplified into a quadratic term of local polynomial coefficients, but this approach is still much more involved than regularizing mesh models, and it involves two components: the Sobolev conformity term and a classic gradient-based smoothness term.

We found that a much simpler regularizer can be derived for smoothing the deformation field. First, the conformity between local models can be simply enforced by shifting and comparing the coefficients of local polynomial representations, instead of using an integral SSD form as in Equation~\ref{eq:sobolev_conformity}. In this way, the conformity regularizer is greatly simplified, without having to compare the derivatives of local models. To differentiate with the conformity regularizer, we call our simplified functional as \emph{consistency regularizer}, as it penalizes the inconsistencies in the polynomial coefficients of neighboring patches, instead of measuring the conformity of their derivatives. Secondly, our consistency term also penalizes fluctuations in the deformation field, and is enough to ensure its smoothness, further simplifying the registration framework by replacing the classic derivative-based regularizer.
%conformity among local models can be simpler consistency regularizer~\cite{WeiRibeiroMeshlessISVC2010}.

\begin{figure}[t]
\begin{center}
\subfigure[$\lambda=0$]
{
	\includegraphics[width=.45\linewidth]
	{./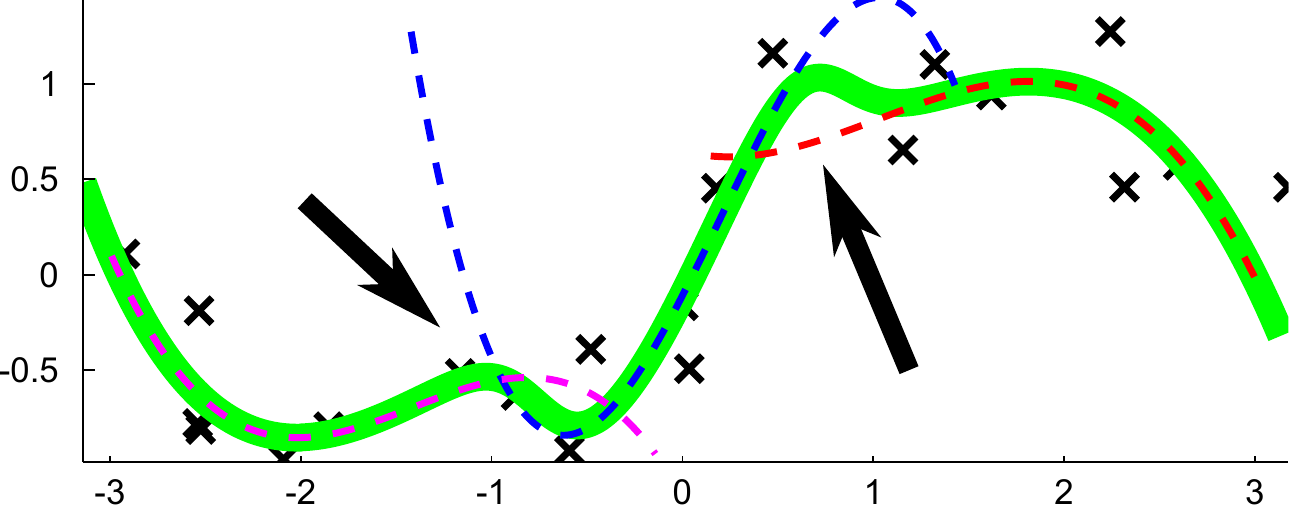}
	\label{subfig:inconsistencies}
}
\subfigure[$\lambda=0.001$]
{
	\includegraphics[width=.45\linewidth]
	{./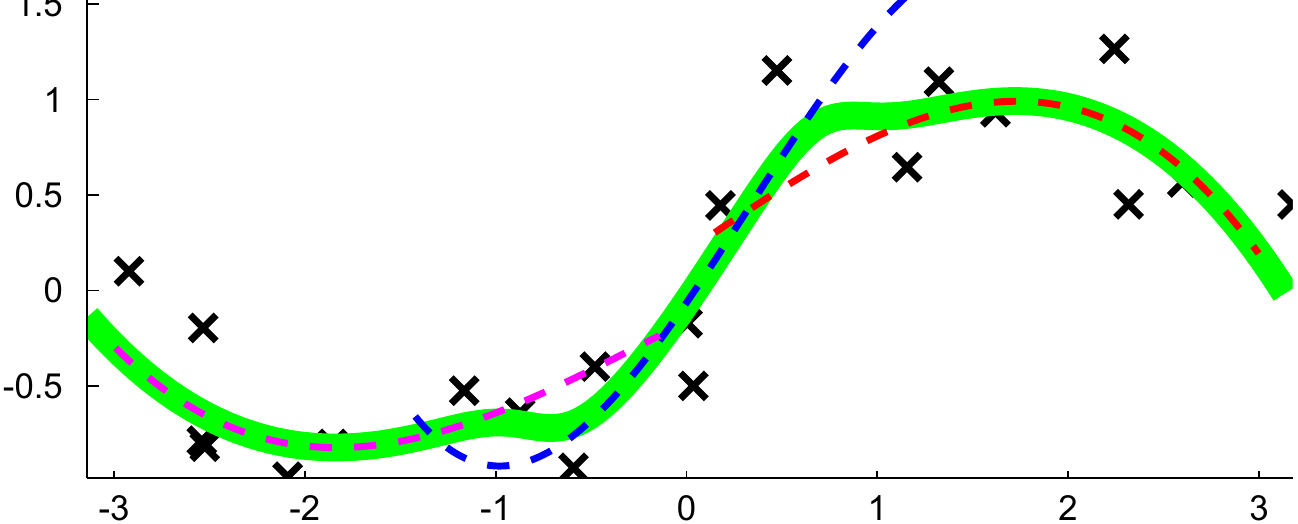}
}
\subfigure[$\lambda=0.05$]
{
	\includegraphics[width=.45\linewidth]
	{./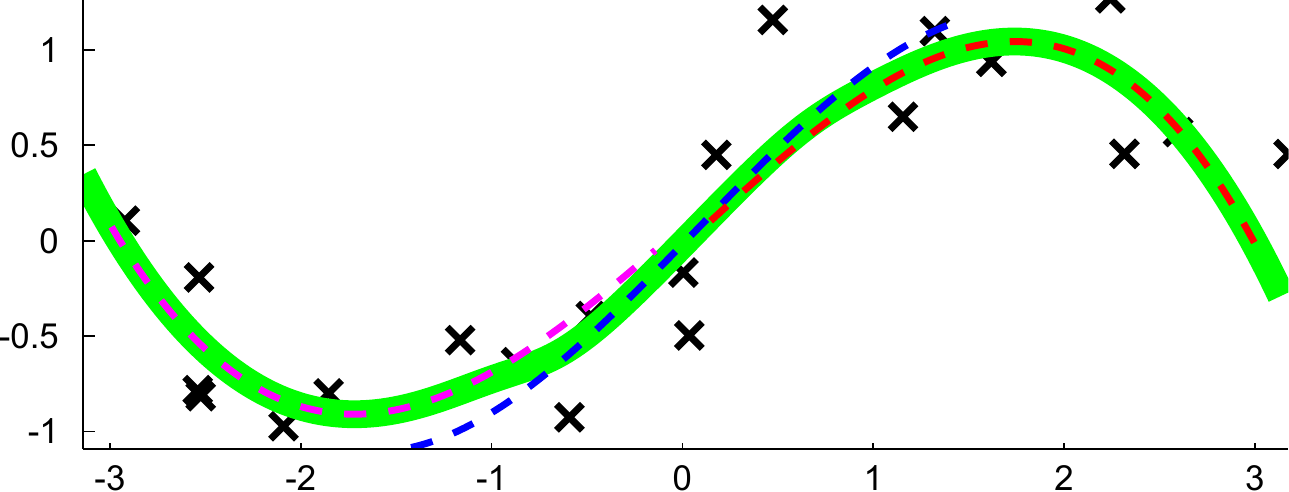}
}
\subfigure[$\lambda=1.0$]
{
	\includegraphics[width=.45\linewidth]
	{./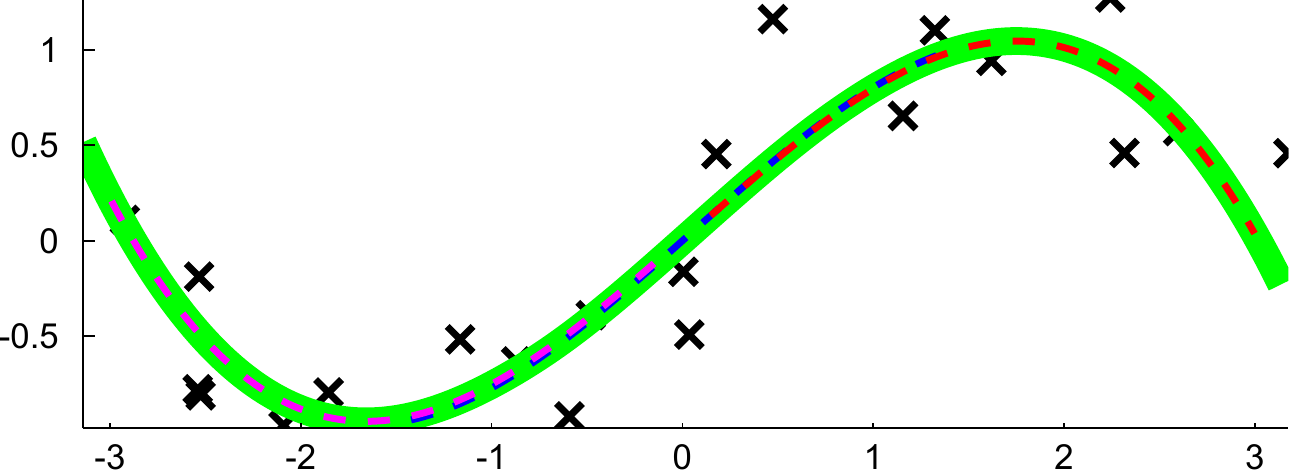}
}
\end{center}
\caption{Consistency regularizer for 1-D scatter-point approximation. Data points were generated from sinusoid function with added noise. Three patches of equal radius and weighting factors were used to partition the computational domain $\left[-\pi, \pi\right]$. The parameter $\lambda$ controls the relative contribution of the regularizer term. a) Fluctuations in the global model are related to inconsistencies between local polynomial models. b--d) By increasingly penalizing inconsistencies among local models, the global model becomes smoother.}
\label{fig:consistency_regularizer}
\end{figure}

In Figure~\ref{fig:consistency_regularizer}, we show the idea of our consistency regularizer using a simple 1-D example of approximating  scattered data points. Intuitively, the fluctuations in the global model is related to the inconsistencies among local deformation models (Figure~\ref{subfig:inconsistencies}). One may consider that consistency between two local deformation fields, $\LocalDeform_p$ and $\LocalDeform_q$, can be measured as the Euclidean distance between their parameters $\bfd_p$ and $\bfd_q$ as follows:
\begin{align}
	E_{p,q}=\left\|\tvec(\bfd_p-\bfd_q)\right\|^2.
	\label{eq:consistency_simple}
\end{align}
Here, we stretched the parameter matrices $\bfd_p$, $\bfd_q$ into vectors $\tvec(\bfd_p-\bfd_q)$.
In fact, a similar idea has been used for smoothing B-splines~\cite{eilers1996flexible} by penalizing differences of adjacent B-splines' parameters. However, for meshless models, the consistency regularizer based on (\ref{eq:consistency_simple}) incorrectly compares local deformation coefficients from different local coordinate systems. We demonstrate this effect using another 1-D example as shown in Figure~\ref{fig:coordinate_shifting}. In this example, three linear (\ie first-order polynomial) deformation models are blended into a global one. Even though the local models are themselves consistent, their coefficients are different due to different coordinate systems. The regularizer in Equation~\ref{eq:consistency_simple} failed to take the coordinate transform into consideration. To address this issue, we need to shift and align the local coordinate systems so that $\bfd_p$ and $\bfd_q$ is compared on a common ground.

\begin{figure}[t]
\begin{center}
	\includegraphics[width=.98\linewidth]
	{./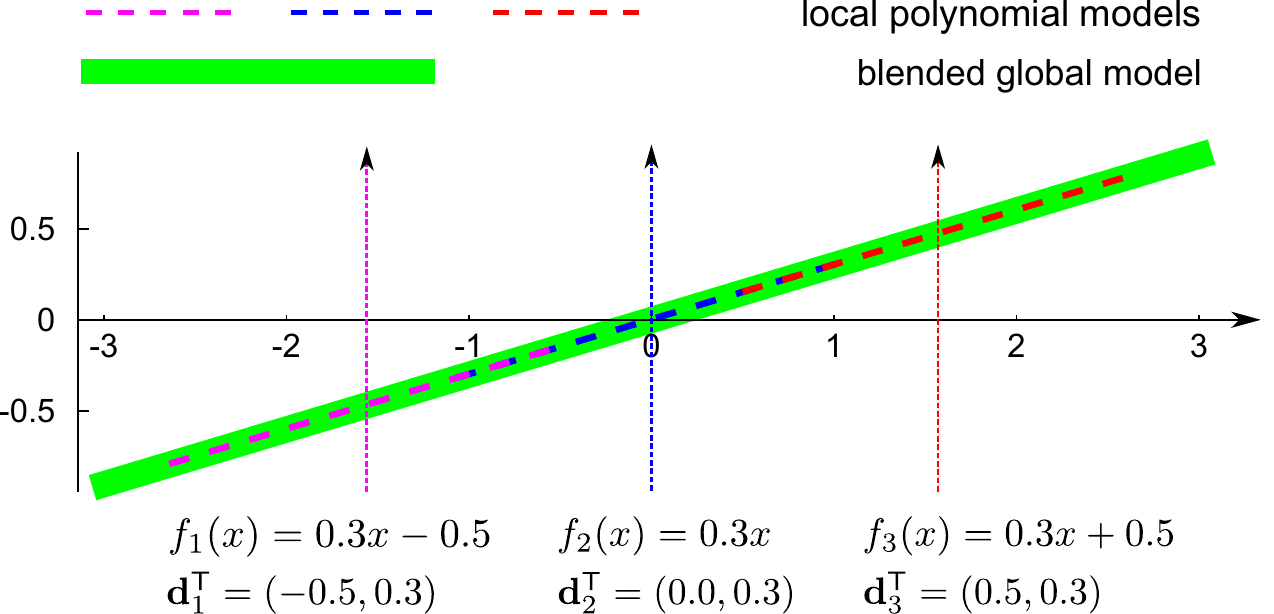}
\end{center}
\caption{A 1-D example of shifting local coordinate systems. Three local models are blended into a global one. The local models are represented using first-order polynomials, with basis functions $\BasisVector(\bfx)=\left(1,x\right){\Transpose}$. Shifting the coordinate system leads to a linear transform of local model coefficients.}
\label{fig:coordinate_shifting}
\end{figure}

Fortunately, aligning the local coordinate systems can be achieved using a linear operator. To begin with, the representation of $\bfu_q(\bfx)$ in the coordinate system of another patch $p$ can be obtained by shifting $\bfu_q(\bfx)$ as follows: 
\begin{align}
	\bfu_q(\bfx) &=\bfd_q{\Transpose} \BasisVector_q(\bfx) \notag \\
	&=\bfd_q{\Transpose} \BasisVector(\bfx-\bfq) \notag\\
	&=\bfd_q{\Transpose} \BasisVector(\bfx-\bfp+\bfp-\bfq) \notag\\
	&=\bfd_q{\Transpose} \BasisVector_p(\bfx+\underbrace{\bfp-
	\bfq}_{\boldsymbol{\Delta} {\bf x}})
	\label{eq:shifting_local}
\end{align}
As a result, the alignment of local deformation models reduces itself to  the shifting of the basis functions $\BasisVector_p(\bfx)$. Our choice of monomial basis functions makes this computation particularly simple. In general, shifting a polynomial basis $\BasisVector(\bfx)$ by $\boldsymbol{\Delta} {\bf x}=(\delta x, \delta y)$ leads to: 
\begin{align} 
	\BasisVector(\PosVector+ \boldsymbol{\Delta} {\bf x})&=
	       \left(1,x+\delta x, y+\delta y, (x+\delta x)(y+\delta y),
	       \ldots,(y+\delta y)^m\right){\Transpose}\notag\\
			&=\mathbf{S}{\Transpose}(\boldsymbol{\Delta} {\bf x}) 
			\BasisVector(\PosVector),
	\label{eq:basis_shifted}
\end{align}
where $\mathbf{S}{\Transpose}(\boldsymbol{\Delta} {\bf x})$ is the linear \emph{basis-shifting-operator}. For example, for second-order basis vector $\BasisVector(\bfx)=\left(1,x,y,xy,x^2,y^2\right){\Transpose}$, its shifting-operator is:
\begin{align}
	\mathbf{S}{\Transpose}(\boldsymbol{\Delta} {\bf x}) = \left(
	\begin{array}{cccccc}	
		1    &    \delta x   &   \delta y   &   \delta x \delta y   &    \delta^2 x    & \delta^2 y\\
		0    &    1          &   0          &   \delta y            &    2\delta x     & 0\\
		0    &    0          &   1          &   \delta x            &    0             & 2\delta y\\
		0    &    0          &   0          &   1                   &    0             & 0\\
		0    &    0          &   0          &   0                   &    1             & 0\\
		0    &    0          &   0          &   0                   &    0             & 1
	\end{array}\right).
	\label{eq:linear_shift_matrix}
\end{align}
By substituting (\ref{eq:basis_shifted}) into (\ref{eq:shifting_local}), we obtain the shifted local deformation field:
\begin{align}
\bfu_{q\rightarrow p}(\bfx)&=\bfd_q{\Transpose} \mathbf{S}{\Transpose}(\boldsymbol{\Delta} \bfx)\BasisVector_p(\PosVector) \notag \\
&=\left(\mathbf{S}(\boldsymbol{\Delta} \bfx)\bfd_q \right){\Transpose} \BasisVector_p(\PosVector).
\end{align}
Here, we used $\bfu_{q\rightarrow p}(\bfx)$ to distinguish $\bfu_{q}$ from its shifted representation.
Therefore, shifting the local coordinate system leads to linearly transformed polynomial coefficients $\bfd_{q \rightarrow p}=\mathbf{S}(\boldsymbol{\Delta} \bfx)\bfd_q$. Now, we can modify the consistency term in Equation~\ref{eq:consistency_simple} to include the shifting operator as:
\begin{align}
	\Ecpq=\left\|\tvec(\bfd_p-\mathbf{S(p-q)}\bfd_q)\right\|^2.
	\label{eq:consistency}
\end{align}
Finally, we can sum up the consistency term for every pair of neighboring patches as follows:
\begin{align}
	E^c&=\frac{1}{N}\sum_{p} \left(\sum_{q} \wqp \Ecpq \right),
	\label{eq:total_consistency}
\end{align}
where $N$ is the number of patches, and the terms $E_{p,q}^c$ are weighted by $w_q$ based on the distance between patches. The inclusion of weighting functions has two purposes. First, it effectively limits the interaction range of a patch $p$ to its neighboring patches $q$ that contain the center of $p$, otherwise $\wqp =0$ . Secondly, the weighting function gives more emphasis to the consistency of neighboring patches that are close to each other.

\subsection{Verification of the regularizer}
We have introduced a simple consistency regularizer based on the linear shifting operator in (\ref{eq:total_consistency}). Before adopting it for shape registration, we would like to briefly verify its ability of smoothing the deformation fields. We have shown in Figure~\ref{fig:consistency_regularizer} an example of 1-D approximation of scattered data points, combining the consistency regularizer with a SSD data-fitting cost function. Formally, the combined data-fitting functional is given by:
\begin{align}
E_{\text{approx}}&=\sum_{i} \left(\bfu(\bfx)-f_i(\bfx_i)\right)^2 + \lambda E^{c},
\end{align}
where $(\bfx_i, f_i(\bfx_i))$ are scattered data points, and $\lambda$ is the weight of the consistency regularizer. Figure~\ref{fig:consistency_regularizer} illustrates how the increase of  the regularizer's weight leads to smoother global approximation, and the global model approximates a polynomial fitting asymptotically. The effect of the consistency regularizer can also be analyzed theoretically. Here, we aim at providing an intuitive understanding of the regularizer rather than a rigorous proof. 

On one hand, we can show that minimizing the consistency term in (\ref{eq:consistency_simple}) can also enforce the conformity between local deformation models, \ie the continuity in their derivatives. To verify, note that the  differential operator on polynomials can again be represented as a linear operator on the coefficients $\bfd_p$. Let us denote a general differential operator and its matrix representation using the same symbol $\mathbf{D}^\eta$. Combining it with our shifting operator into the conformity term in Equation~\ref{eq:sobolev_conformity}, we obtain:
\begin{align}
\mathbf{S}^{p,q}_{k}&=\sum_{|\eta|\leq k}\int w_p(\bfx)w_q(\bfx)\left(\bfD^\eta \bfu_p(\bfx)-\bfD^\eta \bfu_{q\rightarrow p}(\bfx)\right)^2 d \bfx \notag \\
&=\sum_{|\eta|\leq k}\int w_p(\bfx)w_q(\bfx)\left(\bfD^\eta \bfd_p\BasisVector_p(\bfx)-\bfD^\eta \mathbf{S(p-q)} \bfd_q \BasisVector_p(\bfx)\right)^2 d \bfx \notag \\
&\leq  \left\|\tvec\left({\bfd_p-  \mathbf{S(p-q)} \bfd_q}\right)\right\|^2 \,\,\sum_{|\eta|\leq k} \left\|\bfD^\eta\right\|^2 \int w_p(\bfx)w_q(\bfx) \BasisVector_p^2(\bfx) d\bfx.
\end{align}
Consequently, the conformity term is upper-bounded by our consistency regularizer, and minimizing the consistency regularizer will similarly enforce the conformity condition among local models.

On the other hand, when $E^c \rightarrow 0$, the blended global model is coerced into a polynomial one, and the order of this polynomial is the same as the basis functions. For instance, if we choose the linear basis vector
 $\BasisVector=(1,x,y){\Transpose}$, then $E^c=0$ implies that $\bfu(\bfx)$ is a piecewise linear function and $\|\nabla \bfu_x\|^2+\|\nabla \bfu_y\|^2=0$. As a result, our consistency regularizer has the same asymptotic property as the classic gradient-based regularizers. 

We have shown the relationship of our consistency regularizer with both the conformity term used in a recent partition-of-unity method~\cite{makram510non} and the gradient-based regularizers for mesh models~\cite{huang2006shape}. Next, we formulate and solve the nonrigid shape registration problem, by combining the meshless deformation model with our variational chamfer-matching energy.

\section{Registration using gradient-descent method}
\label{sec:gradient}
With the meshless deformation model, the registration problem is now converted to the one of seeking the ``best'' deformation parameters $\bfd_p$, $p \in \BallSet $, by combining both the variational chamfer-matching and the consistency regularizer. The chamfer-matching energy measures the closeness of registered shape contours (data term), and the consistency regularizer penalizes fluctuations in the deformation field (smoothness term). Formally, we formulate nonrigid shape registration as the following functional minimization problem:
\begin{align}
 	\mathbf{d}_p &=\argmin_{\mathbf{d}_p} 
	E^v=\argmin_{\mathbf{d}_p} 
	\left(E^{d}(\bfu)+\lambda E^{c}\right),
	\label{eq:functional}
\end{align}
where parameter $\lambda$ defines the relative importance of the consistency term. As in previous works~\cite{huang2006shape,borgefors1988hierarchical}, minimizing (\ref{eq:functional}) can be achieved using gradient descent. The gradients can be derived as follows given by:
\begin{align}
\frac{\partial E^v}{\partial \bfd_p}&=\frac{\partial E^d(\bfu)}{\partial \bfd_p} +\lambda \frac{\partial E^c}{\partial \bfd_p}.
\end{align}
Gradients of the consistency energy $\frac{\partial E^c}{\partial \bfd_p}$ can be calculated from (\ref{eq:total_consistency}) as:
\begin{align}
\frac{\partial E^c}{\partial \bfd_p}&=\frac{1}{N}\sum_q w_q(\|\bfp-\bfq\|) \frac{\partial E_{p,q}^c}{\partial \bfd_p} \notag \\
&=\frac{1}{N}\sum_q w_q(\|\bfp-\bfq\|) \left(\bfd_p-\mathbf{S}(\bfp-\bfq)\bfd_q\right).
\label{eq:jacob_consistency_term}
\end{align} 
Gradients of the variational chamfer-matching energy can be calculated from chain-rule of variational calculus, to obtain:
\begin{align}
\frac{\partial E^d(\bfu)}{\partial \bfd_p}&=\int \mathbf{J(\bfx)} \cdot \frac{\partial \bfu(\bfx)}{\partial \bfd_p} d\bfx \notag \\
&\approx \int \mathbf{\widetilde{J}(\bfx)} \cdot r_p(\bfx) \BasisVector(\bfx) d\bfx.
%&=\int -2 \Pi_{\ShapeD(\bfx)} \frac{\partial \Pi_{\ShapeD(\bfx)}}{\partial \bfx} \ShapeS(\bfx+\bfu)+2 \Pi_{\ShapeS(\bfx+\bfu)} \frac{\partial \Pi_{\ShapeS(\bfx+\bfu)}}{\partial \bfx} \ShapeD(\bfx).
\label{eq:jacob_data_term}
\end{align}
Here, $\mathbf{J(\bfx)}$ is left-hand-side of the Euler-Lagrange equation of $E^d(\bfu)$, and we use its approximation $\ApproxJ(\bfx)$ defined in Equation~\ref{eq:vchamfer_gradient}.

Using the derived gradients, we implemented an optimization algorithm based on the Broyden-Fletcher-Goldfarb-Shanno (BFGS) method~\cite{bonnans2006numerical}. At each iteration of the algorithm, the source shape is first warped using the deformation field reconstructed from local field parameters (Equation~\ref{eq:blending_local}), and then its distance transform $\Pi_{\ShapeS(\bfx+\bfu)}$ is updated. Both the destination shape $\ShapeD$ and its distance transform $\Pi_{\ShapeD}$ remain constant. Conceptually, the gradient-descent process is described in Algorithm~\ref{algo:gradient_descent}.

\begin{algorithm}[t]
\label{algo:gradient_descent}
\KwIn{Binary maps for source and target shapes ($\ShapeS$ and $\ShapeD$), partition-of-unity patches $\BallSet$, consistency weighting factor $\lambda$.}
\KwOut{Deformation field $\bfu(\bfx)$.}
Initialize local deformation coefficients $\bfd_p^0 \leftarrow 0$, for $p \in \BallSet$\; 
Calculate distance map of the target shape $\Pi_\ShapeD$\;
\While{Not converge}
{
Calculate global deformation $\bfu^k(\bfx)$ from local deformation coefficients $\bfd_p^k$ (Equation~\ref{eq:blending_local})\;
Warp source shape as $\ShapeS^k(\bfx)=\ShapeS(\bfx+\bfu^k)$ \;
Calculate distance map $\Pi_{\ShapeS^k(\bfx)}$\;
Update Jacobian matrix (Equation~\ref{eq:jacob_consistency_term} and Equation~\ref{eq:jacob_data_term})\;
Update $\bfd_p^{k+1}$ using gradient-descent rules\;
}
\caption{Shape registration based on gradient-descent.}
\end{algorithm}

Direct warping of a shape contour might cause deterioration of image quality since the deformation $\ShapeS(\bfx+\bfu)$ is often implemented using interpolated backward mapping. Evidently, a warped 0--1 edge map may produce decimal pixel values, \ie no longer a binary image. 
%Figure~\ref{fig:edge_deterioration} shows an edge map being deformed. Due to interpolation, the deformed shape contour shown in Figure~\ref{subfig:deformed_edge} is not as sharp as the original contour   shown in Figure~\ref{subfig:deformed_edge}. 
In practice, the deterioration of edge maps is rarely an issue for our algorithm, and can be naturally handled by our variational chamfer-matching gradient (Equation~\ref{eq:vchamfer_gradient}), by relaxing the binary edge map to a gray-scale one. Furthermore, we can use an extended version of the distance transform defined on gray-scale images that takes the gray-scale edge map value into account. Nevertheless, in this paper, we want to keep the edge map binary, for better evaluation of the registration results, since the compared methods in~\cite{huang2006shape,chen_hui_2006} are based on binary representations.

%%
% Commented out because deterioration is not really clear 
% in the figure. Captions are not descriptive. In addition, 
% the explanation in the text seems clear enough to convey 
% the main idea. 
%% 
%\begin{figure}[p]
%	\begin{center}
%			
%		  \subfigure[hand shape]
%			{
%					\includegraphics[width=.18\linewidth]
%		  		{./figs/edge_quality/hand.pdf}}\\
%			\subfigure[source edge]
%			{		
%					\label{subfig:source_edge}
%					\includegraphics[width=.18\linewidth]
%		  		{./figs/edge_quality/source_edge.pdf}}
%		  \subfigure[deformed edge]
%			{
%					\label{subfig:deformed_edge}
%					\includegraphics[width=.18\linewidth]
%		  		{./figs/edge_quality/deformed_edge.pdf}}\\
%		  \subfigure[]
%			{\includegraphics[width=.18\linewidth]
%		  		{./figs/edge_quality/deformed_src.pdf}}
%		  \subfigure[]
%			{\includegraphics[width=.18\linewidth]
%		  		{./figs/edge_quality/detected_edge.pdf}}
%	\end{center}
%	\caption{Handling deterioration of edge map quality under deformation.}
%	\label{fig:edge_deterioration}
%\end{figure}

In previous implementation of distance-based registration~\cite{huang2006shape}~\footnote{Available for download at \url{http://www.cse.lehigh.edu/~huang/downloads.htm}}, the shape contours are represented as connected line segments, that is basically a vectorized representation. Warping of the shape contours is achieved by deforming the end points of these line segments. Thus, interpolation and resulting  deterioration of edge maps can be avoided. This approach may be adopted for our method, but is not straightforward to implement. Fortunately, in many applications, the input shape contours are detected from grayscale source images. In these cases, we simply warp the source images, and then apply an edge-detection step to recover the deformed binary contour. 
%Figure~\ref{fig:edge_deterioration} demonstrates the difference between directly warping edge maps and detecting edge maps from warped grayscale image. 
To summarize, we modify the registration Algorithm~\ref{algo:gradient_descent} to obtain an edge-preserving one given by Algorithm~\ref{algo:edge_preserve} in which the main differences are highlighted in italic. Both the distance transform and edge detection are highly efficient  operations, with minimal influence on the algorithm's computational cost.

\begin{algorithm}[t]
\label{algo:edge_preserve}
\KwIn{{\it Source and target images ($I_s$ and $I_t$)}, partition-of-unity patches $\BallSet$, consistency weighting factor $\lambda$.}
\KwOut{Deformation field $\bfu(\bfx)$.}
Initialize local deformation coefficients $\bfd_p^0 \leftarrow 0$, for $p \in \BallSet$\; 
{\it Detect target shape contour from the image $\ShapeD\leftarrow I_t$\;}
Calculate distance map of the target shape $\Pi_\ShapeD$\;
\While{Not converge}
{
Calculate global deformation $\bfu^k(\bfx)$ from local deformation coefficients $\bfd_p^k$ (Equation~\ref{eq:blending_local})\;
{\it Warp source image as $I_s^k(\bfx)=I_s(\bfx+\bfu^k)$ \;}
{\it Detect source shape contour $\ShapeS^k \leftarrow I_s^k$ \;}
Calculate distance map $\Pi_{\ShapeS^k(\bfx)}$\;
Update Jacobian matrix (Equation~\ref{eq:jacob_consistency_term} and Equation~\ref{eq:jacob_data_term})\;
Update $\bfd_p^{k+1}$ using gradient-descent rules\;
}
\caption{Gradient-descent with edge-preserving.}
\end{algorithm}

Finally, we adopt the hierarchical multi-scale registration strategy used in previous nonrigid registration methods~\cite{huang2006shape,taron2009registration,makram510non}  in order to avoid local minima (\ie a coarse-to-fine approach). Briefly, multi-scale registration starts with sub-sampled images (at the coarser level). The deformation obtained at coarse scale is then used to initialize the algorithm at finer scales.

\section{Evaluation}
\label{sec:experiments}
In this section, we evaluate our method on a number of experiments, and compare it with the methods by Huang~\etal~\cite{huang2006shape} and Chen and Bhanu~\cite{chen_hui_2006}. As described in Section~\ref{sec:distance_functions}, Huang's method is based on distance-map representation and a B-spline deformation model, while Chen and Bhanu's method is based on shape-context and a thin-plate deformation model. Our goal in this comparison is to demonstrate two aspects of our method. First, by removing the narrow-band function, our chamfer-matching functional can help improving the registration accuracy of distance-based methods at high-curvature regions. Secondly, by replacing spline-based mesh models with our partition-of-unity model, we can handle challenging shape deformations that may be difficult for existing methods. We tested our method on the Brown University shape dataset~\cite{Sharvit:etal:JVIS:1998}, that was also used by many previous methods~\cite{huang2006shape, chen_hui_2006, paragios2003non}. Additionally, we demonstrate an application of our method to image sequences of cell mitosis.

\begin{figure}[h!!!!]
	\begin{center}
			{\includegraphics[width=.12\linewidth]
		  		{./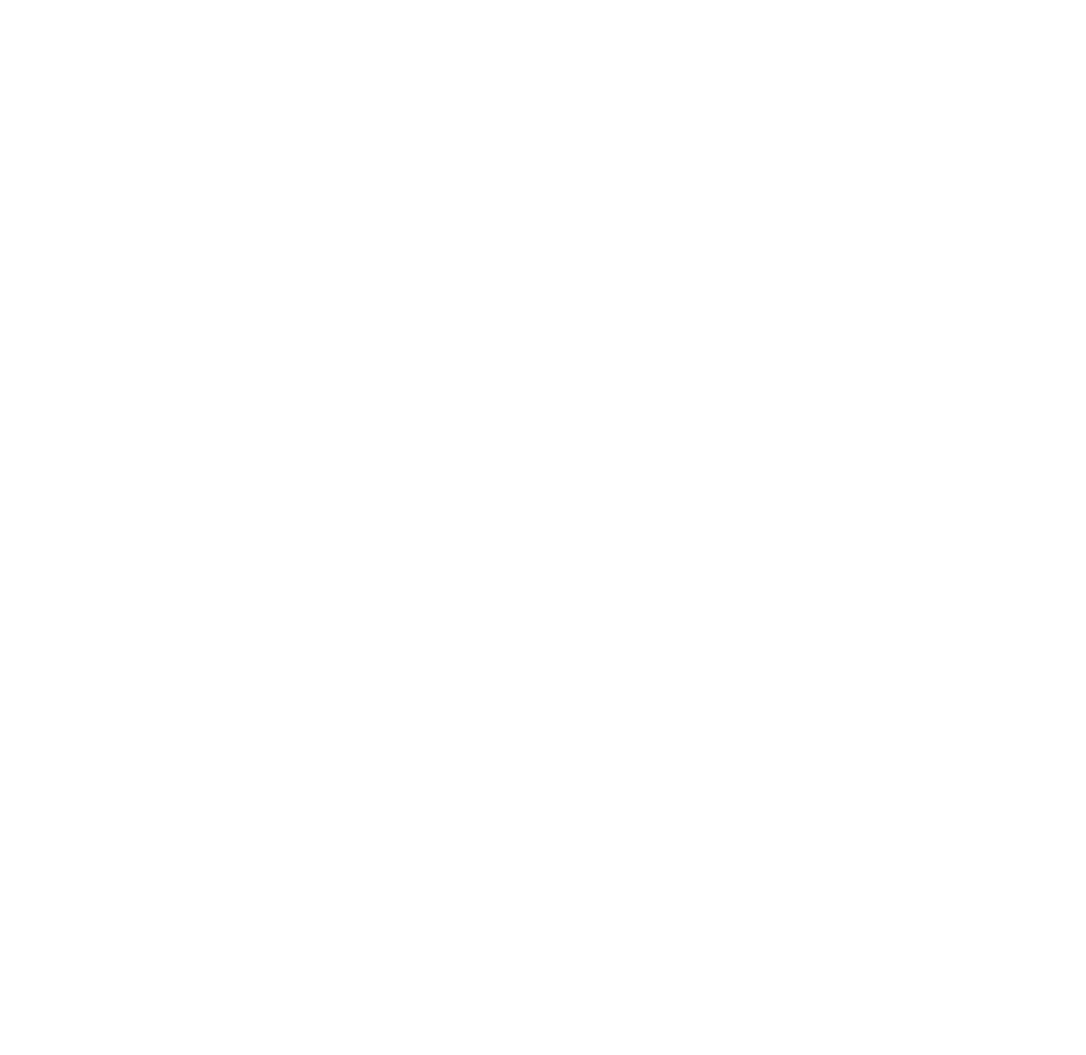}}
			{\includegraphics[width=.12\linewidth]
		  		{./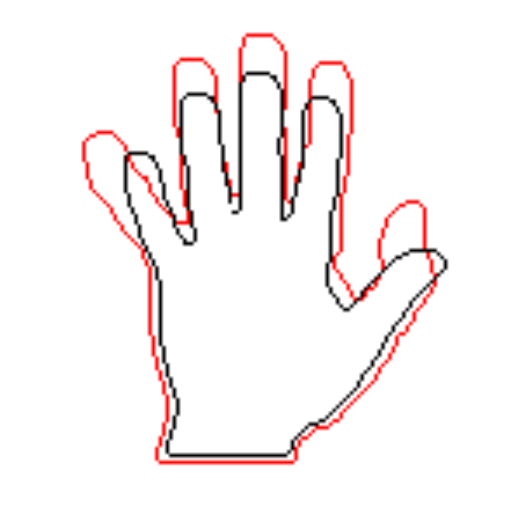}}
		  {\includegraphics[width=.12\linewidth]
		  		{./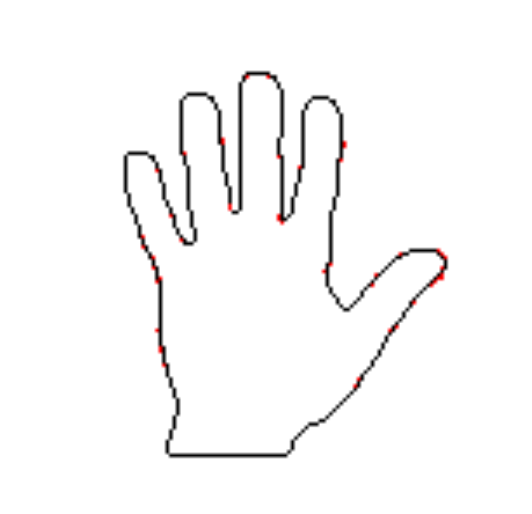}}
		  {\includegraphics[width=.12\linewidth]
		  		{./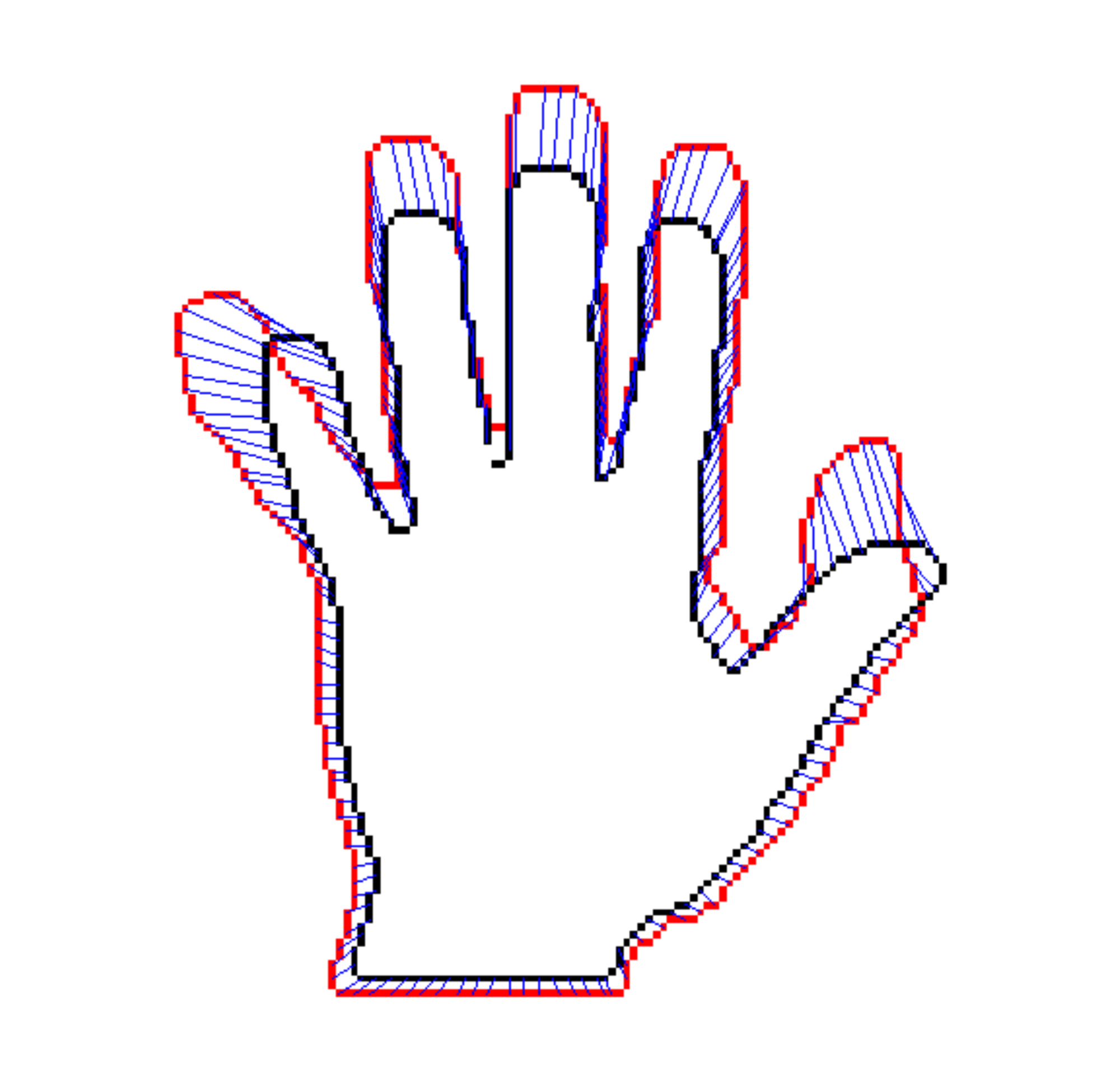}}
		  {\includegraphics[width=.12\linewidth]
		  		{./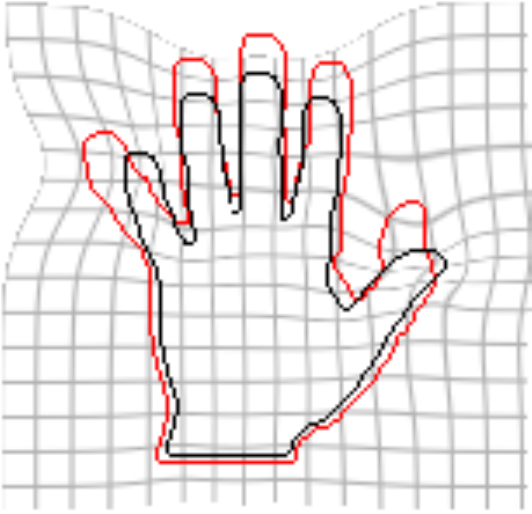}}\\
			{\includegraphics[width=.12\linewidth]
		  		{./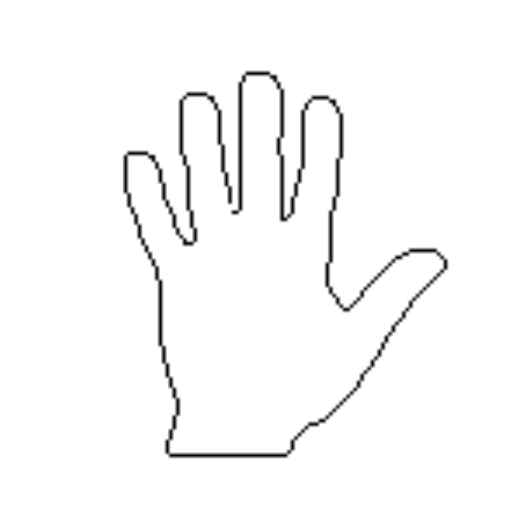}}
			{\includegraphics[width=.12\linewidth]
		  		{./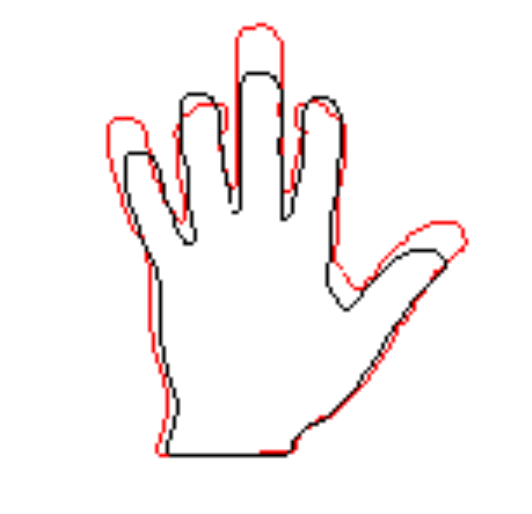}}
		  {\includegraphics[width=.12\linewidth]
		  		{./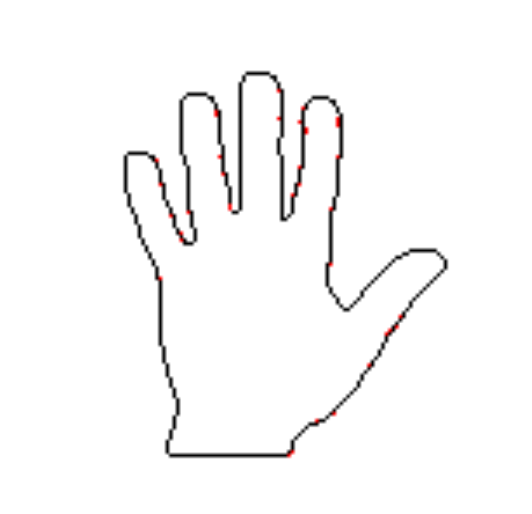}}
		  {\includegraphics[width=.12\linewidth]
		  		{./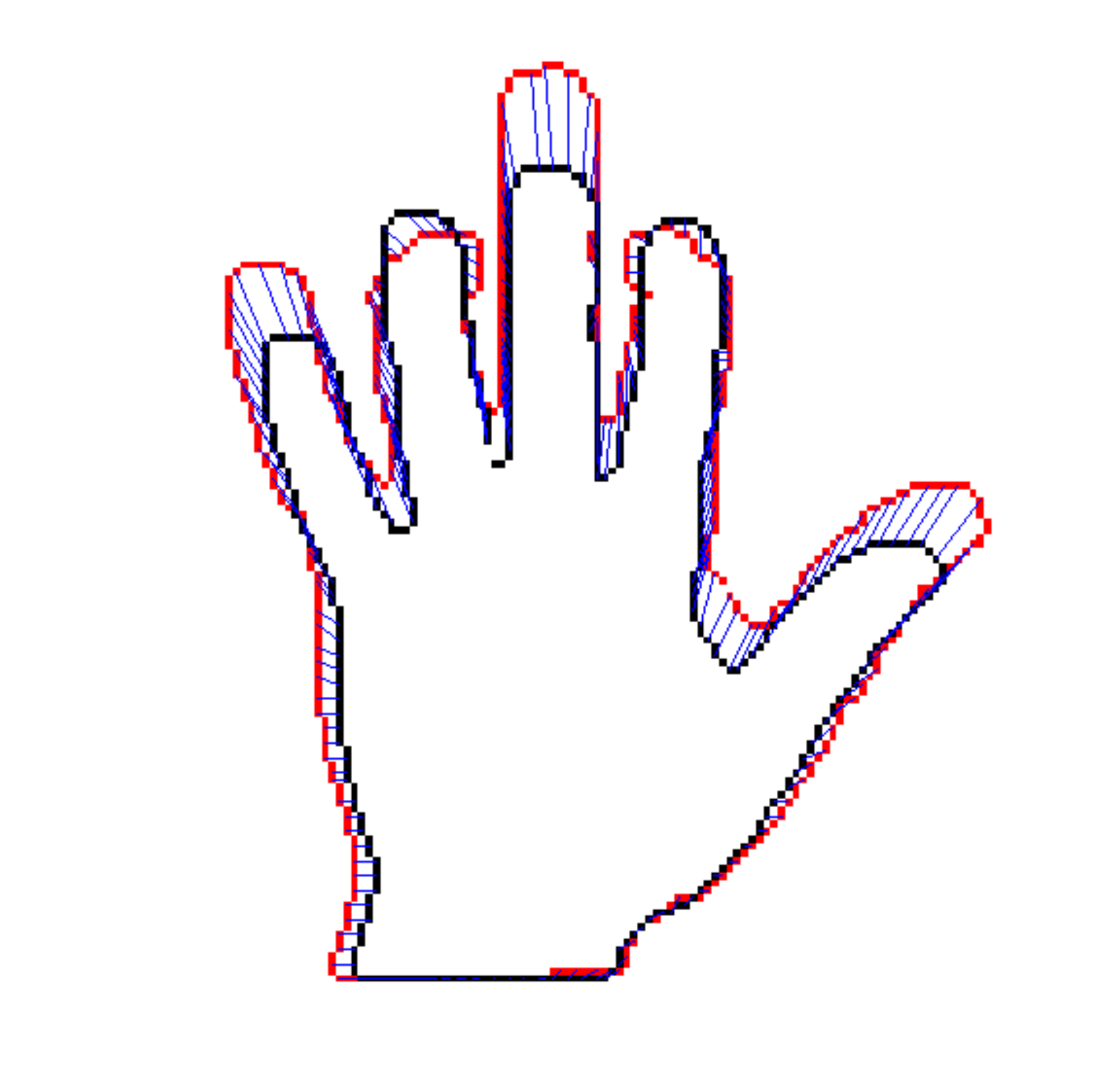}}
		  {\includegraphics[width=.12\linewidth]
		  		{./figs/sihouette/handbent1_to_hand/grid_warp.pdf}}\\
			{\includegraphics[width=.12\linewidth]
		  		{./figs/sihouette/blank.pdf}}
			{\includegraphics[width=.12\linewidth]
		  		{./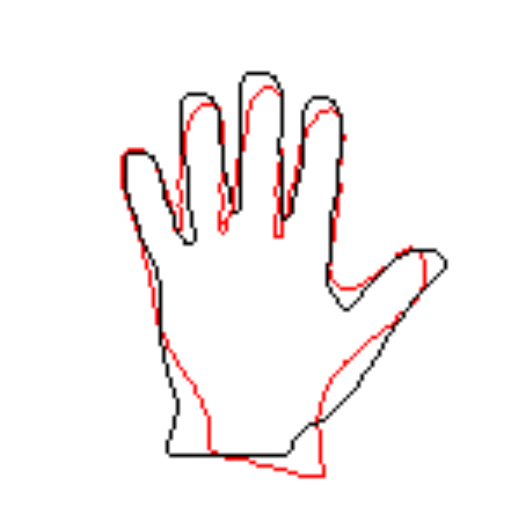}}
		  {\includegraphics[width=.12\linewidth]
		  		{./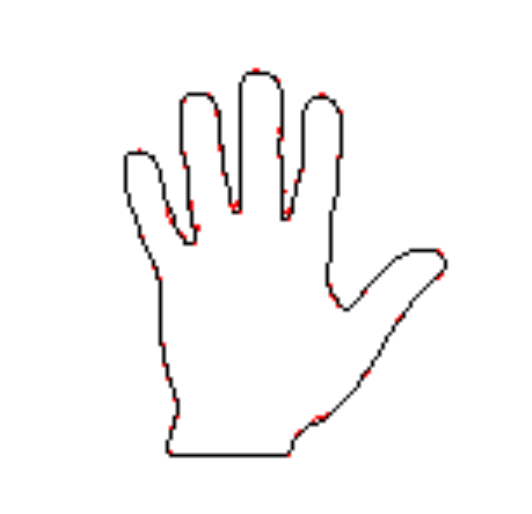}}
		  {\includegraphics[width=.12\linewidth]
		  		{./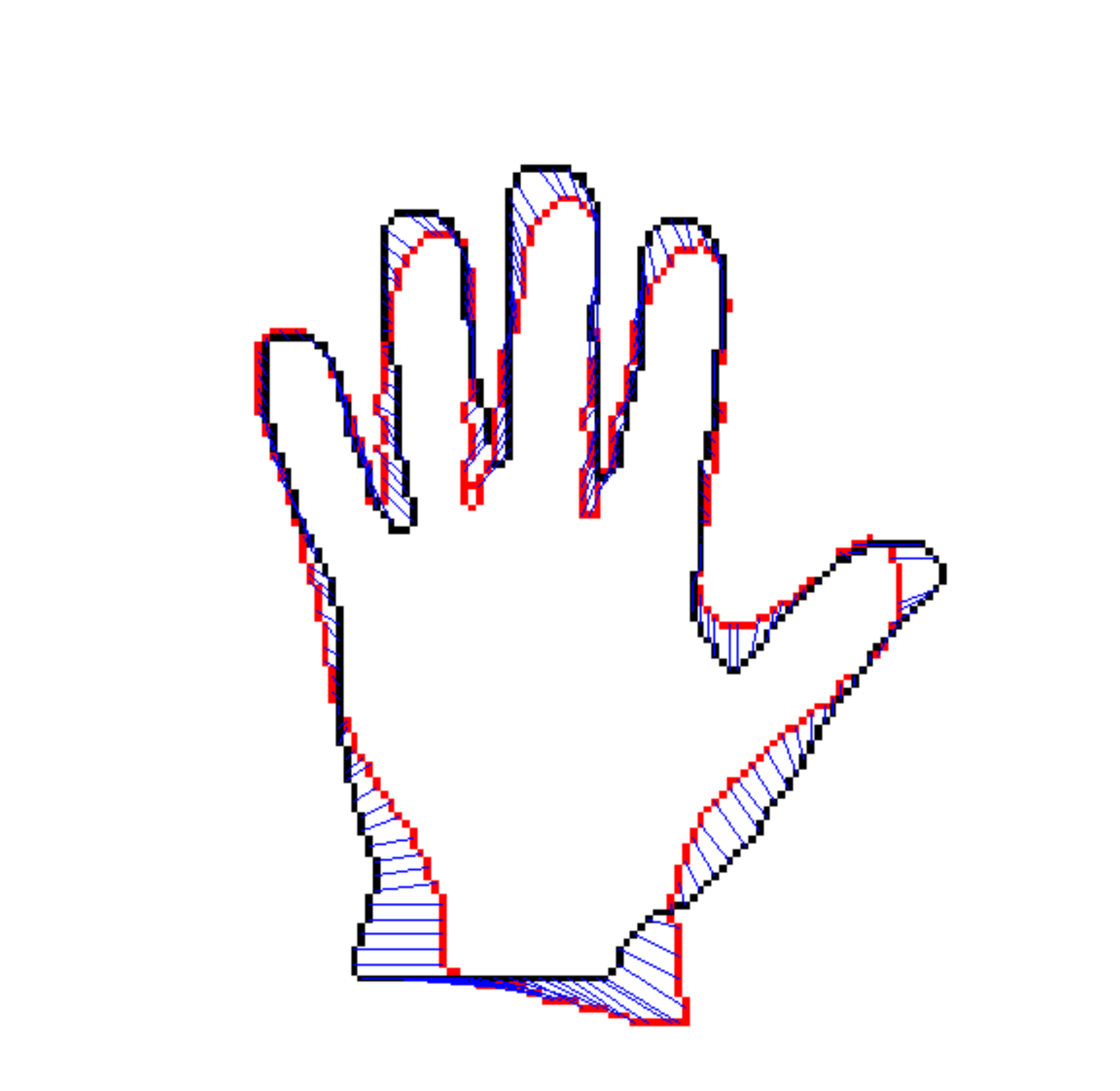}}
		  {\includegraphics[width=.12\linewidth]
		  		{./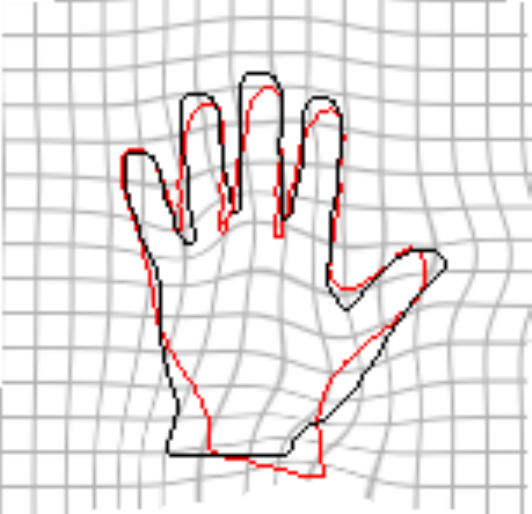}}\\
		  		{\includegraphics[width=.12\linewidth]
		  		{./figs/sihouette/blank.pdf}}
			{\includegraphics[width=.12\linewidth]
		  		{./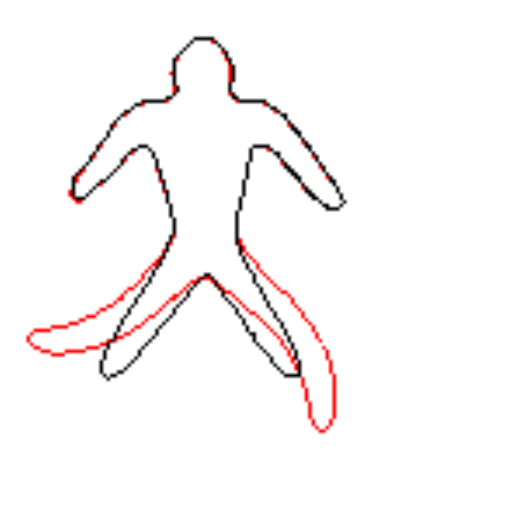}}
		  {\includegraphics[width=.12\linewidth]
		  		{./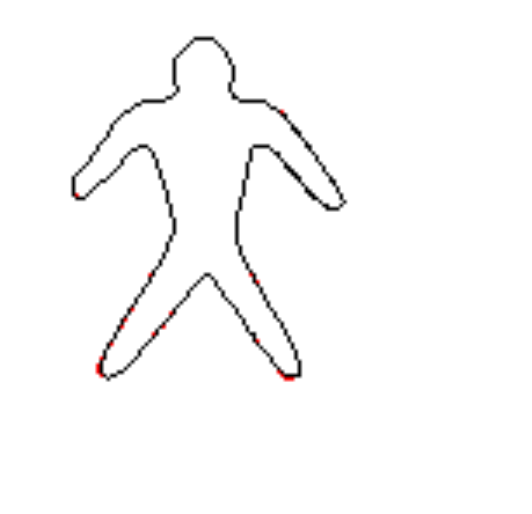}}
		  {\includegraphics[width=.12\linewidth]
		  		{./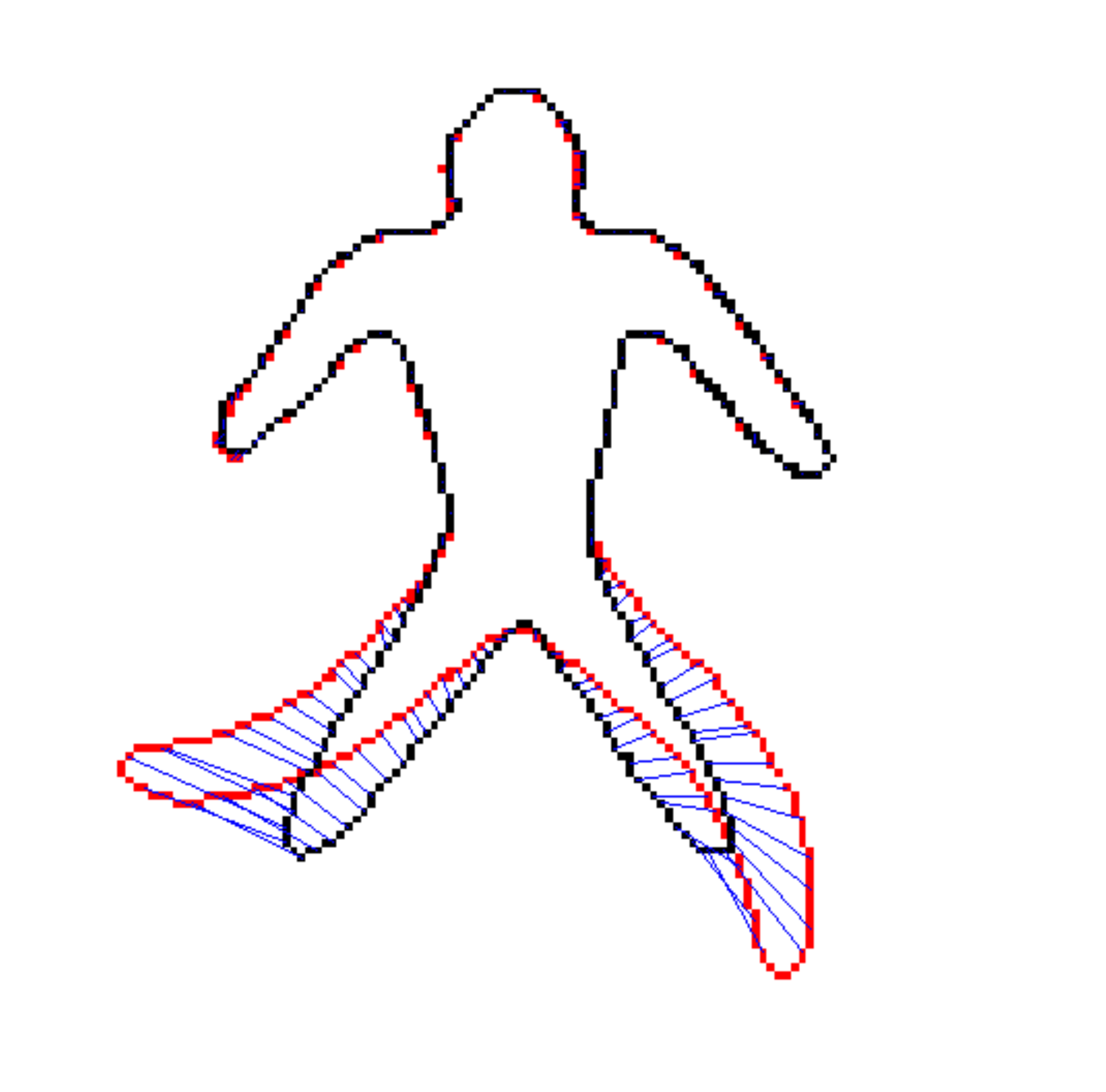}}
		  {\includegraphics[width=.12\linewidth]
		  		{./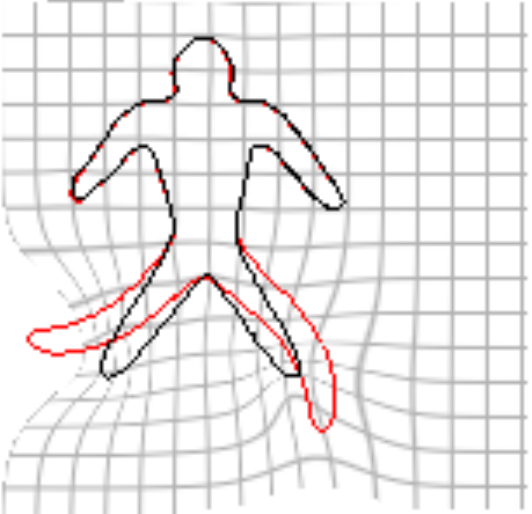}}\\
			{\includegraphics[width=.12\linewidth]
		  		{./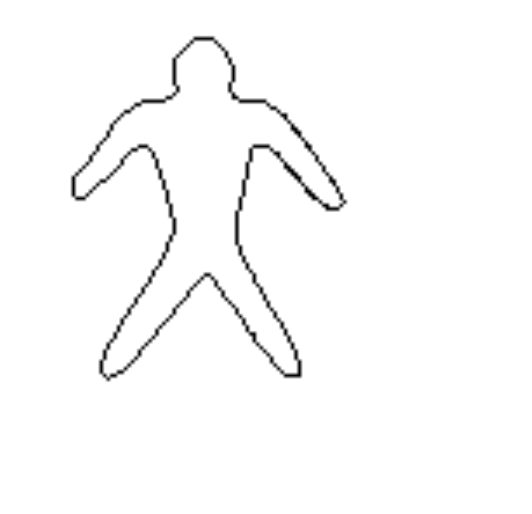}}
			{\includegraphics[width=.12\linewidth]
		  		{./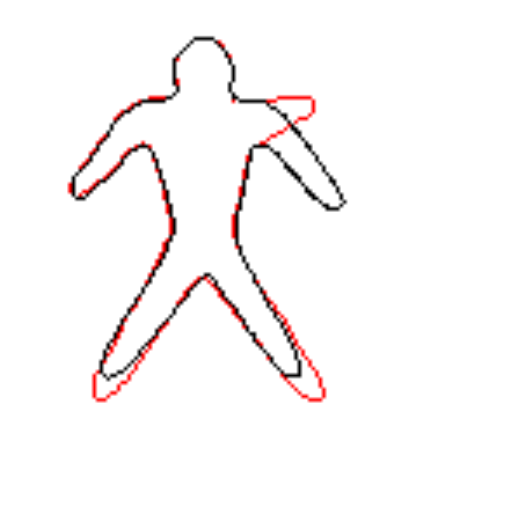}}
		  {\includegraphics[width=.12\linewidth]
		  		{./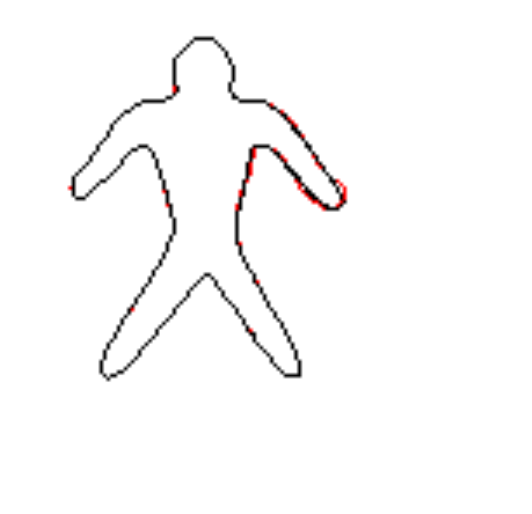}}
		  {\includegraphics[width=.12\linewidth]
		  		{./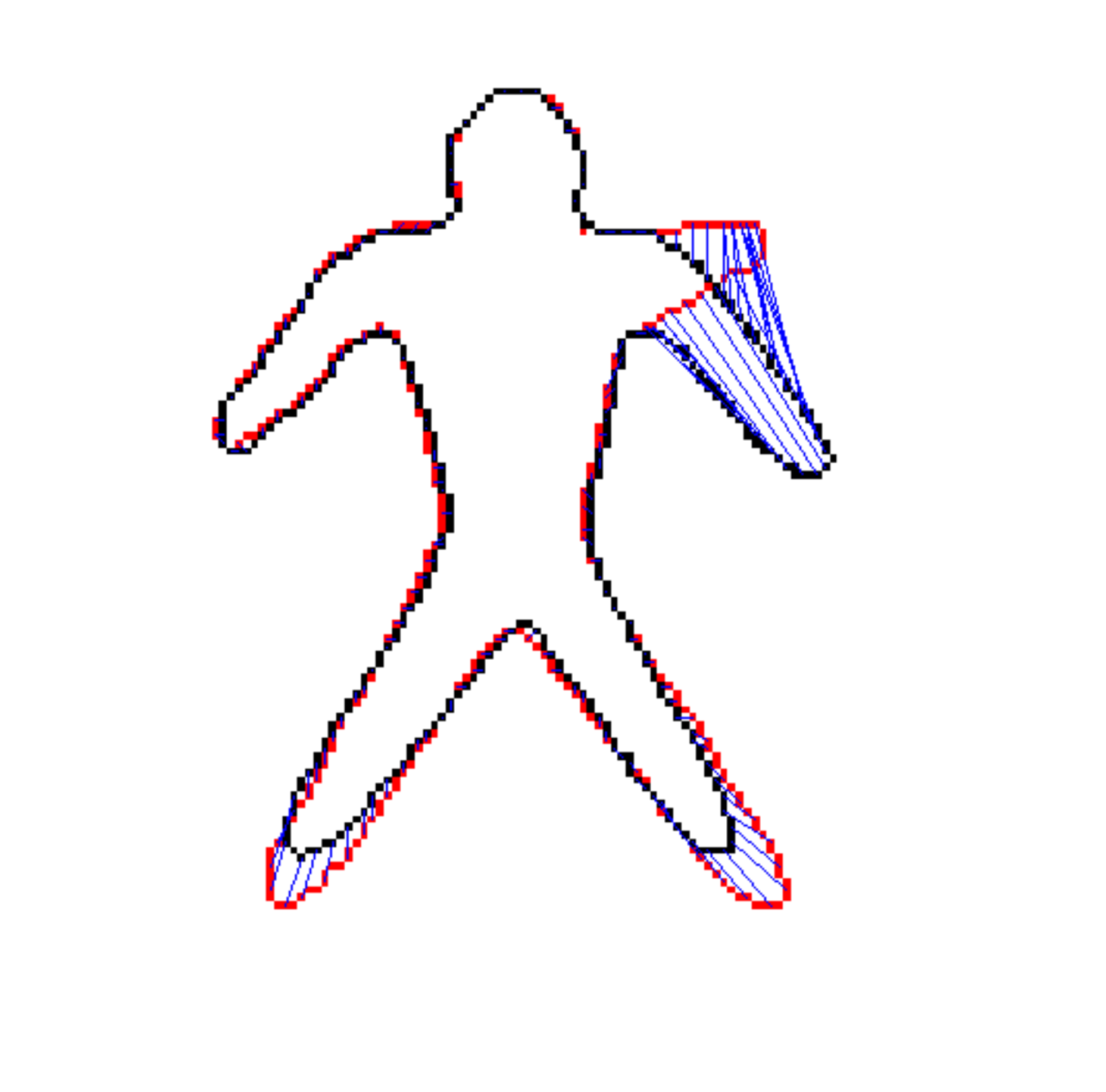}}
		  {\includegraphics[width=.12\linewidth]
		  		{./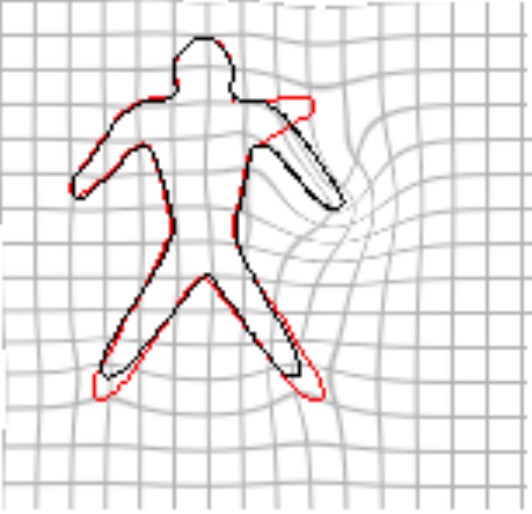}}\\
			{\includegraphics[width=.12\linewidth]
		  		{./figs/sihouette/blank.pdf}}
			{\includegraphics[width=.12\linewidth]
		  		{./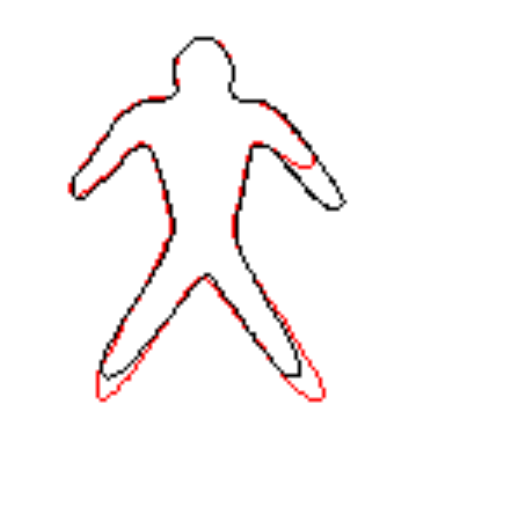}}
		  {\includegraphics[width=.12\linewidth]
		  		{./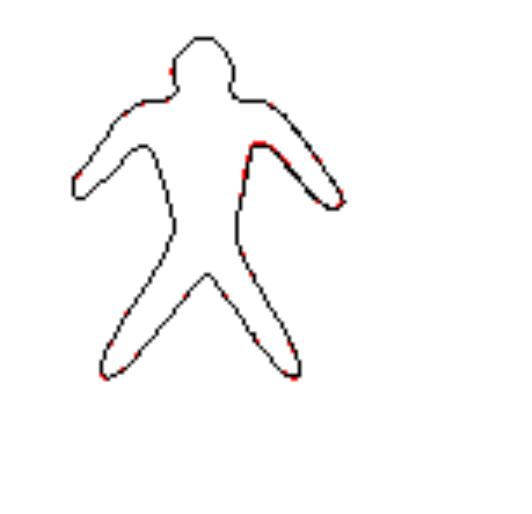}}
		  {\includegraphics[width=.12\linewidth]
		  		{./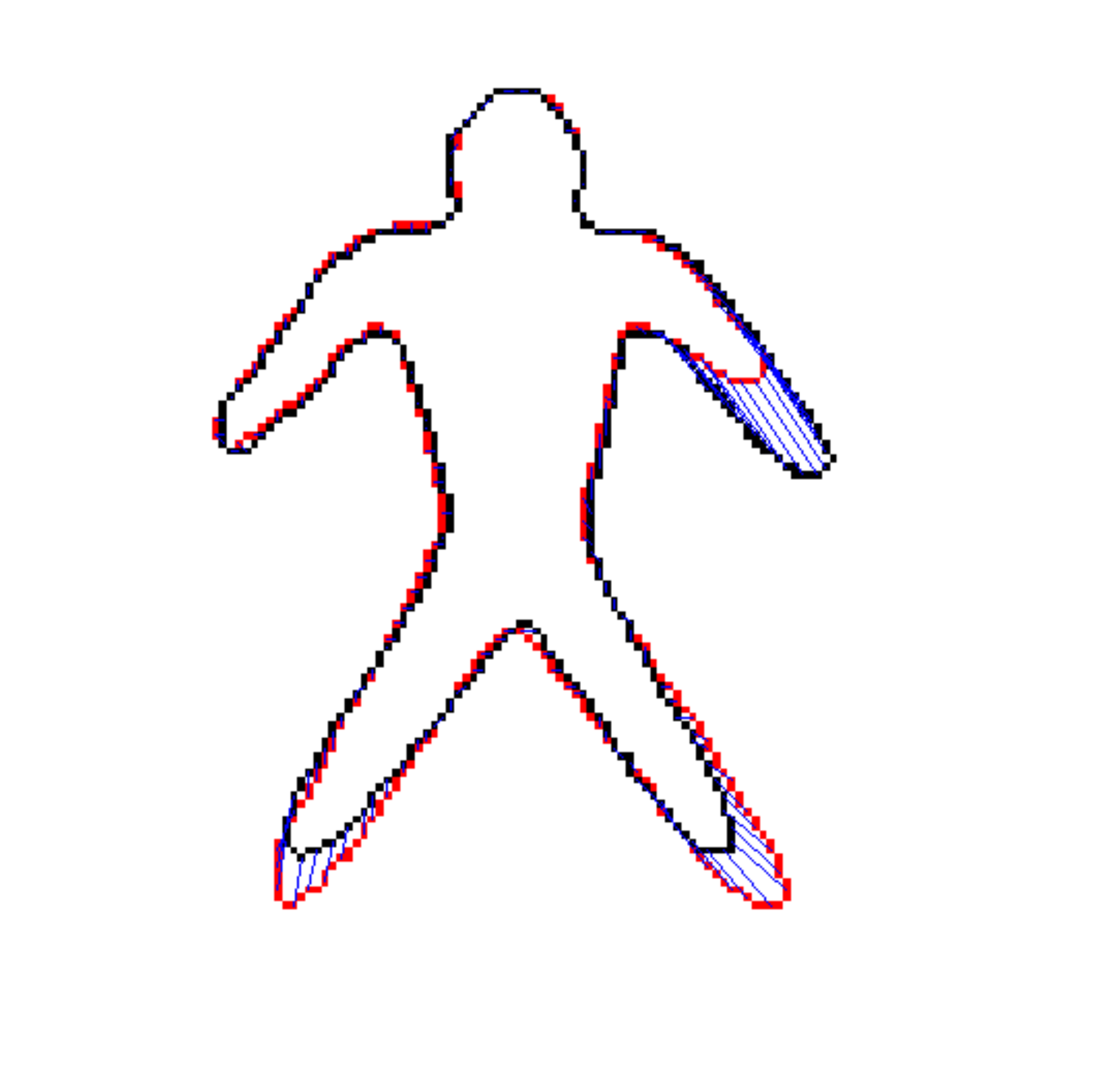}}
		  {\includegraphics[width=.12\linewidth]
		  		{./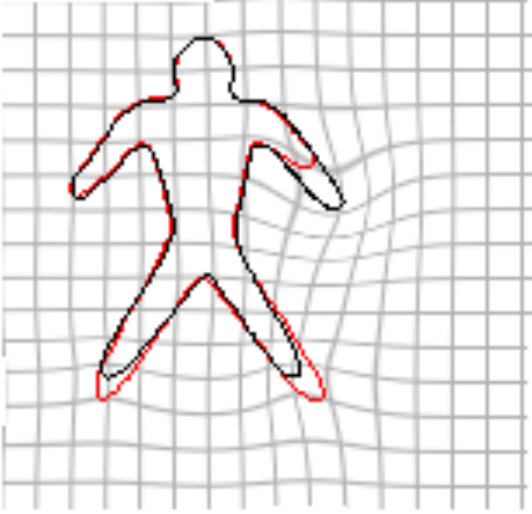}}\\
		  		{\includegraphics[width=.12\linewidth]
		  		{./figs/sihouette/blank.pdf}}
			{\includegraphics[width=.12\linewidth]
		  		{./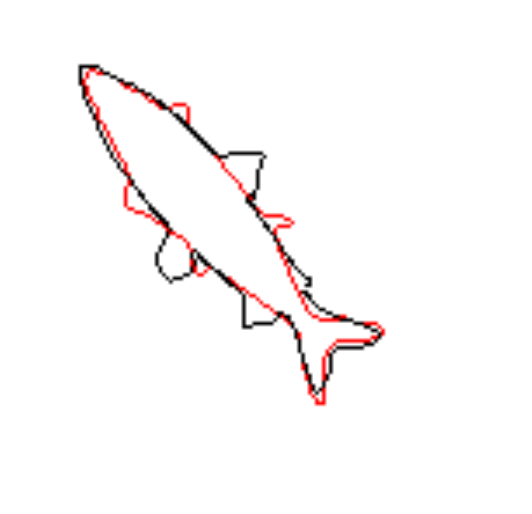}}
		  {\includegraphics[width=.12\linewidth]
		  		{./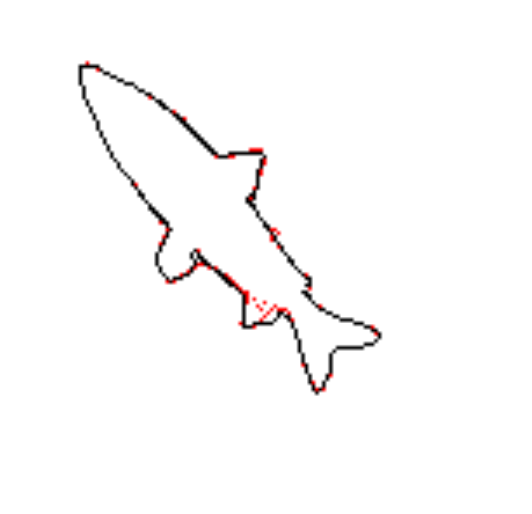}}
		  {\includegraphics[width=.12\linewidth]
		  		{./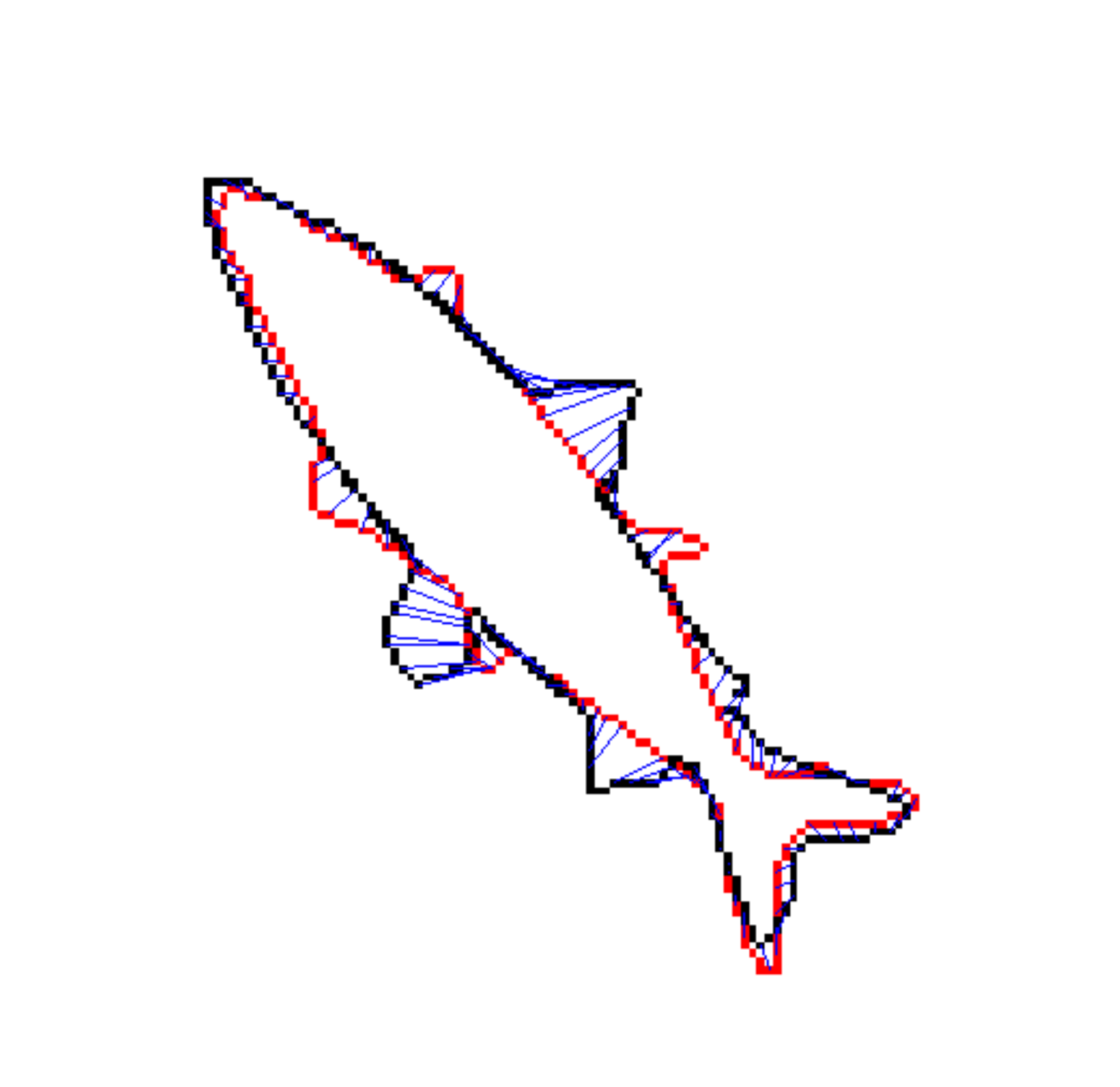}}
		  {\includegraphics[width=.12\linewidth]
		  		{./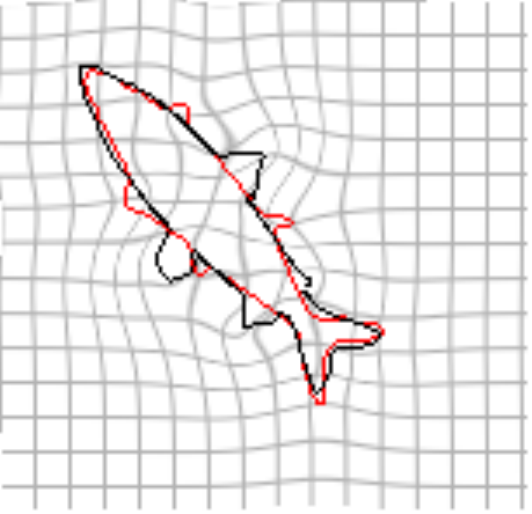}}\\
			{\includegraphics[width=.12\linewidth]
		  		{./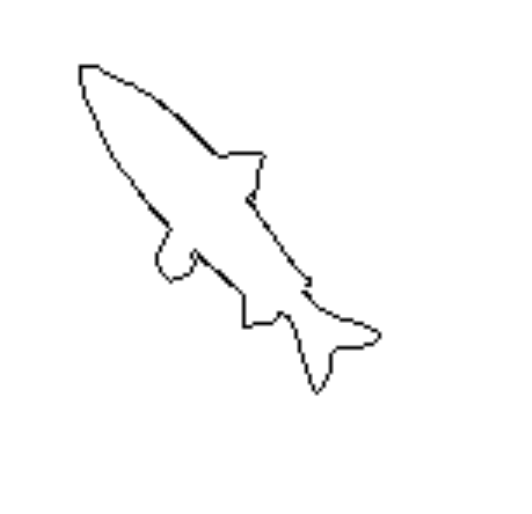}}
			{\includegraphics[width=.12\linewidth]
		  		{./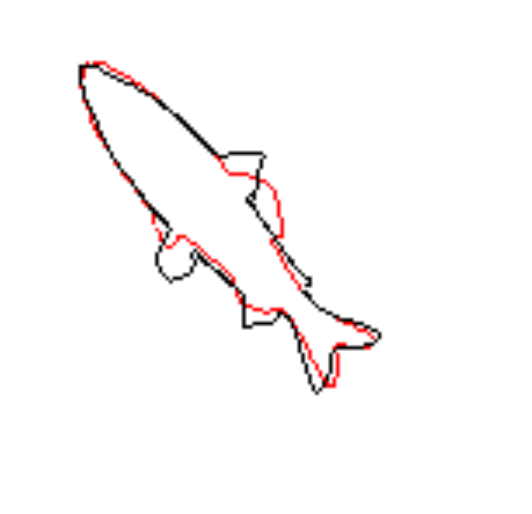}}
		  {\includegraphics[width=.12\linewidth]
		  		{./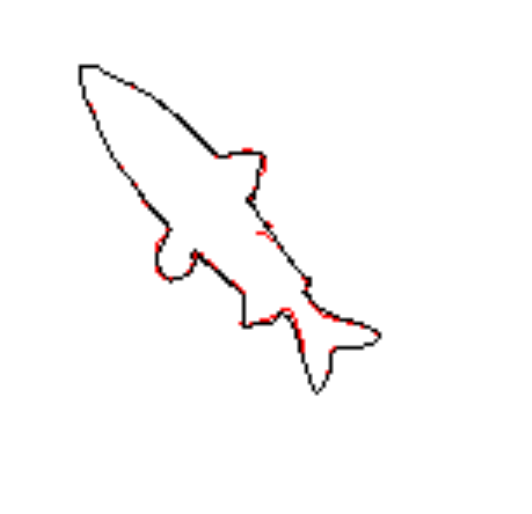}}
		  {\includegraphics[width=.12\linewidth]
		  		{./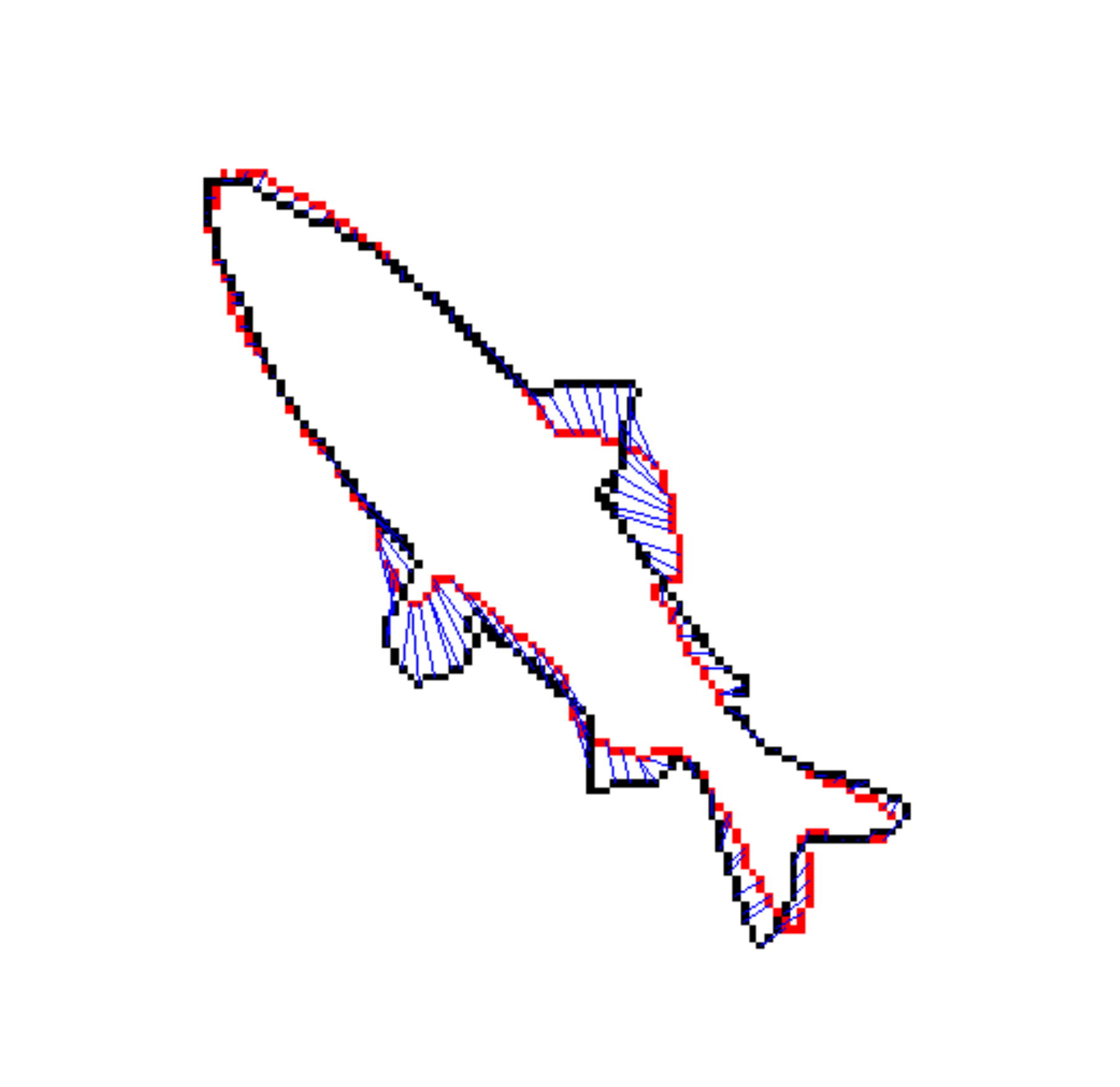}}
		  {\includegraphics[width=.12\linewidth]
		  		{./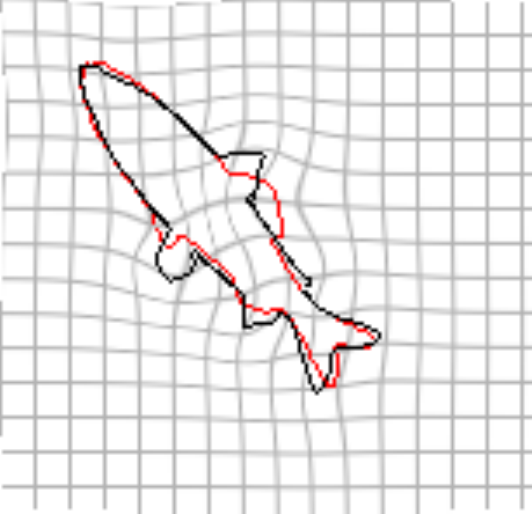}}\\
		  		\subfigure[]
			{\includegraphics[width=.12\linewidth]
		  		{./figs/sihouette/blank.pdf}}
		  		\subfigure[]
			{\includegraphics[width=.12\linewidth]
		  		{./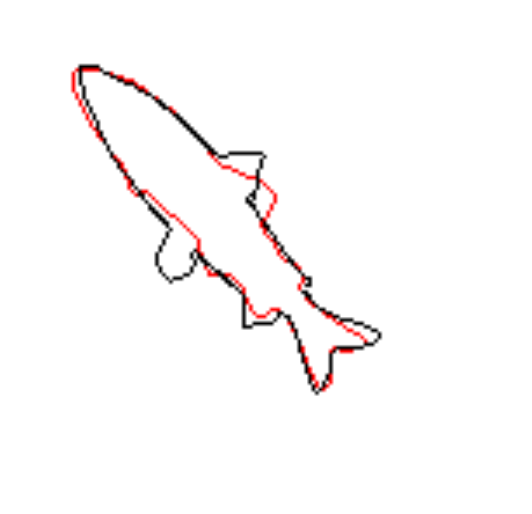}}
		  		\subfigure[]
		  {\includegraphics[width=.12\linewidth]
		  		{./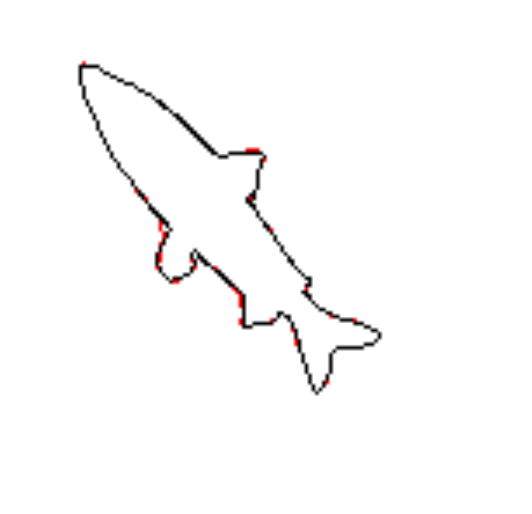}}
		  		\subfigure[]
		  {\includegraphics[width=.12\linewidth]
		  		{./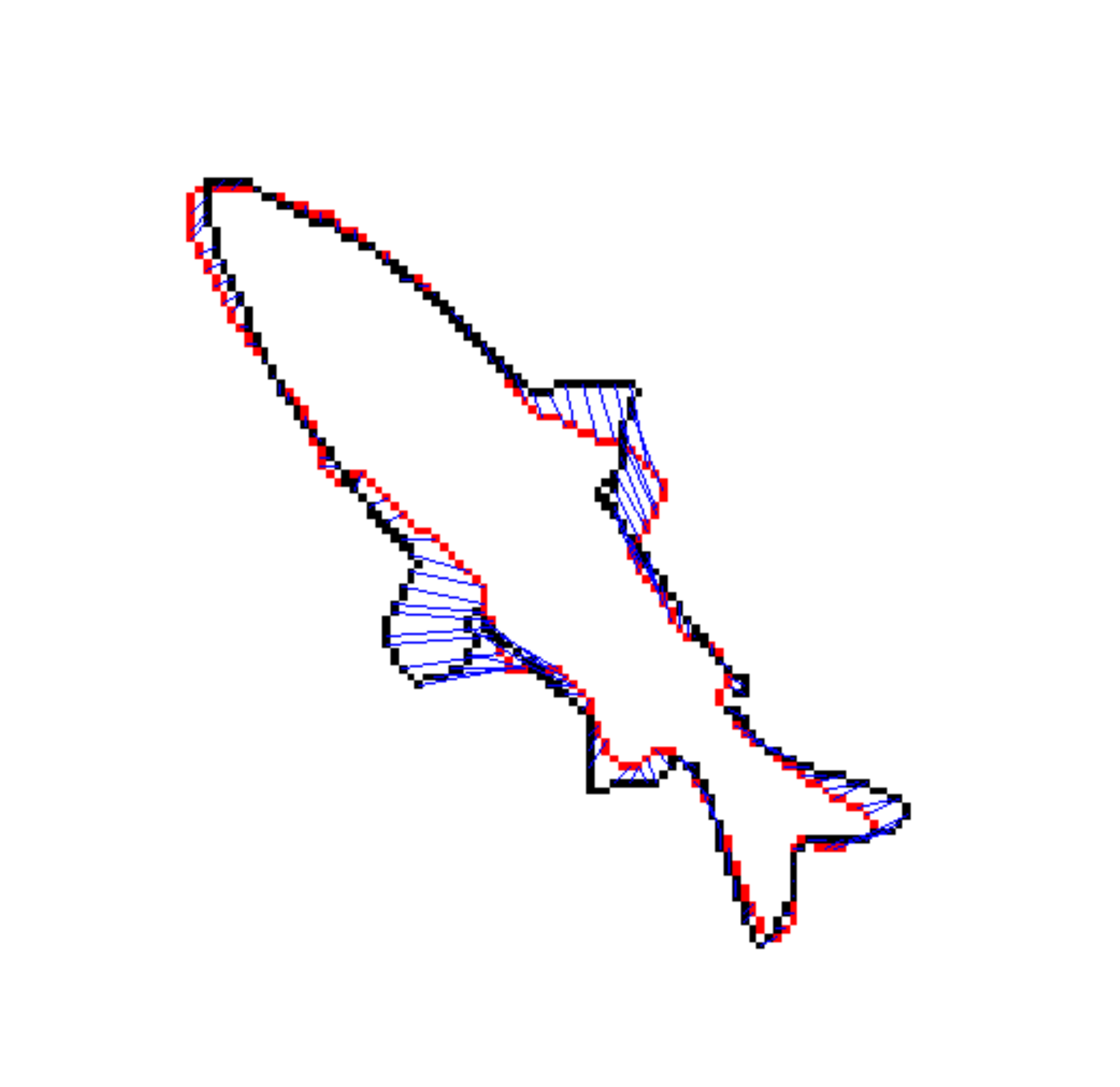}}
		  		\subfigure[]
		  {\includegraphics[width=.12\linewidth]
		  		{./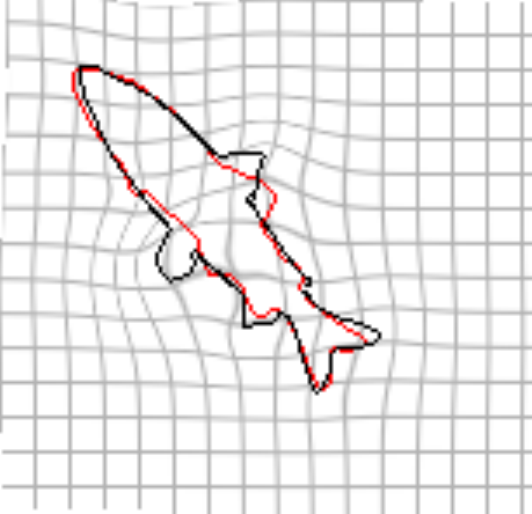}}
	\label{fig:brown_wei}
	\end{center}
	\caption{Brown university shape dataset. (a) Target images. (b) Overlaid target (in black) and source images (in red) before registration. (c) After registration. (d) Correspondence between target and source images. (e) Deformation fields as distorted grids.}
	\label{fig:brown_dataset}
\end{figure}

\subsection{Brown University shape dataset}
Our method requires some parameters to be set. First, we empirically set the consistency regularizer's weighting factor $\lambda=0.001$. Secondly, the representation power of our meshless model depends on both the density and distribution the image domain partitions. To simplify, we began with regularly distributed patches with radius $r=20$ (pixels) and the spaces between patches $d=6$.  We normalized the images of Brown University shape dataset to sizes $150\times 150$ pixels. Finally, we globally aligned the shapes using the rigid registration method implemented in~\cite{kroon_isbi2009}.  Figure~\ref{fig:brown_dataset} shows registration results produced using our method. For a quantitative evaluation, we used the average mutual distance between the registered shape contours~\cite{chen_hui_2006}. 
%However, average distance may not reflect the registration quality at individual pixels. In addition, we calculate the variance and maximum of the mutual distance. 
For qualitative comparison, we also visualize the results in a similar manner as in~\cite{huang2006shape,paragios2003non,chen_hui_2006}. As in~\cite{chen_hui_2006}, we selected three different shapes (person\footnote{Named dude in the original dataset.}, hand, and fish). The average pixel distances after local registration for person, fish, and hand were 0.14, 0.24, and 0.08, respectively. This result was better than the one reported in~\cite{chen_hui_2006} (0.59, 0.45 and 0.61, respectively). Additionally, for most cases, the maximum pixel distance was less than 2 pixels showing that registration quality was consistent along contours. We believe that the use of both the chamfer-matching functional and meshless model contributed to the improved accuracy. On one hand, chamfer-matching functional is a more direct measurement of shape contour's alignment than shape-context descriptors. On the other hand, the partition-of-unity model is numerically more accurate than thin-plate splines that use radial-basis functions.

\begin{figure}[t!!!]
	\begin{center}
			\subfigure[Huang~\etal]
			{\includegraphics[width=.42\linewidth]
		  		{./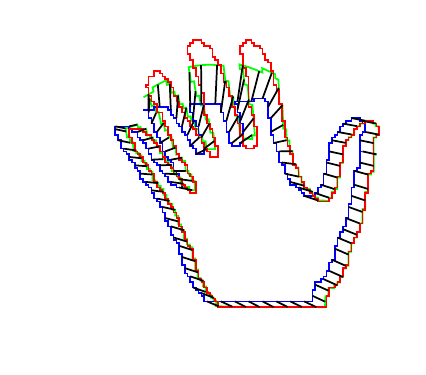}}
		  \subfigure[Ours]
		  {\includegraphics[width=.42\linewidth]
		  		{./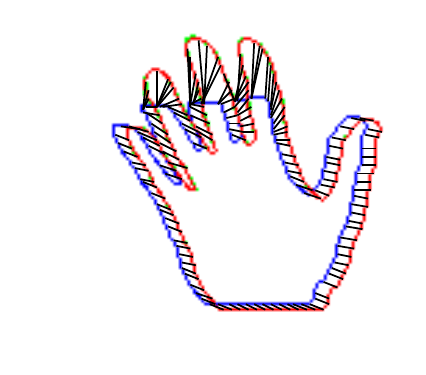}}
			\\
			\subfigure[Before]
			  {\includegraphics[width=.23\linewidth]
		  		{./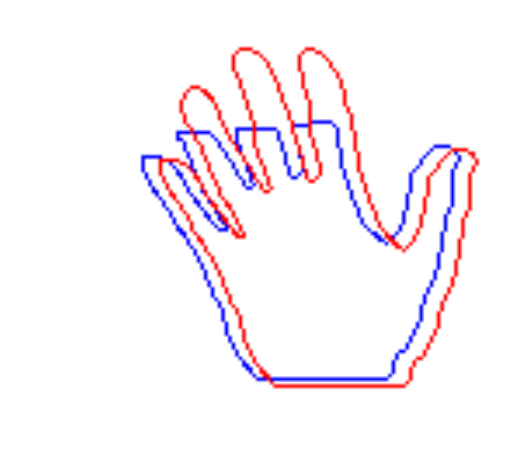}}
			\subfigure[After]
			  {\includegraphics[width=.23\linewidth]
		  		{./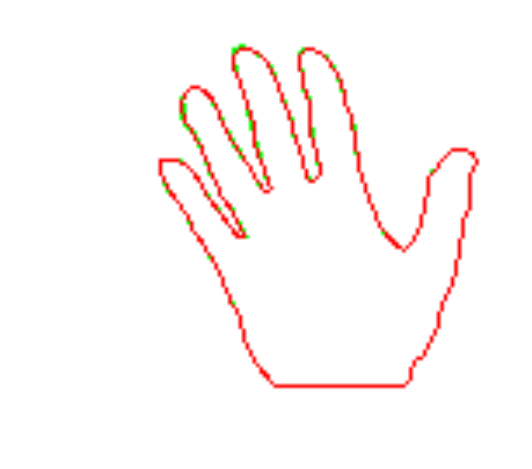}}
			\subfigure[Huang~\etal]
			  {
			  	\label{subfig:huang_hand_grid}
			  	\includegraphics[width=.23\linewidth]
		  		{./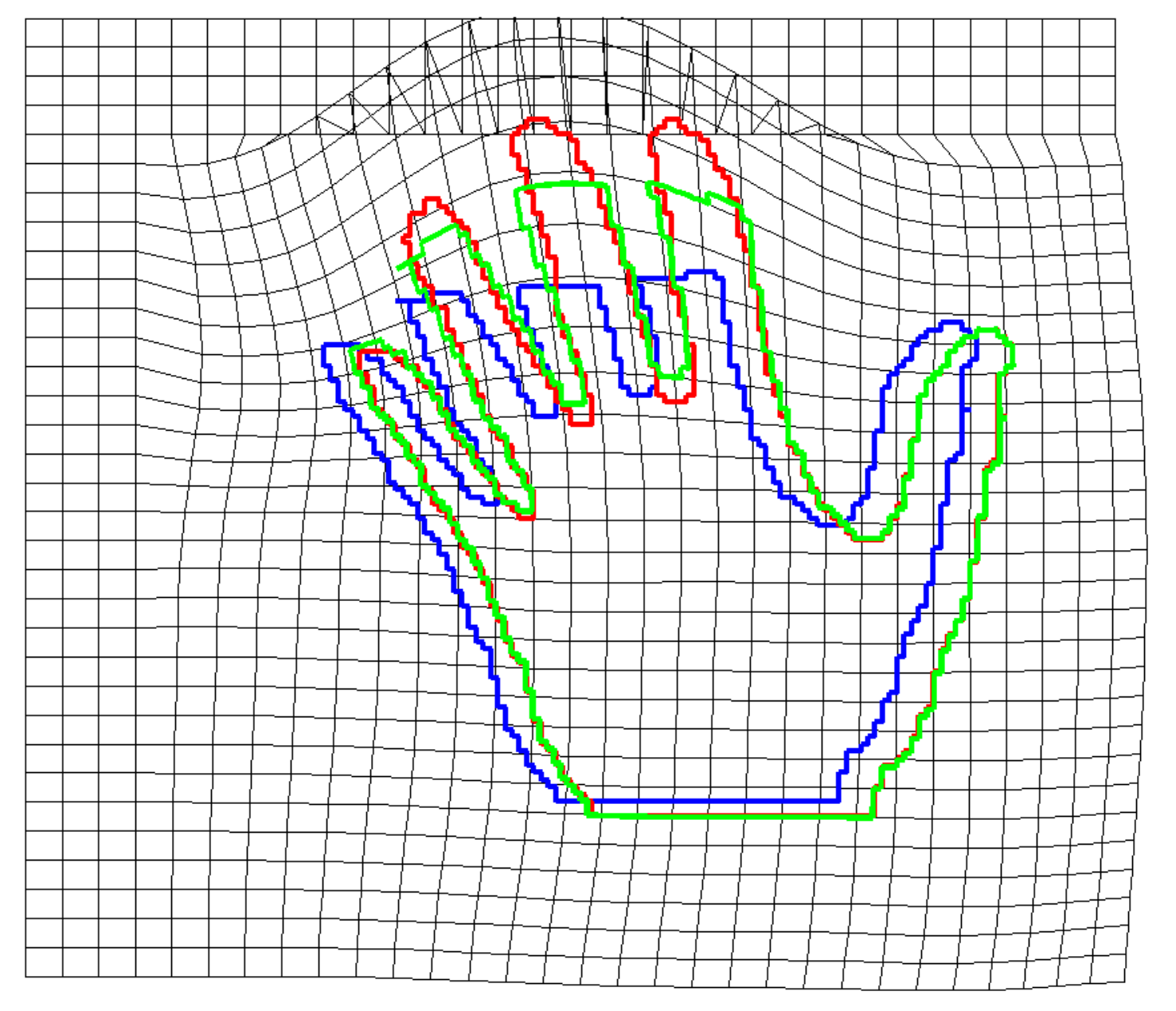}}
			\subfigure[Our method]
			  {
			  	\label{subfig:wei_hand_grid}
			  	\includegraphics[width=.23\linewidth]
		  		{./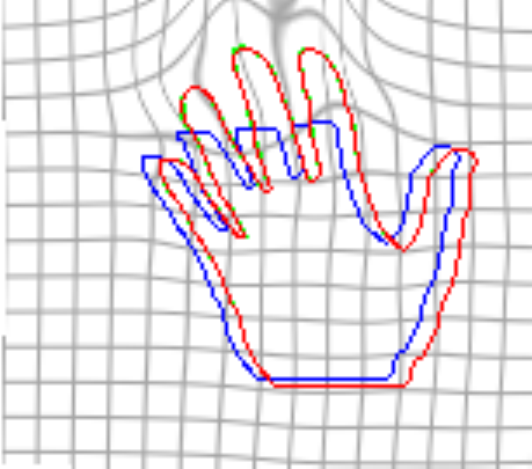}}
	\end{center}		
	\caption{Occluded shapes (hand). The contours are source shape (blue), target shape (red), and deformed source shape (green). a) Huang~\etal~\cite{huang2006shape}, average distance: 0.64, max: 9.2, variance: 1.99 . b) Our method, average distance: 0.15, max: 1.41, variance: 0.13. c) Deformed source (green) and target (red) shapes. d) The deformation field represented using a deformed grid image.}
	\label{fig:huang_hand}
\end{figure}

\begin{figure}[htb]
	\begin{center}
		 \subfigure[$\delta=10$]
			{
					\label{subfig:huang_bunny_narrow}
					\includegraphics[width=.30\linewidth]
		  		{./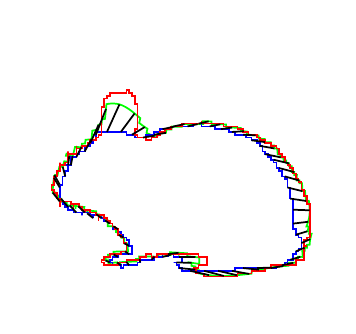}
		  }
		 \subfigure[$\delta=50$]
			{
					\label{subfig:huang_bunny_wide}
					\includegraphics[width=.30\linewidth]
		  		{./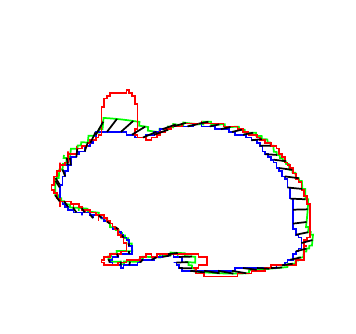}
		  }
		  \subfigure[Ours]
		  {		
		  		\label{subfig:wei_bunny}
		  		\includegraphics[width=.30\linewidth]
		  		{./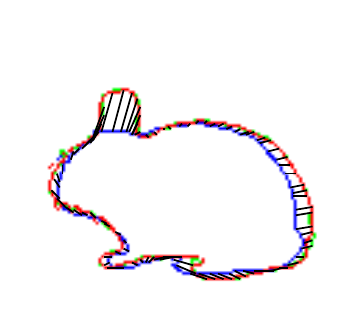}}
	\end{center}
	\caption{Occluded shapes (rabbit). a) Huang~\etal~\cite{huang2006shape}, average distance: 0.62, maximum:5.38, variance: 0.89. b) Increasing the narrow-band width futher reduces the registration accuracy, with average distance: 1.00, maximum: 10.44, variance: 2.63. c) Our method, average distance: 0.41, maximum: 2.23, variance: 0.28.}
	\label{fig:huang_bunny}
\end{figure}

\subsubsection{Shapes with occlusion and large deformation}
Our method was able to register shapes with occlusion, large deformation, and high curvature. Here, we compared our method with Huang's method~\cite{huang2006shape}. Figure~\ref{fig:huang_hand} shows the registration of occluded hand shapes. In~\cite{huang2006shape}, landmarks were placed at occluded parts to guide the registration process. Without landmarks, it is difficult for splines to handle large deformation from the source (blue) to the target (red) shape due to foldings in the mesh model (Figure~\ref{subfig:huang_hand_grid}). In contrast, our meshless model alleviated the folding effect (Figure~\ref{subfig:wei_hand_grid}). Here, we plotted the deformation field by deforming an image of grid, but it does not mean that our model is restricted to such a  topology. Quantitatively, our method still achieved remarkable accuracy with the mean, maximum, and variance of edge distances as 0.15, 1.41 and 0.13, respectively, in comparison with Huang's result (mean: 0.64, max: 9.2, variance: 1.99). 

The mesh model's folding effect may be further complicated by the narrow-band function $N_\delta$. Figure~\ref{fig:huang_bunny} shows registration of occluded shapes (bunny) using increasing narrow-band $\delta$. On one hand, we observed that increasing $\delta$ will generally reduce the ability of Huang's method to handle high-curvature regions (Figure~\ref{subfig:huang_bunny_narrow} and~\ref{subfig:huang_bunny_wide}) as using large $\delta$ may include embedding space that undergoes high-curvature bending (see Figure~\ref{fig:bending_curve} for demonstration), while our method was not affected at all by such a parameter (Figure~\ref{subfig:wei_bunny}). On the other hand, with decreasing $\delta$, Huang's method became prone to local minima. Figure~\ref{fig:huang_dude} shows an example of shapes with both high curvature and large deformations. Here, we tested Huang's method using different narrow-band width $\delta$. When $\delta$ was small (Figure~\ref{subfig:huang_dude_narrow}), the algorithm converged prematurely, unable to extend the deformation large enough. When $\delta$ was large (Figure~\ref{subfig:huang_dude_wide}), the method was unable to cope with high-curvature regions around the bending arm. An optimal result was obtained around $\delta=20$. However, even this optimal result was still less accurate when compared to our method.

\begin{figure}[t!!!]
	\begin{center}
		 \subfigure[$\delta=10$]
		  { 		\label{subfig:huang_dude_narrow}
		  		\includegraphics[width=.23\linewidth]
		  		{./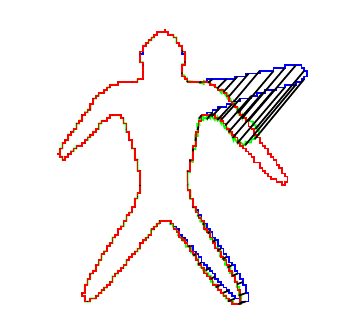}		  }
		 \subfigure[$\delta=50$]
		  { 		\label{subfig:huang_dude_wide}
		  		\includegraphics[width=.23\linewidth]
		  		{./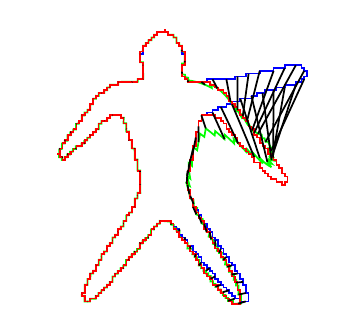}		  }
		  \subfigure[$\delta=20$]
		  {\includegraphics[width=.23\linewidth]
		  		{./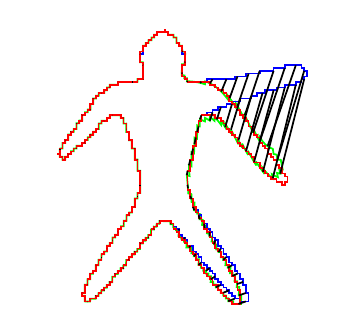}}
		\subfigure[Our method]
		  {\includegraphics[width=.23\linewidth]
		  		{./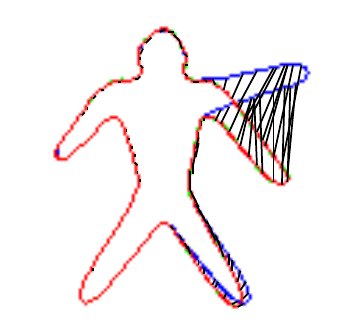}}
		\subfigure[Huang~\etal]
		  {\includegraphics[width=.23\linewidth]
		  		{./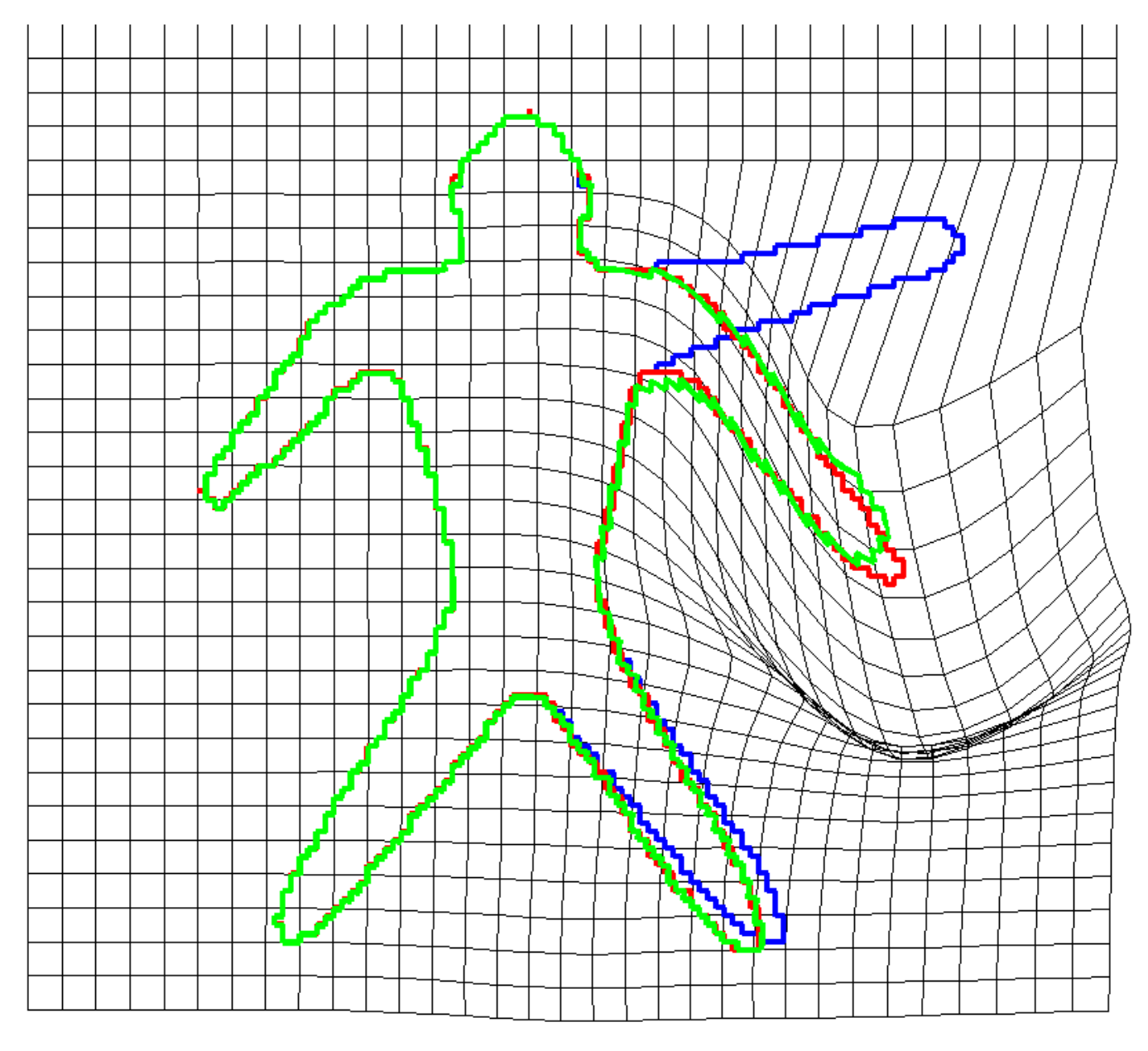}}
		\subfigure[Our method]
		  {\includegraphics[width=.23\linewidth]
		  		{./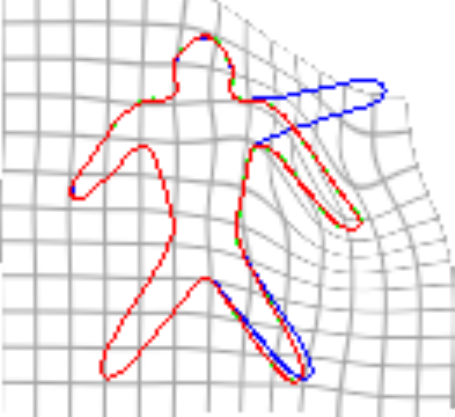}}
		  \subfigure[Huang~\etal]
		  {\includegraphics[width=.23\linewidth]
		  		{./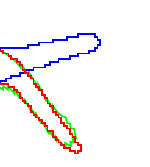}}
		 \subfigure[Our method]
		  {\includegraphics[width=.23\linewidth]
		  		{./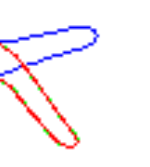}}
	\end{center}
	\caption{Large deformation (dude). a-c) Method of Huang~\etal, with varying narrow-band width. Best result was obtained when $\delta=20$, with average distance: 0.17, max: 3.61, variance: 0.20. d) Ours, average: 0.14, max: 1.00, variance: 0.12.}
	\label{fig:huang_dude}
\end{figure}

\subsection{Adapted partition-of-unity}

We now extend our comparison with Huang's method using heuristically adapted patches introduced in Section~\ref{sec:global_deformation} in order to reduce the computation domain to be around the shape contours. To be specific, we sampled along the shape contour at an equal geodesic distance $d$ (usually $5\leq d\leq20$), and placed a patch centered at each sample point with its scale set to be $r_p=\max\left(r_m,\kappa*\Pi_{\ShapeD(\bfx)}(\bfp)\right)$, where $r_m=\rho\, d$ was the minimum radius so that the patches overlapped with one another ($1.5 \leq \rho \leq 3$, and $\kappa=2$). In other words, the patches' scale increased proportionally if the two shapes were far from each other. Figure~\ref{fig:adaptive_hand}, Figure~\ref{fig:adaptive_bunny} and Figure~\ref{fig:adaptive_callosum} show examples of shape registration using adapted patches. In all these examples, the adapted-patch method achieved  results comparable to the ones using regular partitions and better results than Huang's method, while reducing the computational time by at least 50 percent. More reductions in computational cost are possible if we use finer partitions based on prior knowledge provided, for example, by statistical models of specific shapes.

\begin{figure}[t!!!!!]
	\begin{center}
			\begin{tabular}{ccccc}
				\parbox[b]{0.02\linewidth}{(a)\vspace{1.5\baselineskip}}
				&
		  	{\includegraphics[width=.21\linewidth]
		  		{./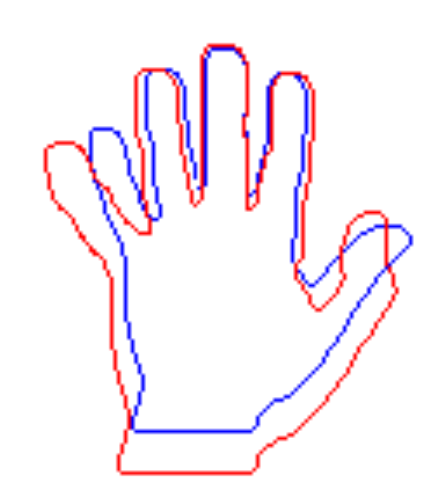}}&
		  	{\includegraphics[width=.21\linewidth]
		  		{./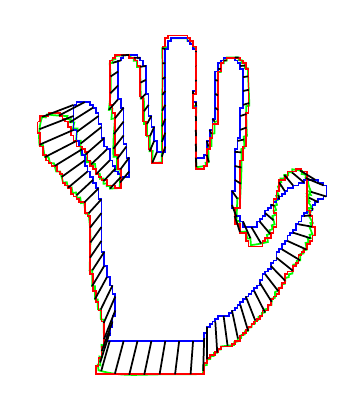}}&
		  	{\includegraphics[width=.21\linewidth]
		  		{./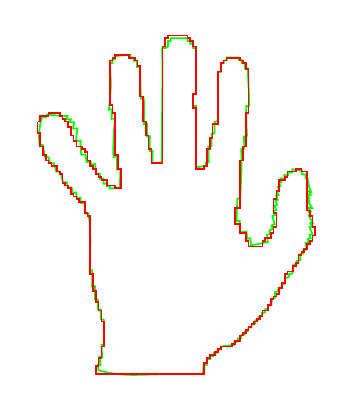}}&
		  	{\includegraphics[width=.21\linewidth]
		  		{./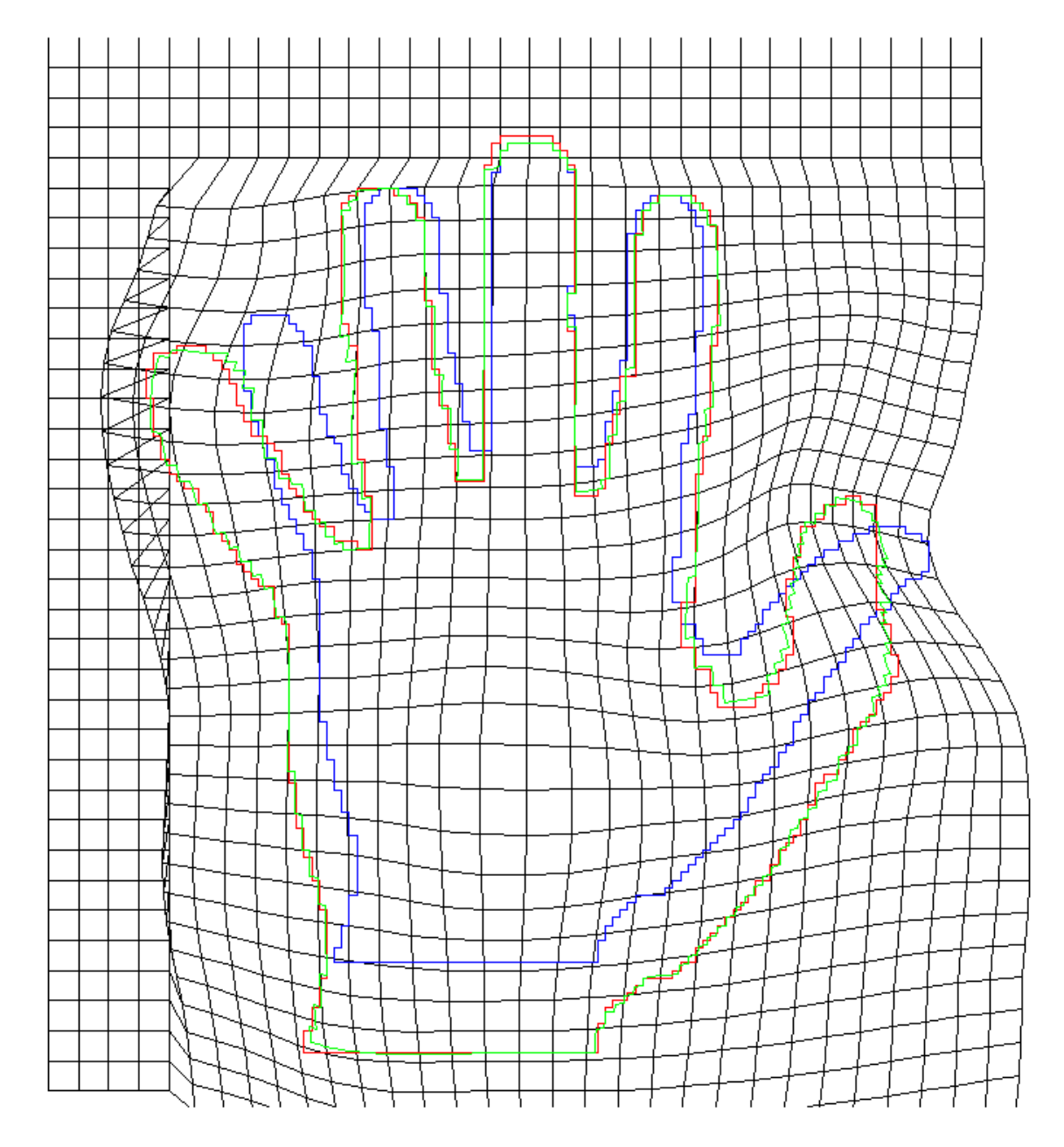}}
		  	\\
		  	\parbox[b]{0.02\linewidth}{(b)\vspace{1.5\baselineskip}}
				&
				{\includegraphics[width=.21\linewidth]
		  		{./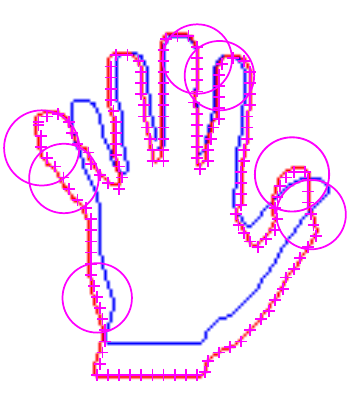}}&
				{\includegraphics[width=.21\linewidth]
		  		{./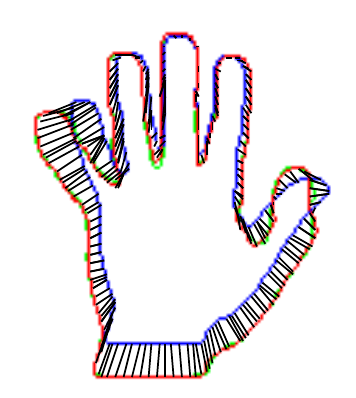}}&
				{\includegraphics[width=.21\linewidth]
		  		{./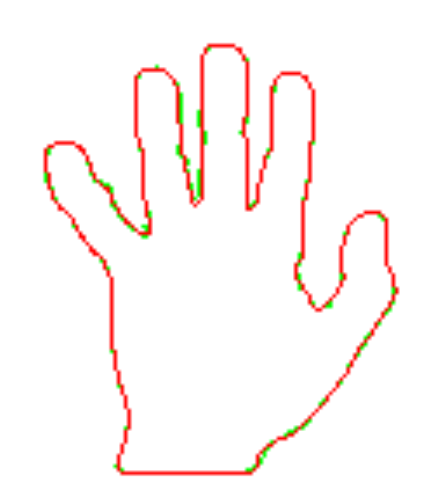}}&
		  	{\includegraphics[width=.21\linewidth]
		  		{./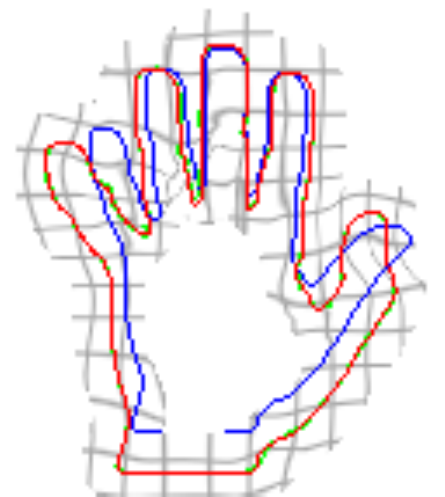}}
		  	\\
		  	\parbox[b]{0.02\linewidth}{(c)\vspace{1.5\baselineskip}}
				&
				{\includegraphics[width=.21\linewidth]
		  		{./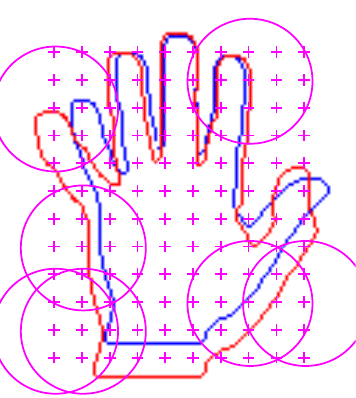}}&
				{\includegraphics[width=.21\linewidth]
		  		{./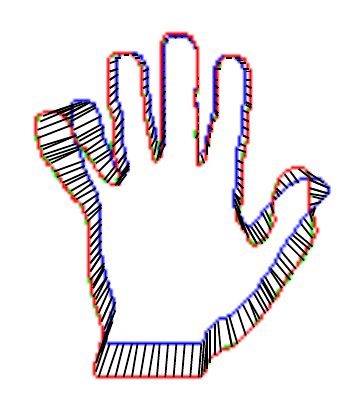}}&
				{\includegraphics[width=.21\linewidth]
		  		{./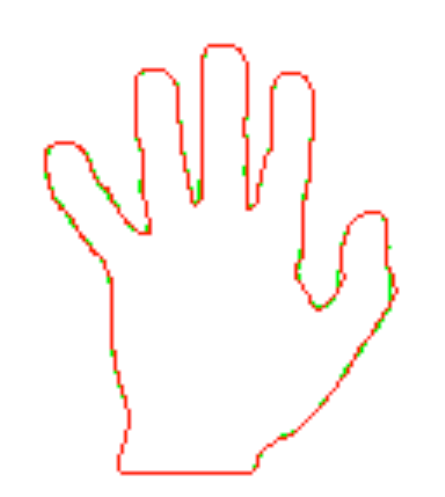}}&
		  	{\includegraphics[width=.21\linewidth]
		  		{./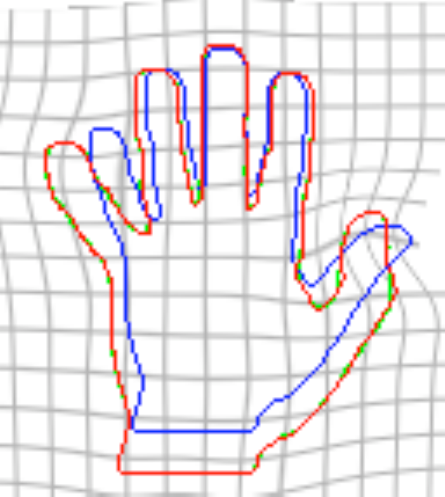}}
			\end{tabular}
	\end{center}
	\caption{Registration using adaptive patches (hand). a) Results of Huang's method (mean: 0.29, max: 2.00, variance: 0.22).  b) Adapted partitions and results (mean: 0.22, max: 2.00, variance: 0.19). g) Regularly distributed patches. c) Regular partitions and results (mean: 0.22, max: 2.00, variance: 0.18).}
	\label{fig:adaptive_hand}
\end{figure}

\begin{figure}[h!!!!!]
	\begin{center}
		\begin{tabular}{ccccc}	
			\parbox[b]{0.02\linewidth}{(a)\vspace{1.5\baselineskip}}&
		  {\includegraphics[width=.21\linewidth]
		  		{./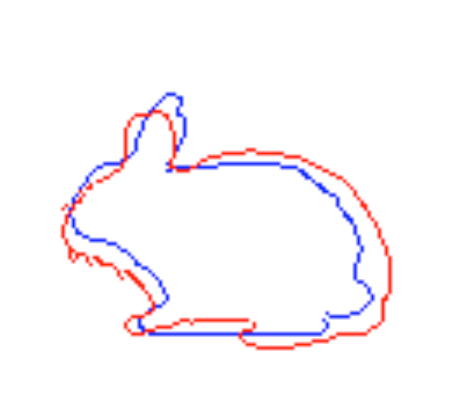}}&
		  {\includegraphics[width=.21\linewidth]
		  		{./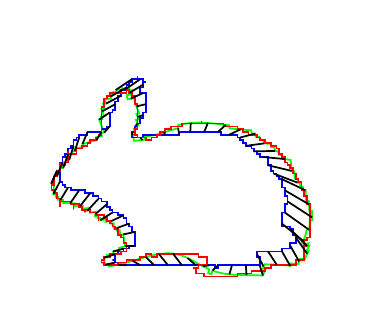}}&
		  {\includegraphics[width=.21\linewidth]
		  		{./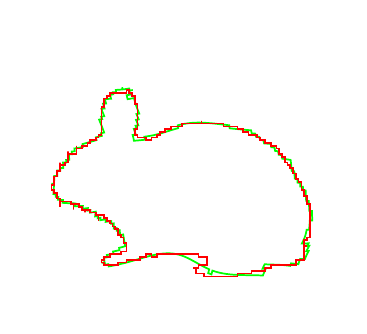}}&
		  {\includegraphics[width=.21\linewidth]
		  		{./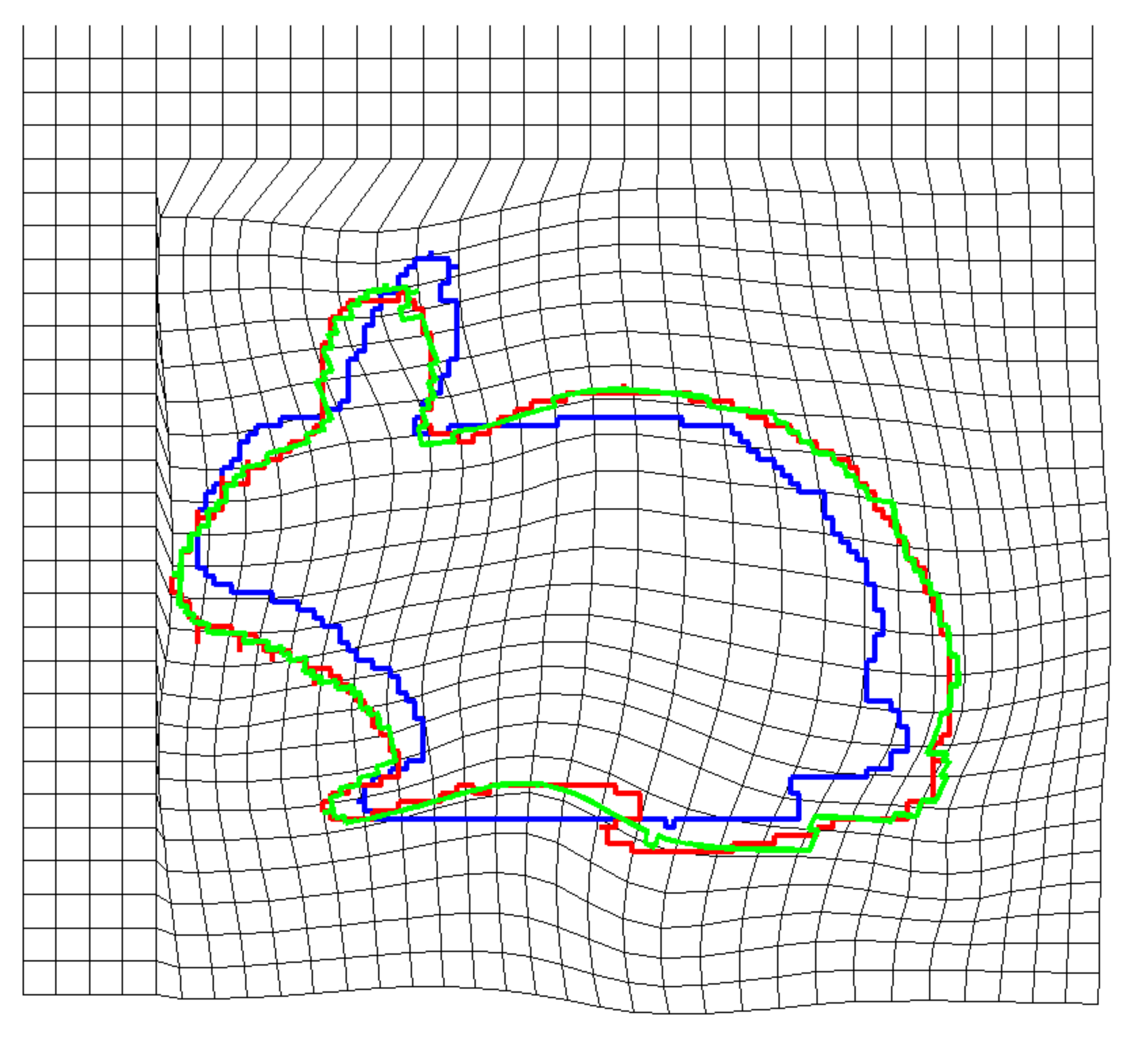}}
		  \\
		  \parbox[b]{0.02\linewidth}{(b)\vspace{1.5\baselineskip}}&
			{\includegraphics[width=.21\linewidth]
		  		{./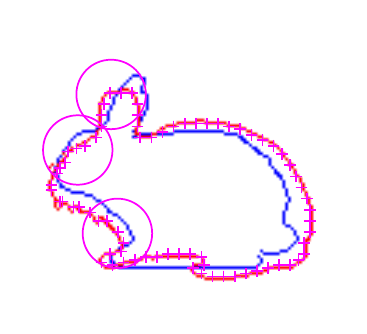}}&
			{\includegraphics[width=.21\linewidth]
		  		{./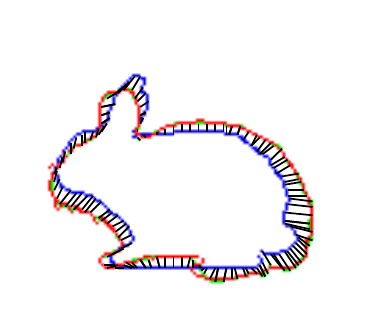}}&
			{\includegraphics[width=.21\linewidth]
		  		{./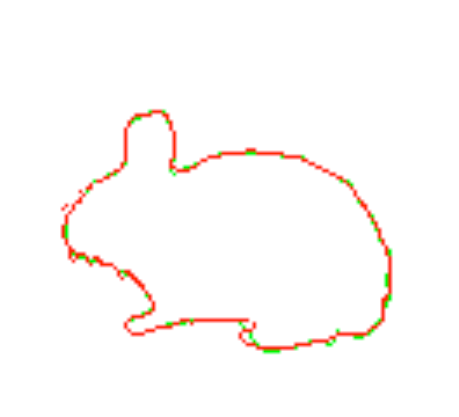}}&
		  {\includegraphics[width=.21\linewidth]
		  		{./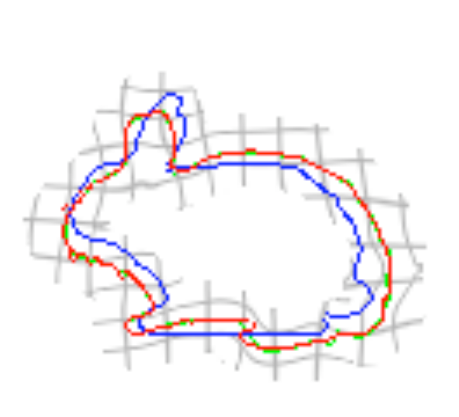}}
		  \\
		  \parbox[b]{0.02\linewidth}{(c)\vspace{1.5\baselineskip}}&
			{\includegraphics[width=.21\linewidth]
		  		{./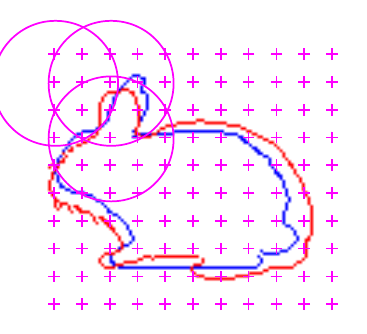}}&
			{\includegraphics[width=.21\linewidth]
		  		{./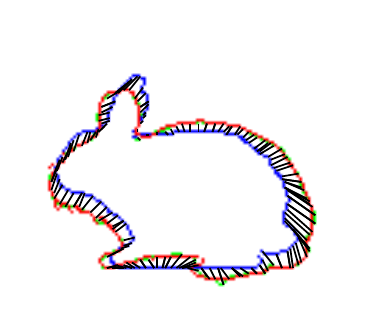}}&
			{\includegraphics[width=.21\linewidth]
		  		{./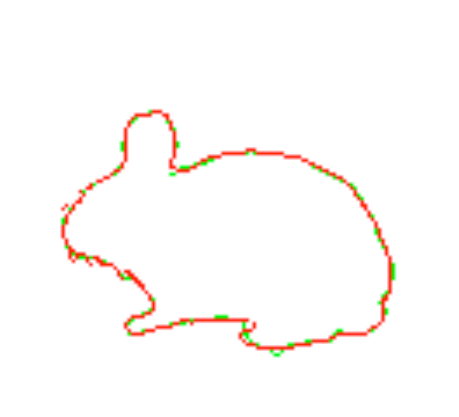}}&
		  {\includegraphics[width=.21\linewidth]
		  		{./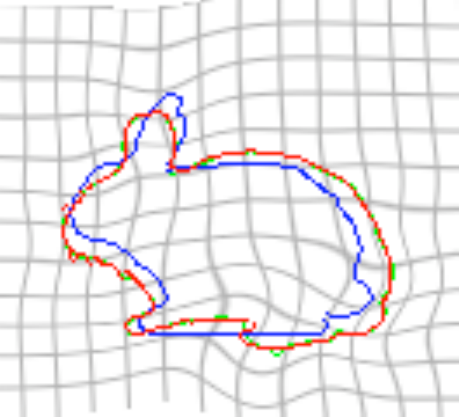}}
		\end{tabular}
	\end{center}
	\caption{Adapted partitions (bunny). a) Results of Huang's method (mean: 0.48, max: 3.16, variance: 0.39).  b) Adapted partitions and results (mean: 0.40, max: 2.83, variance: 0.30). g) Regular partitions and results (mean: 0.40, max: 2.83, variance: 0.29).}
	\label{fig:adaptive_bunny}
\end{figure}

\begin{figure}[h!!!!!]
	\begin{center}
			\begin{tabular}{ccccc}	
				\parbox[b]{0.02\linewidth}{(a)\vspace{1.5\baselineskip}}&
		  	{\includegraphics[width=.21\linewidth]
		  		{./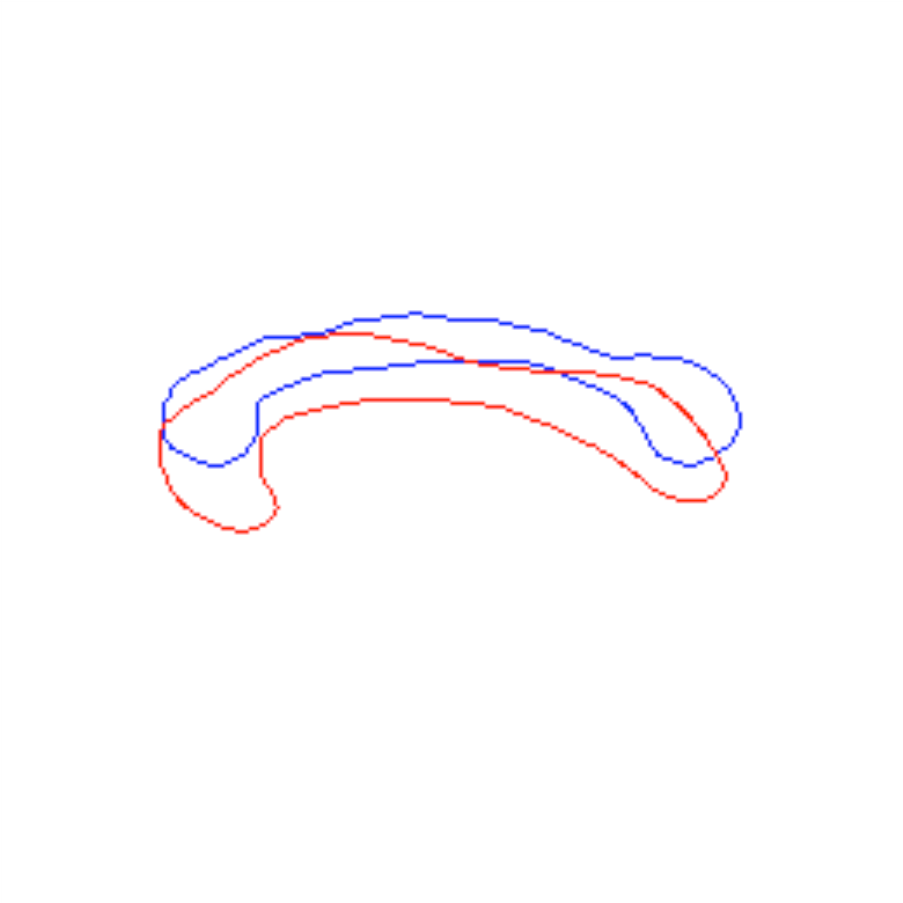}}&
		  	{\includegraphics[width=.21\linewidth]
		  		{./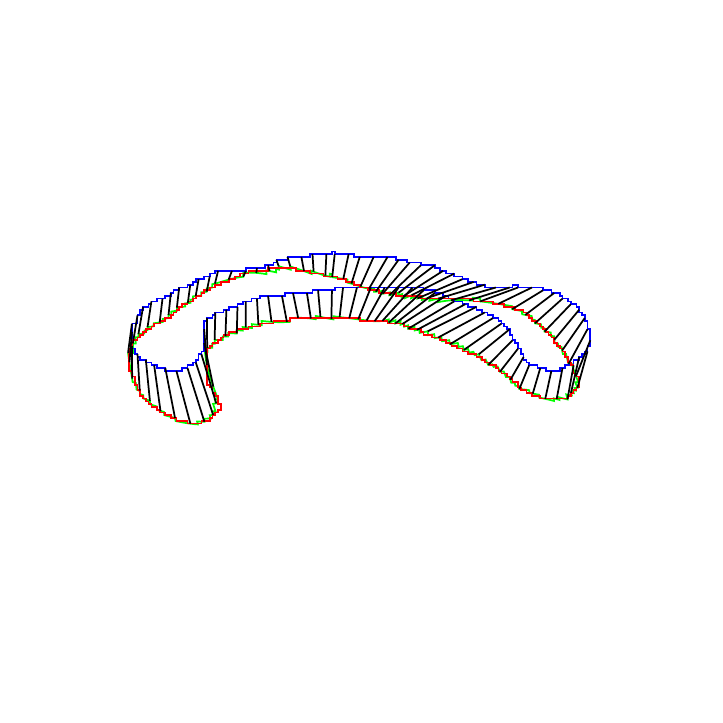}}	&
		  	{\includegraphics[width=.21\linewidth]
		  		{./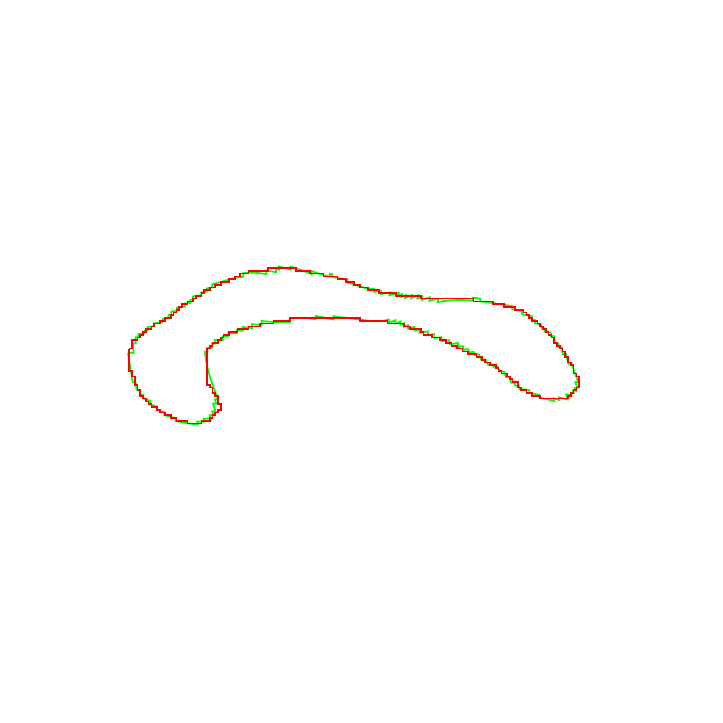}}&
		  	{\includegraphics[width=.21\linewidth]
		  		{./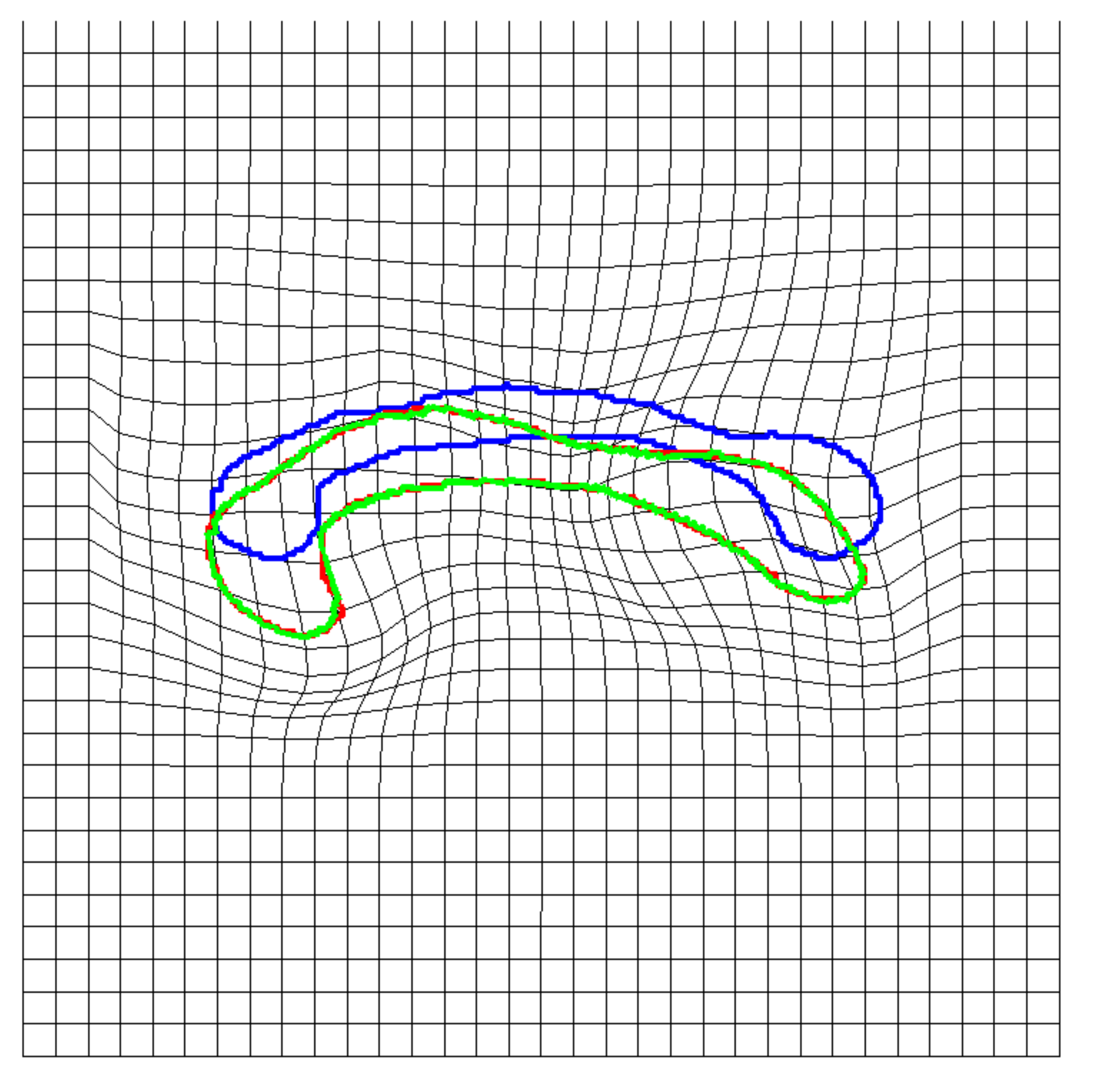}}
		  \\
		  	\parbox[b]{0.02\linewidth}{(b)\vspace{1.5\baselineskip}}&
				{\includegraphics[width=.21\linewidth]
		  		{./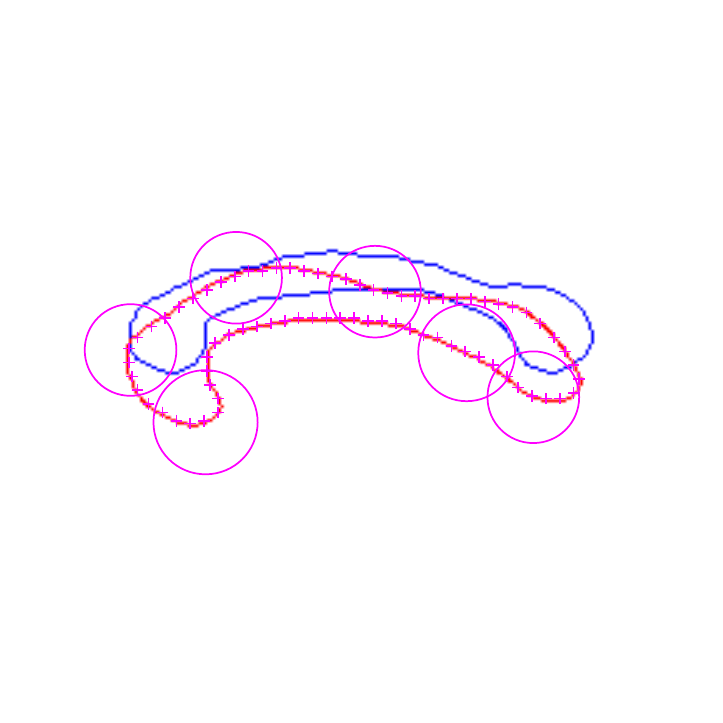}}&
				{\includegraphics[width=.21\linewidth]
		  		{./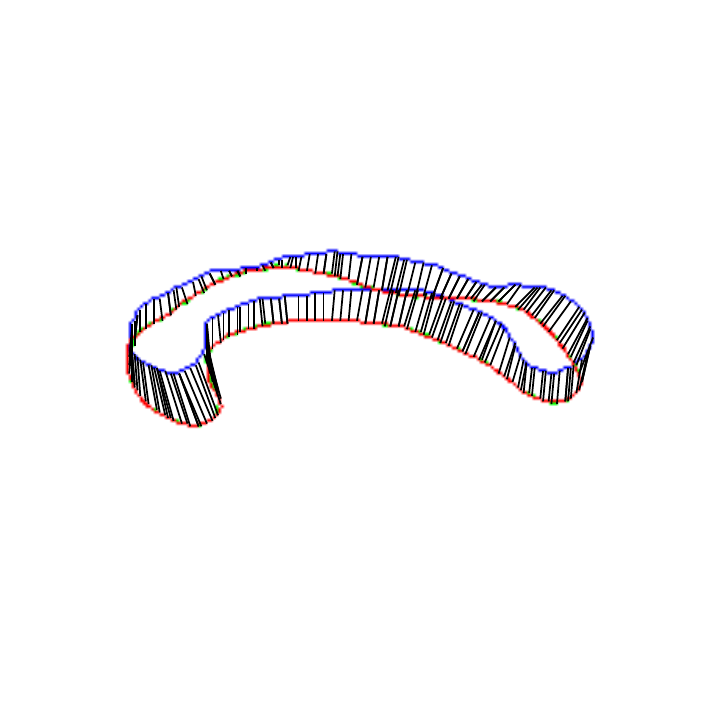}}&
				{\includegraphics[width=.21\linewidth]
		  		{./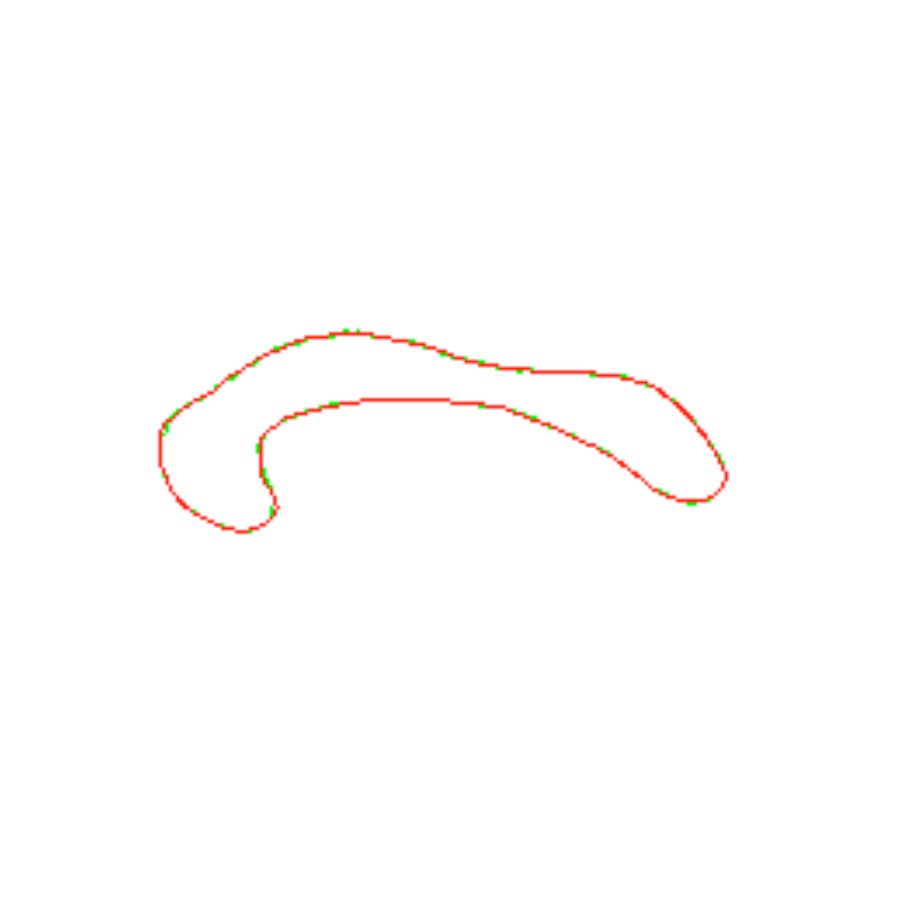}}&
		  	{\includegraphics[width=.21\linewidth]
		  		{./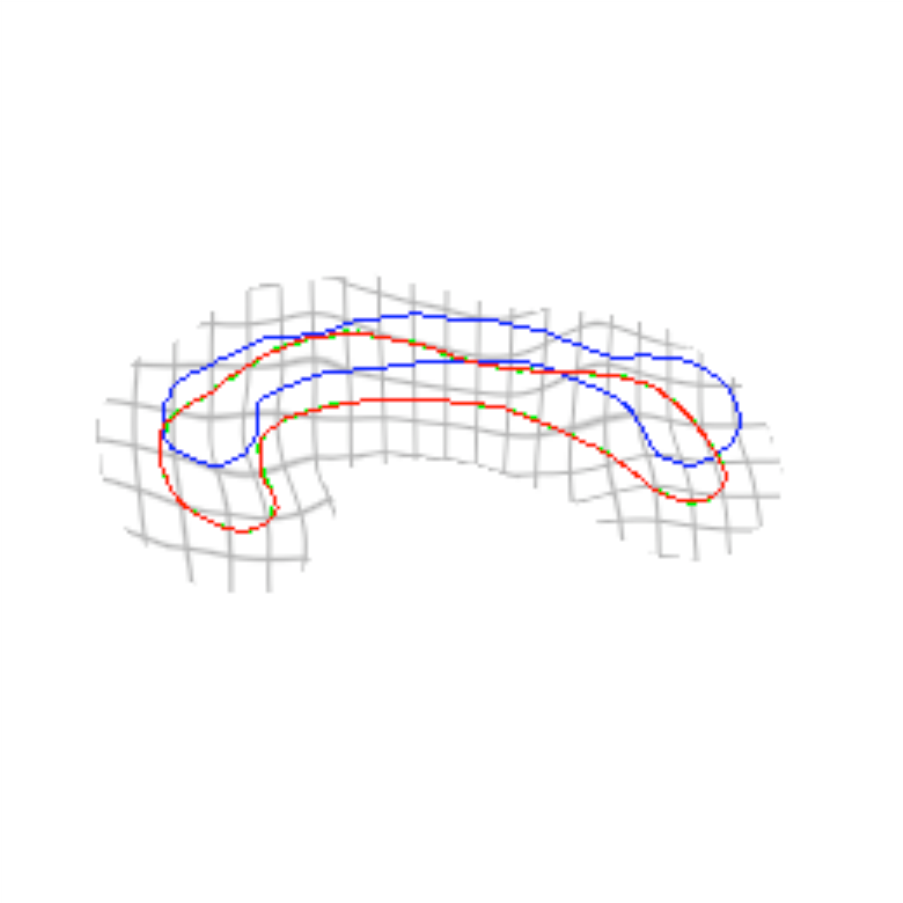}}
		  \\
		  	\parbox[b]{0.02\linewidth}{(c)\vspace{1.5\baselineskip}}&
				{\includegraphics[width=.21\linewidth]
		  		{./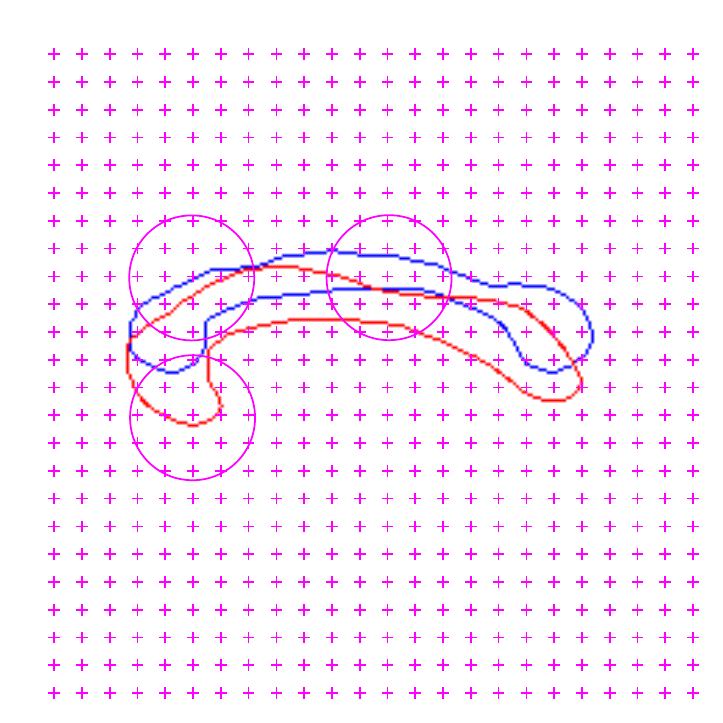}}&
				{\includegraphics[width=.21\linewidth]
		  		{./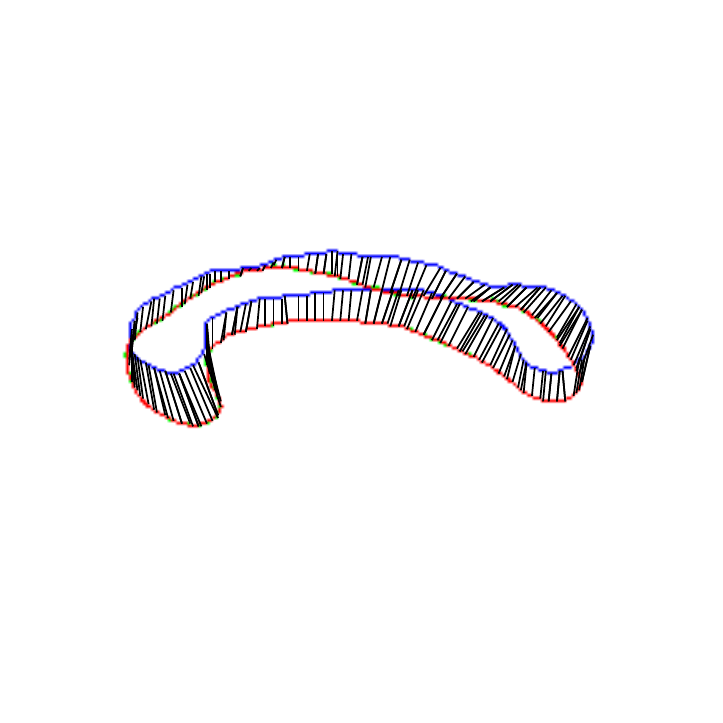}}&
				{\includegraphics[width=.21\linewidth]
		  		{./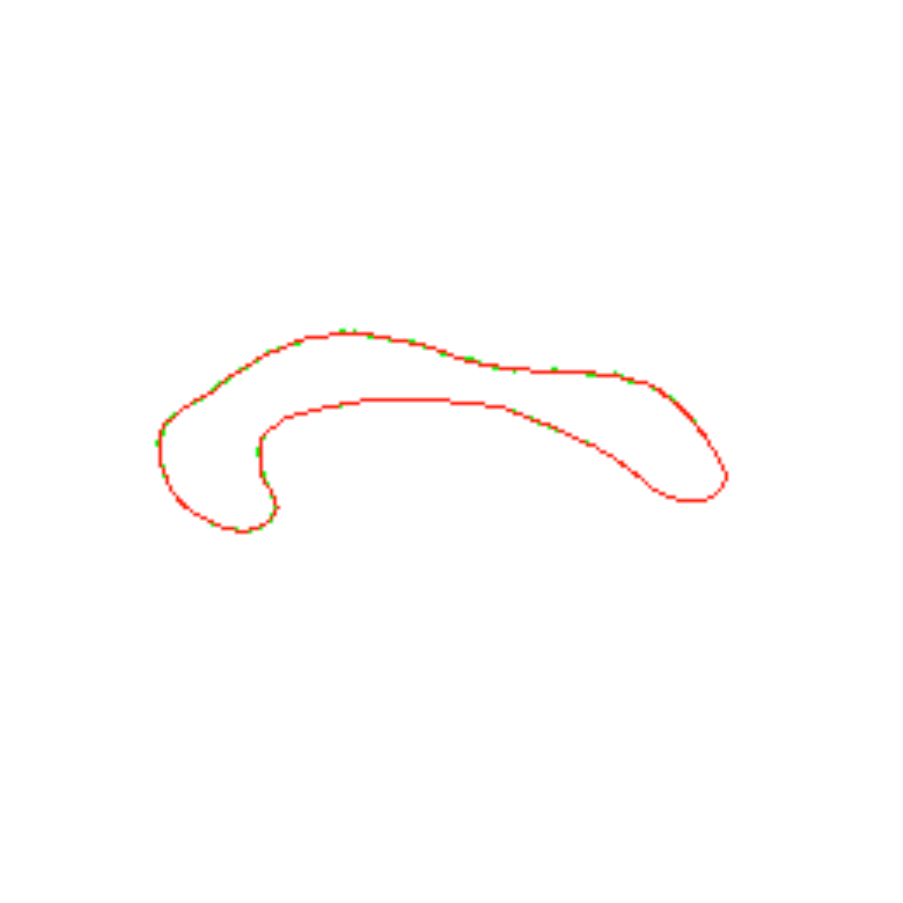}}&
		  	{\includegraphics[width=.21\linewidth]
		  		{./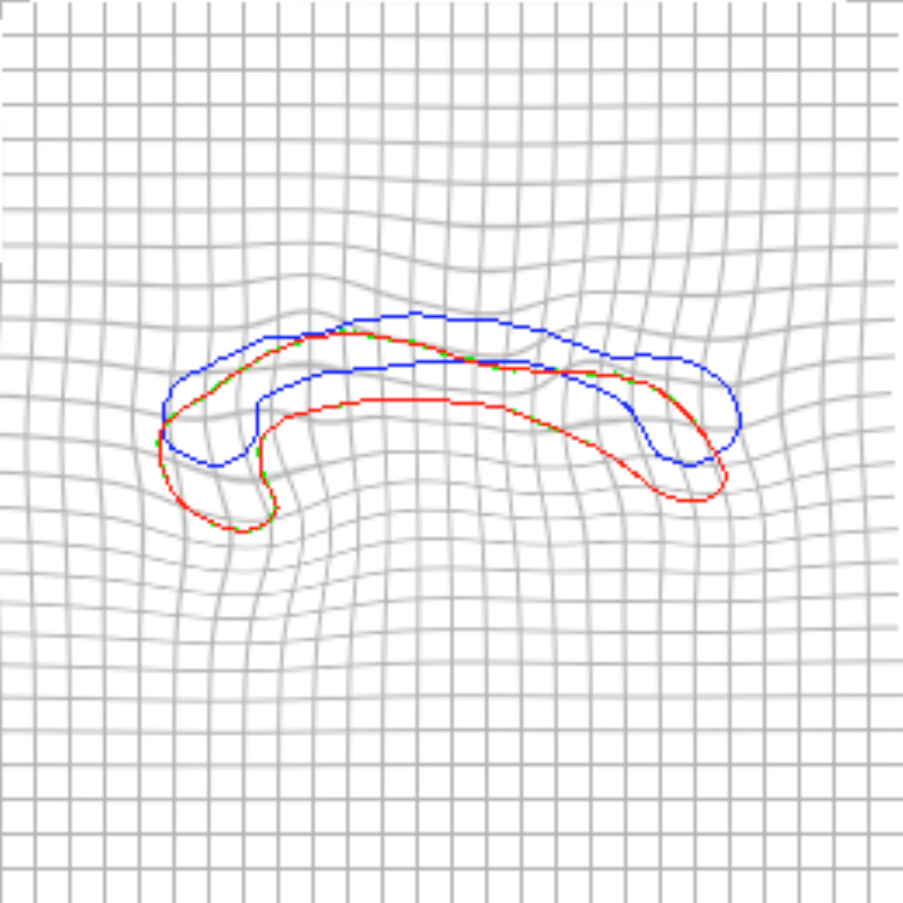}}
		  \end{tabular}
		 \end{center}
	\caption{Adapted partitions (corpus callosum). a) Results of Huang's method (mean: 0.36, max: 2.0, variance: 0.25).  b) Adapted partitions and results (mean: 0.25, max: 1.41, variance: 0.19). g) Regular partitions and results (mean: 0.21, max: 1.00, variance: 0.17).}
	\label{fig:adaptive_callosum}
\end{figure}

%Figure~\ref{fig:brown_dataset_occ} shows two examples of aligning occluded shapes. Due to severe occlusion, the shapes' distance transforms (Figure~\ref{subfig:dist1} and Figure~\ref{subfig:dist2}) were so distorted that the method in~\cite{huang2006shape} would fail without a proper proximity function (Figure~\ref{subfig:occ_huang}). Using only distance values at the shape's boundary, our method was less sensitive to this distortion (Figure~\ref{subfig:occ_our_method}). 

\subsection{Cell division}
Finally, we demonstrate an application of our method on analyzing cell deformation. Tracking cell contour is of great interest in biomedical image analysis. Many existing methods are based on active contours and level-set representations. Although these methods produce good results in segmenting cell contours, few of them provide correspondences between deforming contours. Our method can be combined with level-set segmentation algorithms to further establish these correspondences. Figure~\ref{fig:mitosis} shows a sequence of cell mitosis. The top row shows the segmentation results obtained using the level-set segmentation method by Chunming Li~\etal~\cite{Li_TIP08}. The second row shows overlaid shape contours from adjacent images. The third row shows the correspondence established using our method with adapted partition-of-unity. The deformed source contour (green) is well aligned with the target (blue). Interestingly, at the moment of cell mitosis, shape topology changed but our method still established reasonable correspondence between shapes of different topologies, considering that no continuous deformation actually exists to change a shape's topology.

\begin{figure}[h!!!]
	\begin{center}
			%\subfigure[frame 1]
			{\includegraphics[width=.15\linewidth]
		  		{./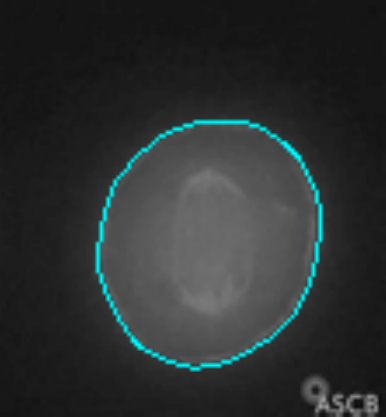}}
		  %\subfigure[frame 40]
		  {\includegraphics[width=.15\linewidth]
		  		{./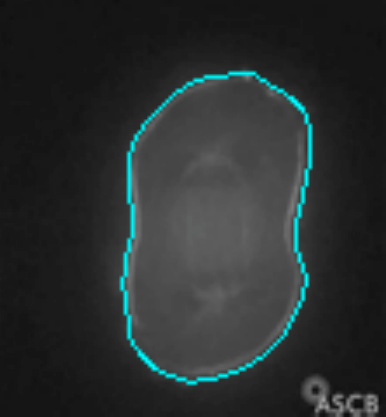}}
		  %\subfigure[frame 60]
		  {\includegraphics[width=.15\linewidth]
		  		{./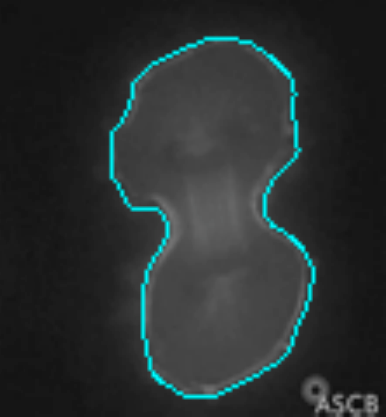}}
		  %\subfigure[frame 60]
		  {\includegraphics[width=.15\linewidth]
		  		{./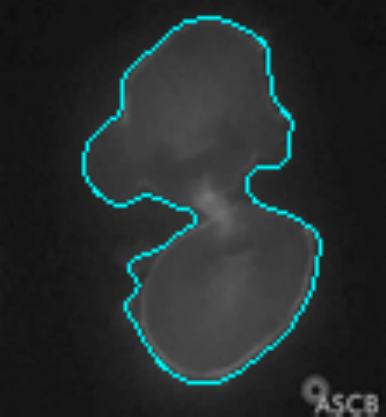}}
		 	%\subfigure[frame 60]
		  {\includegraphics[width=.15\linewidth]
		  		{./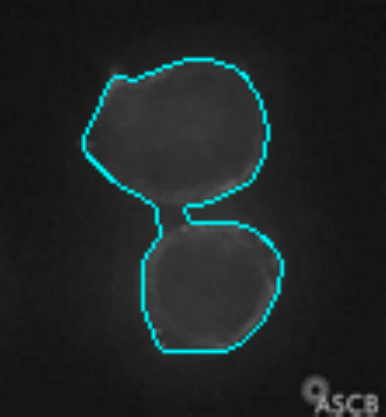}}
		  %\subfigure[frame 60]
		  {\includegraphics[width=.15\linewidth]
		  		{./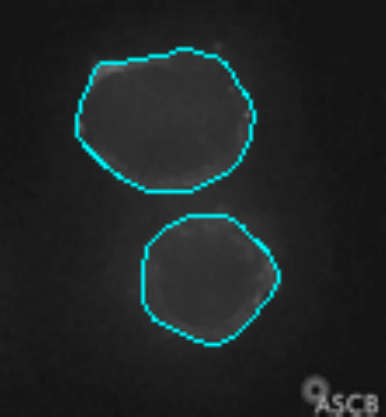}}
		  		\\
		  {\includegraphics[width=.18\linewidth]
		  		{./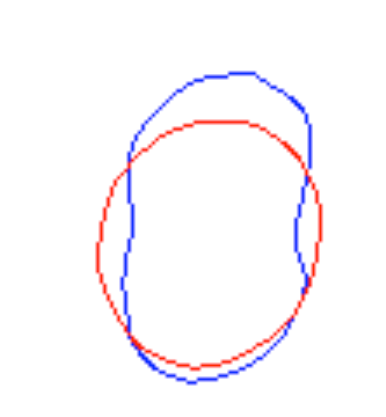}}
		  %\subfigure[frame 40]
		  {\includegraphics[width=.18\linewidth]
		  		{./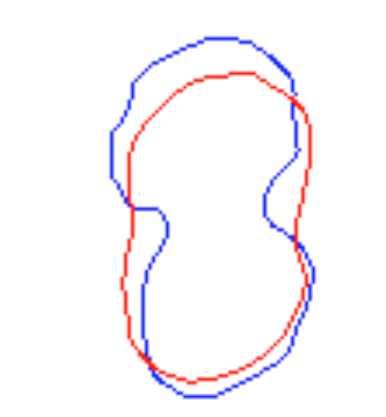}}
		  %\subfigure[frame 60]
		  {\includegraphics[width=.18\linewidth]
		  		{./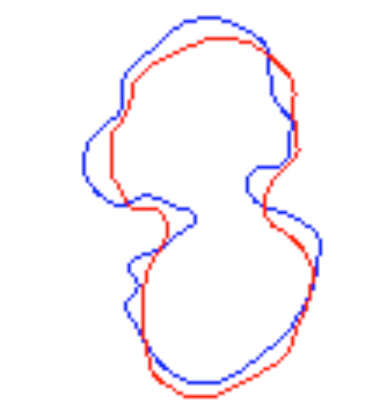}}
		  %\subfigure[frame 60]
		  {\includegraphics[width=.18\linewidth]
		  		{./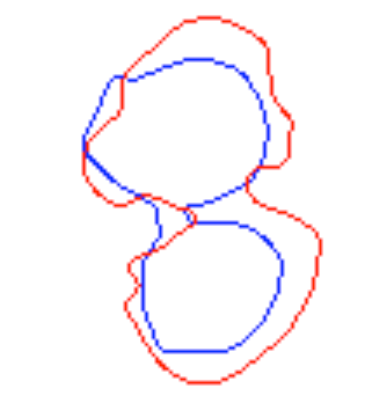}}
		 	%\subfigure[frame 60]
		  {\includegraphics[width=.18\linewidth]
		  		{./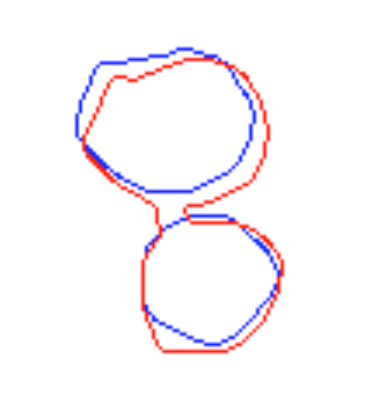}}
		  \\
		  {\includegraphics[width=.18\linewidth]
		  		{./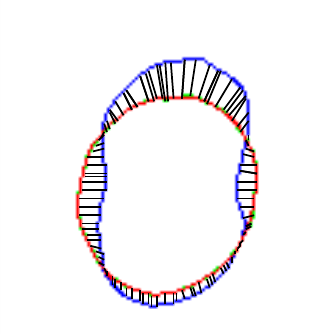}}
		  %\subfigure[frame 40]
		  {\includegraphics[width=.18\linewidth]
		  		{./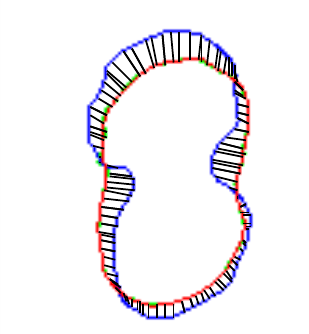}}
		  %\subfigure[frame 60]
		  {\includegraphics[width=.18\linewidth]
		  		{./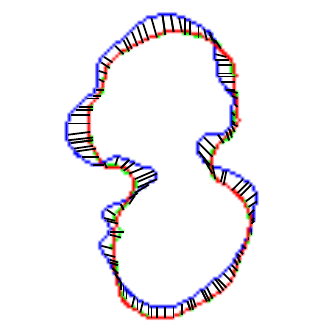}}
		  %\subfigure[frame 60]
		  {\includegraphics[width=.18\linewidth]
		  		{./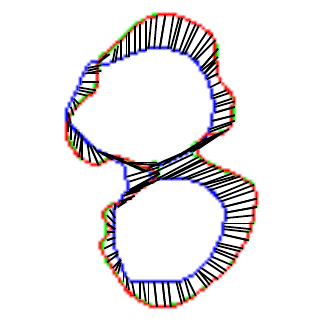}}
		 	%\subfigure[frame 60]
		  {\includegraphics[width=.18\linewidth]
		  		{./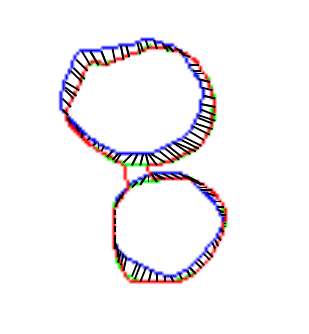}}
		  		\\
		  {\includegraphics[width=.18\linewidth]
		  		{./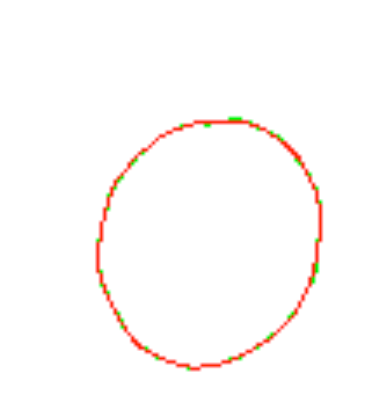}}
		  %\subfigure[frame 40]
		  {\includegraphics[width=.18\linewidth]
		  		{./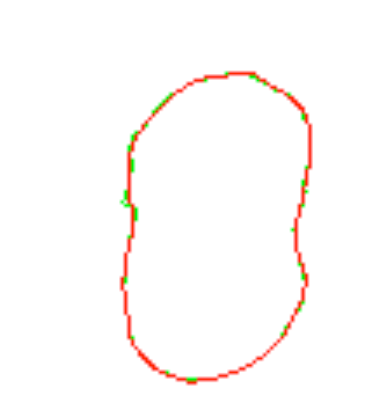}}
		  %\subfigure[frame 60]
		  {\includegraphics[width=.18\linewidth]
		  		{./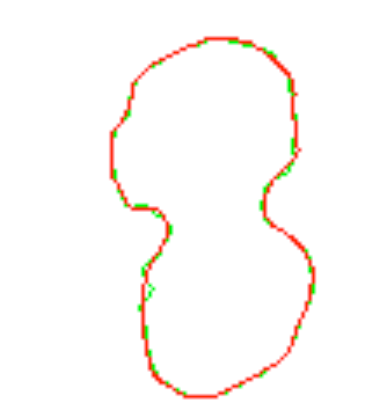}}
		  %\subfigure[frame 60]
		  {\includegraphics[width=.18\linewidth]
		  		{./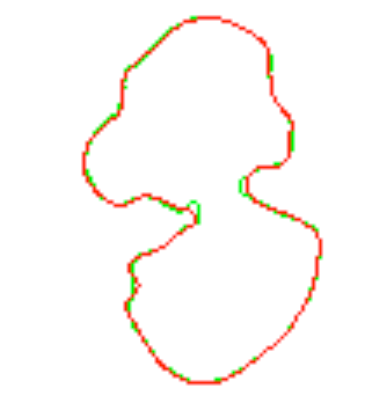}}
		 	%\subfigure[frame 60]
		  {\includegraphics[width=.18\linewidth]
		  		{./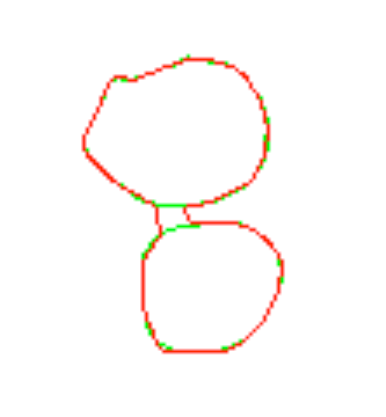}}
	\end{center}
	\caption{Cell-mitosis sequence. Row 1: Cell contours using level-set segmentation in~\cite{Li_TIP08}. Row 2: Overlaid source (blue) and target shapes (red). Row 3: Correspondence established using our method. Row 4: Target shape overlaid over the deformed source shape (green). }%Edge maps contain spurious points inside the cells as well as broken contour regions.}
	\label{fig:mitosis}
\end{figure}

\section{Conclusions and future work}
A meshless nonrigid shape-registration algorithm was presented. The registration functional is a variational extension of the classic chamfer-matching energy in which distance transforms provide registration-error gradients, facilitating efficient registration. Also, we modeled shape deformation using a meshless parametric representation. This model does not rely on a regular control-point grid, and can be adapted to arbitrary shapes. Thus, registration can be focused around the shape contours, greatly improving computational efficiency. We tested the proposed method by registering a number of synthetic shapes, and a deforming cell sequence. Future work includes a 3-D extension of the method, and the handling of topological changes.

Despite promising results, our method still encounters problems in registering shapes that have large curvatures, and undergo high-degree deformation, causing local minima in the registration error. We believe that this problem can be addressed by adopting global-optimization algorithms such as simulated annealing~\cite{bonnans2006numerical}, or by the inclusion of statistical priors~\cite{rousson2008prior}.
%% References with bibTeX database:
%\bibliographystyle{elsarticle-num}
%\bibliography{wei}

{\small 
\bibliographystyle{unsrt}
\bibliography{wei}
}

\end{document}